\documentclass{article}
\usepackage[margin=1in]{geometry}

% Figures
\usepackage[usenames,dvipsnames]{xcolor}% colors
\usepackage{float}						% required for H placement specifier
\usepackage{caption}
\usepackage{subcaption}					% subfigures

% Algorithms
\usepackage{algorithm}
\usepackage[noend]{algorithmic}

\usepackage{amsmath,amsbsy,amsfonts,amssymb,amsthm,physics}
\usepackage{mathtools}
\usepackage{graphicx}
\usepackage{natbib}
\usepackage{algorithm, algorithmic}
\usepackage{color}
\usepackage{float}						% required for H placement specifier
\usepackage{caption}
\usepackage{subcaption}
\usepackage{enumitem}
\usepackage{thmtools}
\usepackage{thm-restate}
\usepackage{cleveref}

\newcommand{\algorithmicbreak}{\textbf{break}}
\newcommand{\BREAK}{\STATE \algorithmicbreak}
%\usepackage{hyperref}
%\usepackage{dsfont,array}

% If your paper is accepted, change the options for the package
% aistats2018 as follows:
%
%\usepackage[accepted]{aistats2018}
%
% This option will print headings for the title of your paper and
% headings for the authors names, plus a copyright note at the end of
% the first column of the first page.

\theoremstyle{plain}
\newtheorem{theorem}{Theorem}
\newtheorem{claim}{Claim}

\newtheorem{assumption}{Assumption}
\newtheorem{lemma}[theorem]{Lemma}

\theoremstyle{definition}
\newtheorem{condition}{Condition}
\newtheorem{definition}{Definition}

\newtheorem{remark}{Remark}
\allowdisplaybreaks

% Red text for comments

\newcommand{\x}{\mathbf{x}}
\renewcommand{\O}{\mathcal{O}}
\newcommand{\bDelta}{\mathbf{\Delta}}
\DeclareMathOperator*{\argmin}{argmin}

\title{Stochastic Cubic Regularization for Fast Nonconvex Optimization}
\author{
Nilesh Tripuraneni\footnote{Equal contribution.}\phantom{$^\ast$} \quad
Mitchell Stern$^\ast$ \quad
Chi Jin \quad
Jeffrey Regier \quad
Michael I. Jordan
\\
\texttt{\{nilesh\_tripuraneni,mitchell,chijin,regier\}@berkeley.edu} \\
\texttt{jordan@cs.berkeley.edu}
\\ \\
University of California, Berkeley
}

\begin{document}

\maketitle

%!TEX root = SHV.tex

\begin{abstract}
This paper proposes a stochastic variant of a classic algorithm---the cubic-regularized Newton method \citep{nesterov2006cubic}. The proposed algorithm efficiently escapes saddle points and finds approximate local minima for general smooth, nonconvex functions in only $\mathcal{\tilde{O}}(\epsilon^{-3.5})$ stochastic gradient and stochastic Hessian-vector product evaluations. The latter can be computed as efficiently as stochastic gradients. This improves upon the $\mathcal{\tilde{O}}(\epsilon^{-4})$ rate of stochastic gradient descent. Our rate matches the best-known result for finding local minima without requiring any delicate acceleration or variance-reduction techniques. 

%   We present a stochastic optimization method that requires only $\mathcal{\tilde{O}}(\epsilon^{-3.5})$ stochastic gradient and Hessian-vector evaluations to find an $\epsilon$-approximate local minimum --
%   improving upon the $\mathcal{\tilde{O}}(\epsilon^{-4})$ rate
%   of stochastic gradient descent to find local minima.
%   This paper formulates a stochastic cubic-regularized Newton method, providing a simple approach to constructing a stochastic variant of the classic algorithm of \citet{nesterov2006cubic}.
%   Interestingly, our faster rate for finding approximate local minima naturally stems from the cubic regularization framework --- matching the best-known result for finding local minima --- without the use of any
%   variance-reduction techniques.
  
  % Our results hold within the general framework of stochastic approximation and do not rely
  % on any acceleration or variance-reduction techniques.
\end{abstract}

%!TEX root = SHV.tex

\section{Introduction}\label{sec:intro}
We consider the problem of nonconvex optimization in the stochastic approximation framework \citep{robbins1951stochastic}:
\begin{align}
  \min_{\mathbf{x} \in \mathbb{R}^d} f(\mathbf{x}) \coloneqq \mathbf{E}_{\xi \sim \mathcal{D}} [f(\mathbf{x}; \xi)]. \label{eq:SA}
\end{align}
In this setting, we only have access to the stochastic function $f(\mathbf{x}; \xi)$, where the random variable $\xi$ is sampled from an underlying distribution $\mathcal{D}$. The task is to optimize the expected function $f(\mathbf{x})$, which in general may be nonconvex. This framework covers a wide range of problems, including the offline setting where we minimize the empirical loss over a fixed amount of data, and the online setting where data arrives sequentially. One of the most prominent applications of stochastic optimization has been in large-scale statistics and machine learning problems, such as the optimization of deep neural networks.

Classical analysis in nonconvex optimization only guarantees convergence to a first-order stationary point (i.e., a point $\mathbf{x}$ satisfying $\norm{\nabla f(\mathbf{x})}=0$),
which can be a local minimum, a local maximum, or a saddle point.
This paper goes further, proposing an algorithm that escapes saddle points and converges to a local minimum.
A local minimum is defined as a point $\mathbf{x}$ satisfying
$\norm{\nabla f(\mathbf{x})} = 0$ and $\nabla^2 f(\mathbf{x}) \succeq 0$.
Finding such a point is of special interest for a large class of statistical learning problems where local minima are global or near-global solutions (e.g. \citet{choromanska2014loss, sun2016complete, sun2016geometric, ge2017no}).

Among first-order stochastic optimization algorithms, stochastic gradient descent (SGD) is perhaps the simplest and most versatile.
While SGD is computationally inexpensive, the best current guarantee for finding an $\epsilon$-approximate local minimum (see Definition~\ref{def: eps-2}) requires $\mathcal{O}(\epsilon^{-4} \text{poly}(d))$ iterations \citep{ge2015escaping}, which is inefficient in the high-dimensional regime.

In contrast, second-order methods which have access to the Hessian of $f$ can exploit curvature to more effectively escape saddles and arrive at local minima. Empirically, second-order methods are known to perform better than first-order methods on a variety of nonconvex problems \citep{rattray1998natural, martens2010deep, regier2017fast}. However, many classical second-order algorithms need to construct the full Hessian, which can be prohibitively expensive when working with large models. Recent work has therefore explored the use of Hessian-vector products $\nabla^2 f(\mathbf{x}) \cdot \mathbf{v}$, which can be computed as efficiently as gradients in many cases including neural networks \citep{pearlmutter1994fast}.

Several algorithms incorporating Hessian-vector products \citep{carmon2016accelerated, agarwal2016finding} have been shown to achieve faster convergence rates than gradient descent in the non-stochastic setting. However, in the stochastic setting where we only have access to stochastic Hessian-vector products, significantly less progress has been made.
This leads us to ask the central question of this paper:
\textbf{Can stochastic Hessian-vector products help to speed up nonconvex optimization?}

In this work, we answer in the affirmative, proposing a stochastic optimization method that utilizes stochastic gradients and Hessian-vector products to find an $\epsilon$-second-order stationary point using only $\mathcal{\tilde{O}}(\epsilon^{-3.5})$ oracle evaluations, where $\tilde{\O}(\cdot)$ hides poly-logarithmic factors. Our rate improves upon the $\mathcal{\tilde{O}}(\epsilon^{-4})$ rate of stochastic gradient descent, and matches the best-known result for finding local minima
without the need for
any delicate acceleration or variance reduction techniques (see Section \ref{sec:related} for details).

The proposed algorithm in this paper is based on a classic algorithm in the non-stochastic setting---the cubic regularized Newton method (or cubic regularization)
\citep{nesterov2006cubic}. This algorithm is a natural extension of gradient descent that incorporates Hessian information.
Whereas gradient descent finds the minimizer of a local second-order Taylor expansion,
\begin{equation*}
\mathbf{x}_{t+1}^{\text{GD}} = \argmin_{\mathbf{x}} \left[ f(\mathbf{x}_t) + \nabla f(\mathbf{x}_t)^\top (\mathbf{x} - \mathbf{x}_t) + \frac{\ell}{2}\norm{\mathbf{x} - \mathbf{x}_t}^2 \right] ,
\end{equation*}
the cubic regularized Newton method finds the minimizer of a local third-order Taylor expansion,
\begin{align*}
\mathbf{x}_{t+1}^{\text{Cubic}} =& \argmin_{\mathbf{x}} \left[ f(\mathbf{x}_t) + \nabla f(\mathbf{x}_t)^\top (\mathbf{x} - \mathbf{x}_t) + \frac{1}{2} (\mathbf{x} - \mathbf{x}_t)^\top \nabla^2 f(\mathbf{x}_t) (\mathbf{x} - \mathbf{x}_t) + \frac{\rho}{6} \norm{\mathbf{\mathbf{x} - \mathbf{x}_t}}^3 \right] .
\end{align*}
We provide a stochastic variant of this classic algorithm, bridging the gap between its use in the non-stochastic and stochastic settings.

In contrast to prior work, we present a fully stochastic cubic-regularized Newton method: both gradients and Hessian-vector products are observed with noise.
Additionally, we provide a non-asymptotic analysis of its complexity.

\subsection{Related Work}\label{sec:related}
There has been a recent surge of interest in optimization methods that can escape saddle points and find $\epsilon$-approximate local minima (see Definition \ref{def: eps-2}) in various settings. We provide a brief summary of these results. All iteration complexities in this section are stated in terms of finding approximate local minima, and only highlight the dependency on $\epsilon$ and $d$.

\subsubsection{General Case}
This line of work optimizes over a general function $f$ without any special structural assumptions. In this setting, the optimization algorithm has direct access to the gradient or Hessian oracles at each iteration. The work of \citet{nesterov2006cubic} first proposed the cubic-regularized Newton method, which requires $\mathcal{O}(\epsilon^{-1.5})$ gradient and Hessian oracle calls to find an $\epsilon$-second-order stationary point. Later, the ARC algorithm \citep{cartis2011adaptiveb} and trust-region methods \citep{curtis2014trust} were also shown to achieve the same guarantee with similar Hessian oracle access. However, these algorithms rely on having access to the full Hessian at each iteration, which is prohibitive in high dimensions.

Recently, instead of using the full Hessian, \citet{carmon2016gradient} showed that using a gradient descent solver for the cubic regularization subproblem allows their algorithm to find $\epsilon$-second-order stationary points in $\mathcal{\tilde{O}}(\epsilon^{-2})$ Hessian-vector product evaluations. With acceleration techniques, the number of Hessian-vector products can be reduced to $\mathcal{\tilde{O}}(\epsilon^{-1.75})$  \citep{carmon2016accelerated, agarwal2016finding, royer2017complexity}.

Meanwhile, in the realm of entirely Hessian-free results, \citet{jin2017escape} showed that a simple variant of gradient descent can find $\epsilon$-second stationary points in $\mathcal{\tilde{O}}(\epsilon^{-2})$ gradient evaluations.
\subsubsection{Finite-Sum Setting}
In the finite-sum setting (also known as the offline setting) where $f(\mathbf{x}) \coloneqq \frac{1}{n} \sum_{i=1}^{n} f_{i}(\mathbf{x})$, one assumes that algorithms have access to the individual functions $f_i$. In this setting, variance reduction techniques can be exploited \citep{johnson2013accelerating}.
\citet{agarwal2016finding} give an algorithm requiring $\tilde{\O}(\frac{nd}{\epsilon^{3/2}} + \frac{n^{3/4} d}{\epsilon^{7/4}})$ Hessian-vector oracle calls to find an $\epsilon$-approximate local minimum. A similar result is achieved by the algorithm proposed by \citet{reddi2017generic}.

\subsubsection{Stochastic Approximation}
The framework of stochastic approximation where $f(\mathbf{x}) \coloneqq \mathbf{E}_{\xi \sim \mathcal{D}} [f(\mathbf{x}; \xi)]$ only assumes access to a stochastic gradient and Hessian via $f(\x; \xi)$.
In this setting, \citet{ge2015escaping} showed that the total gradient iteration
complexity for SGD to find an $\epsilon$-second-order stationary point was $\mathcal{O}(\epsilon^{-4} \text{poly}(d))$.
More recently, \citet{kohler2017sub} consider a subsampled version of the cubic regularization algorithm, but do not provide a non-asymptotic analysis for their algorithm to find an approximate local minimum; they also assume access to exact (expected) function values at each iteration which are not available in the fully stochastic setting. \citet{xu2017newton} consider the case of stochastic Hessians,
but also require access to exact gradients and function values at each iteration.
Very recently, \citet{allen2017natasha} proposed an algorithm with a mechanism exploiting variance reduction that finds a second-order stationary point with $\tilde{\O}(\epsilon^{-3.5})$%
\footnote{\label{footnote:allen-zhu}The original paper reports a rate of $\mathcal{\tilde{O}}(\epsilon^{-3.25})$ due to a different definition of $\epsilon$-second-order stationary point, $\lambda_{\min}(\nabla^2 f(x)) \ge -\O(\epsilon^{1/4})$ , which is weaker than the standard definition as in Definition~\ref{def: eps-2}.}
Hessian-vector product evaluations. We note that our result matches this best result so far using a simpler approach without any delicate acceleration or variance-reduction techniques.
See Table~\ref{tab:comparison} for a brief comparison.

\renewcommand{\arraystretch}{1.5}
\begin{table*}[t]
	\centering
	\begin{tabular}{|c|c|c|c|}
		\hline
		\textbf{Method} & \textbf{Runtime} & \textbf{Variance Reduction} \\
		\hline
    {Stochastic Gradient Descent \citep{ge2015escaping}} & $\mathcal{O}(\epsilon^{-4} \text{poly}(d))$ & not needed \\
    \hline
    Natasha 2 \citep{allen2017natasha} & $\mathcal{\tilde{O}}(\epsilon^{-3.5})^{1}$ & needed \\
		\hline
    \textbf{Stochastic Cubic Regularization (this paper)} & $ \mathcal{\tilde{O}}(\epsilon^{-3.5})$ & not needed \\
    \hline
	\end{tabular}
	\caption{Comparison of our results to existing results for stochastic, nonconvex optimization with provable convergence to approximate local minima.}
	\label{tab:comparison}
\end{table*}

%!TEX root = SHV.tex

\section{Preliminaries}\label{sec:prelim}
We are interested in stochastic optimization problems of the form
\begin{align}
  \min_{\mathbf{x} \in \mathbb{R}^d} f(\mathbf{x}) \coloneqq \mathbb{E}_{\xi \sim \mathcal{D}} [f(\mathbf{x}; \xi)],
\end{align}
where $\xi$ is a random variable with distribution $\mathcal{D}$. In general, the function $f(\mathbf{x})$ may be nonconvex. This formulation covers both the standard offline setting where the objective function can be expressed as a finite sum of $n$ individual functions $f(\mathbf{x}, \xi_i)$, as well as the online setting where data arrives sequentially.

Our goal is to minimize the function $f(\mathbf{x})$ using only stochastic gradients $\nabla f(\mathbf{x}; \xi)$ and stochastic Hessian-vector products $\nabla^2 f(\mathbf{x}; \xi) \cdot \mathbf{v}$, where $\mathbf{v}$ is a vector of our choosing. Although it is expensive and often intractable in practice to form the entire Hessian, computing a Hessian-vector product is as cheap as computing a gradient when our function is represented as an arithmetic circuit~\citep{pearlmutter1994fast}, as is the case for neural networks.

\textbf{Notation}:
We use bold uppercase letters $\mathbf{A}, \mathbf{B}$ to denote matrices and bold lowercase letters $\mathbf{x}, \mathbf{y}$ to denote vectors.
For vectors we use $\norm{\cdot}$ to denote the $\ell_2$-norm, and for matrices we use $\norm{\cdot}$ to denote the spectral norm and $\lambda_{\min}(\cdot)$ to denote the minimum eigenvalue. Unless otherwise specified, we use the notation $\O(\cdot)$ to hide only absolute constants which do not depend on any problem parameter, and the notation $\tilde{\O}(\cdot)$ to hide only absolute constants and logarithmic factors.

\subsection{Assumptions}
Throughout the paper, we assume that the function $f(\mathbf{x})$ is bounded below by some optimal value $f^*$.
We also make following assumptions about function smoothness:
\begin{samepage}
    \begin{assumption}
      The function $f(\mathbf{x})$ has
      \begin{itemize} [itemsep=.5em, nosep]
      \item
      $\ell$-Lipschitz gradients: for all $\mathbf{x}_1$ and $\mathbf{x}_2$,
      \begin{align}
      \norm{\nabla f(\mathbf{x}_1) - \nabla f(\mathbf{x}_2)]} \leq \ell \norm{\mathbf{x}_1 - \mathbf{x}_2}; \notag
      \end{align}
      \item
      $\rho$-Lipschitz Hessians: for all $\mathbf{x}_1$ and $\mathbf{x}_2$,
      \begin{align}
      \norm{\nabla^2 f(\mathbf{x}_1) - \nabla^2 f(\mathbf{x}_2)} \leq \rho \norm{\mathbf{x}_1 - \mathbf{x}_2}. \notag
      \end{align}
      \end{itemize}
      \label{def: A1}
    \end{assumption}
\end{samepage}
The above assumptions state that the gradient and the Hessian cannot change dramatically in a small local area, and are standard in prior work on escaping saddle points and finding local minima.

Next, we make the following variance assumptions about stochastic gradients and stochastic Hessians:
    \begin{assumption} The function $f(\mathbf{x}, \xi)$ has
      \begin{itemize} [itemsep=.5em, nosep]
      \item for all $\mathbf{x}$,
      $\mathbb{E} \left[\norm{\nabla f(\mathbf{x}, \xi) - \nabla f(\mathbf{x})}^2 \right] \leq \sigma_1^2$ and $\norm{\nabla f(\mathbf{x}, \xi) - \nabla f(\mathbf{x})} \leq M_1$ a.s.;
      \item for all $\mathbf{x}$,
      $\mathbb{E} \left[\norm{\nabla^2 f(\mathbf{x}, \xi)-\nabla^2 f(\mathbf{x})}^2 \right] \leq \sigma_2^2$
      and  $\norm{\nabla^2 f(\mathbf{x}, \xi)-\nabla^2 f(\mathbf{x})} \leq M_2$ a.s.
      \end{itemize}
     \label{def: A2}
   \end{assumption}
We note that the above assumptions are not essential to our result, and can be replaced by any conditions that give rise to concentration. Moreover, the a.s.\ bounded Hessian assumption can be removed if one further assumes $f(\mathbf{x}, \xi)$ has $\ell'$-Lipschitz gradients for all $\xi$, which is stronger than Assumption \ref{def: A1}. Assuming $f(\mathbf{x}, \xi)$ has $\ell'$-Lipschitz gradients immediately implies the stochastic Hessians are a.s.\ bounded with parameter $2\ell'$ which alone is sufficient to guarantee concentration without any further assumptions on the variance of $\nabla^2 f(\mathbf{x}, \xi)$ \citep{tropp2015introduction}.

\subsection{Cubic Regularization}
Our target in this paper is to find an $\epsilon$-second-order stationary point, which we define as follows:
\begin{definition} \label{def: eps-2}
  For a $\rho$-Hessian Lipschitz function $f$, we say that $\mathbf{x}$ is an \textit{$\epsilon$-second-order stationary point} (or \textit{$\epsilon$-approximate local minimum}) if
  \begin{align}
    \norm{\nabla f(\mathbf{x})} \leq \epsilon \quad \text{and} \quad \lambda_{\min}(\nabla^2 f(\mathbf{x})) \geq -\sqrt{\rho \epsilon}.
  \end{align}
\end{definition}
An $\epsilon$-second-order stationary point not only has a small gradient, but also has a Hessian which is close to positive semi-definite. Thus it is often also referred to as an $\epsilon$-approximate local minimum.

In the deterministic setting, cubic regularization \citep{nesterov2006cubic} is a classic algorithm for finding a second-order stationary point of a $\rho$-Hessian-Lipschitz function $f(\mathbf{x})$.
In each iteration, it first forms a local upper bound on the function using a third-order Taylor expansion around the current iterate $\mathbf{x}_t$:
\begin{align*}
m_t(\mathbf{x}) &= f(\mathbf{x}_t) + \nabla f(\mathbf{x}_t)^\top (\mathbf{x} - \mathbf{x}_t) + \frac{1}{2} (\mathbf{x} - \mathbf{x}_t)^\top \nabla^2 f(\mathbf{x}_t) (\mathbf{x} - \mathbf{x}_t) + \frac{\rho}{6} \norm{\mathbf{\mathbf{x} - \mathbf{x}_t}}^3 .
\label{eq:exact_cubic}
\end{align*}
This is called the \textit{cubic submodel}. Then, cubic regularization minimizes this cubic submodel to obtain the next iterate: $\mathbf{x}_{t+1} = \argmin_{\mathbf{x}} m_t(\mathbf{x})$. When the cubic submodel can be solved exactly, cubic regularization requires $\O\left(\frac{\sqrt{\rho} (f(\mathbf{x}_0)-f^*)}{\epsilon^{1.5}}\right)$ iterations to find an $\epsilon$-second-order stationary point.

To apply this algorithm in the stochastic setting, three issues need to be addressed: (1) we only have access to stochastic gradients and Hessians, not the true gradient and Hessian; (2) our only means of interaction with the Hessian is through Hessian-vector products; (3) the cubic submodel cannot be solved exactly in practice, only up to some tolerance. We discuss how to overcome each of these obstacles in our paper.

%!TEX root = SHV.tex
\section{Main Results}\label{sec:results}

\begin{algorithm}[!h]
\caption{Stochastic Cubic Regularization (Meta-algorithm)}\label{algo:SCRN}
\begin{algorithmic}[1]
\REQUIRE mini-batch sizes $n_1, n_2$, initialization $\mathbf{x_{0}}$, number of iterations $T_{\text{out}}$, and final tolerance $\epsilon$.
 \FOR{$t = 0, \hdots, T_{\text{out}}$}
 \STATE Sample $S_1 \leftarrow \{\mathbf{\xi}_{i} \}_{i=1}^{n_1}$, $S_2 \leftarrow \{\mathbf{\xi}_{i} \}_{i=1}^{n_2}$.
 \STATE $\mathbf{g}_t \leftarrow \frac{1}{|S_1|} \sum_{\mathbf{\xi}_i \in S_1} \nabla f(\mathbf{x}_{t}; \ \mathbf{\xi}_i)$
 \STATE $\mathbf{B}_t[\cdot] \leftarrow \frac{1}{|S_2|} \sum_{\xi_i\in S_2} \nabla^2 f(\mathbf{x}_t, \xi_i)(\cdot)$
 \STATE $\mathbf{\Delta}$, $\Delta_m \leftarrow \text{Cubic-Subsolver}(\mathbf{g}_t, \ \mathbf{B}_t[\cdot], \epsilon)$
 \STATE $\mathbf{x}_{t+1} \leftarrow \mathbf{x}_{t} + \mathbf{\Delta}$
 \IF{ $\Delta_m \geq -\frac{1}{100} \sqrt{\frac{\epsilon^3}{\rho}}$}  \label{algo1:line:small_descent}
  \STATE $\mathbf{\Delta} \leftarrow \text{Cubic-Finalsolver}(\mathbf{g}_t, \ \mathbf{B}_t[\cdot], \epsilon)$
  \STATE $\mathbf{x}^* \leftarrow \mathbf{x}_t + \mathbf{\Delta}$
  \BREAK
 \ENDIF
 \ENDFOR
\ENSURE $\mathbf{x}^*$ if the early termination condition was reached, otherwise the final iterate $x_{T_{\text{out}} + 1}$.
\end{algorithmic}
\end{algorithm}

We begin with a general-purpose stochastic cubic regularization meta-algorithm in Algorithm \ref{algo:SCRN}, which employs a black-box subroutine to solve stochastic cubic subproblems. At a high level, in order to deal with stochastic gradients and Hessians, we sample two independent minibatches $S_1$ and $S_2$ at each iteration. Denoting the average gradient by
\begin{align}
  \mathbf{g}_t = \frac{1}{|S_1|} \sum_{\xi_i \in S_1} \nabla f(\mathbf{x}_t, \xi_i)
\end{align}
and the average Hessian by
\begin{align}
  \mathbf{B}_t = \frac{1}{|S_2|} \sum_{\xi_i \in S_2} \nabla^2 f (\mathbf{x}_t, \xi_i),
\end{align}
this implies a \textit{stochastic cubic submodel}:
\begin{align}
m_t(\mathbf{x}) &= f(\mathbf{x}_t) + (\mathbf{x}-\mathbf{x}_t)^\top \mathbf{g}_t + \frac{1}{2} (\mathbf{x}-\mathbf{x}_t)^\top \mathbf{B}_t (\mathbf{x}-\mathbf{x}_t) + \frac{\rho}{6} \norm{\mathbf{x}-\mathbf{x}_t}^3 \label{eq:approx_cubic} .
\end{align}
Although the subproblem depends on $\mathbf{B}_t$, we note that our meta-algorithm never explicitly formulates this matrix, only providing the subsolver access to $\mathbf{B}_t$ through Hessian-vector products, which we denote by $\mathbf{B}_t[\cdot] : \mathbb{R}^d \to \mathbb{R}^d$. We hence assume that the subsolver performs gradient-based optimization to solve the subproblem, as $\nabla m_t(\mathbf{x})$ depends on $\mathbf{B}_t$ only via $\mathbf{B}_t[\mathbf{x}-\mathbf{x}_t]$.

After sampling minibatches for the gradient and the Hessian, Algorithm \ref{algo:SCRN} makes a call to a black-box cubic subsolver to optimize the stochastic submodel $m_t(\x)$. The subsolver returns a parameter change $\bDelta$, i.e., an approximate minimizer of the submodel, along with the corresponding change in submodel value, $\Delta_m \coloneqq m_t(\mathbf{x}_t + \bDelta) - m_t(\mathbf{x}_t)$. The algorithm then updates the parameters by adding $\bDelta$ to the current iterate, and checks whether $\Delta_m$ satisfies a stopping condition.

In more detail, the Cubic-Subsolver subroutine takes a vector $\mathbf{g}$ and a function for computing Hessian-vector products $\mathbf{B}[\cdot]$, then optimizes the third-order polynomial
\begin{equation}\label{eq:submodel_short}
\tilde{m}(\bDelta) = \bDelta^\top \mathbf{g} + \frac{1}{2}\bDelta^\top \mathbf{B}[\bDelta] + \frac{\rho}{6} \norm{\bDelta}^3 .
\end{equation}
Let $\bDelta^\star = \argmin_{\bDelta} \tilde{m}(\bDelta)$ denote the minimizer of this polynomial. In general, the subsolver cannot return the exact solution $\bDelta^\star$. We hence tolerate a certain amount of suboptimality:

\begin{samepage}
\begin{condition}
\label{cond:cubic_subsolver} 
For any fixed, small constant $c$, $\text{Cubic-Subsolver}(\mathbf{g}, \mathbf{B}[\cdot], \epsilon)$ terminates within $\mathcal{T}(\epsilon)$ gradient iterations (which may depend on $c$), and returns a $\mathbf{\Delta}$ satisfying at least one of the following:
\begin{enumerate}
\item $\max\{\tilde{m}(\bDelta), f(\x_t + \bDelta) - f(\x_t)\} \le -\Omega(\sqrt{\epsilon^3/\rho})$.  (\textbf{Case 1}) \label{cond:oracle}
\item $\norm{\mathbf{\Delta}} \leq \norm{\mathbf{\Delta}^\star} + c \sqrt{\frac{\epsilon}{\rho}}$ and, if $\norm{\bDelta^\star} \ge \frac{1}{2} \sqrt{\epsilon/\rho}$, then
$\tilde{m}(\bDelta) \le \tilde{m}(\bDelta^\star) + \frac{c}{12} \cdot \rho\norm{\bDelta^\star}^3$. (\textbf{Case 2})  \label{cond:approx_min}
\end{enumerate}
\end{condition}
\end{samepage}

The first condition is satisfied if the parameter change $\bDelta$ results in submodel and function decreases that are both sufficiently large (\textbf{Case 1}). If that fails to hold, the second condition ensures that $\bDelta$ is not too large relative to the true solution $\bDelta^\star$, and that the cubic submodel is solved to precision $c \cdot \rho \norm{\bDelta^\star}^3$ when $\norm{\bDelta^\star}$ is large (\textbf{Case 2}).

As mentioned above, we assume the subsolver uses gradient-based optimization to solve the subproblem so that it will only access the Hessian only through Hessian-vector products.
Accordingly, it can be any standard first-order algorithm such as gradient descent, Nesterov's accelerated gradient descent, etc. Gradient descent is of particular interest as it can be shown to satisfy Condition \ref{cond:cubic_subsolver} (see Lemma \ref{lem:subsolveerr}).

Having given an overview of our meta-algorithm and verified the existence of a suitable subsolver, we are ready to present our main theorem:

\begin{restatable}{theorem}{thmmainabstract}
 \label{thm:main1}
%   Suppose $f(\mathbf{x}) \coloneqq \mathbb{E}_{\xi \sim \mathcal{D}} [f(\mathbf{x}; \xi)]$ satisfies Assumptions \ref{def: A1} and \ref{def: A2}, and assume CubicSubsolver satisfies Condition \ref{cond:cubic_subsolver}. Let $n_1 = \tilde{\mathcal{O}}(\frac{\sigma_1^2}{\epsilon^2})$, $n_2 = \tilde{\mathcal{O}}(\frac{\sigma_2^2}{\rho\epsilon})$, and $T_{\text{outer}} = \O( \frac{\sqrt{\rho}(f(\mathbf{x}_0)-f^*)}{\epsilon^{1.5}})$. Then, Algorithm \ref{algo:SCRN} outputs an $\epsilon$-second-order stationary point of $f$ with probability at least $1-\delta$ within
%   \begin{align}
%   \mathcal{\tilde{O}}\left(\frac{\sqrt{\rho} (f(\mathbf{x}_0)-f^*)}{\epsilon^{1.5}} \left( \frac{\sigma_1^2}{\epsilon^2} + \frac{\sigma_2^2}{\rho \epsilon} \cdot \mathcal{T}(\epsilon) \right) \right)
%   \end{align}
%   total stochastic gradient and Hessian-vector product evaluations.
  There exists an absolute constant $c$ such that if
  $f(\mathbf{x})$ satisfies Assumptions \ref{def: A1}, \ref{def: A2}, CubicSubsolver satisfies Condition \ref{cond:cubic_subsolver} with $c$, $n_1 \geq \max(\frac{M_1}{c \epsilon}, \frac{\sigma_1^2}{c^2 \epsilon^2}) \log(\frac{d \sqrt{\rho} \Delta_f}{\epsilon^{1.5} \delta c})$, and $n_2 \geq \max(\frac{M_2}{c \sqrt{\rho \epsilon}}, \frac{\sigma_2^2}{c^2 \rho \epsilon}) \log( \frac{d\sqrt{\rho} \Delta_f}{\epsilon^{1.5} \delta c})$,
   % Here we will take $c_{\max} = \min\{K_{\text{prog}}, c_1, c_2 \}$.
  then for all $\delta > 0$ and $\Delta_f \geq f(\mathbf{x}_0)-f^*$, 
  Algorithm \ref{algo:SCRN} will output an $\epsilon$-second-order stationary point of $f$ with probability at least $1-\delta$ within
  \begin{align}
    \mathcal{\tilde{O}}\left(\frac{\sqrt{\rho} \Delta_f}{\epsilon^{1.5}} \left( \max \left(\frac{M_1}{\epsilon}, \frac{\sigma_1^2}{\epsilon^2}\right) + \max\left(\frac{M_2}{\sqrt{\rho \epsilon}}, \frac{\sigma_2^2}{\rho \epsilon}\right) \cdot \mathcal{T}(\epsilon) \right) \right)
  \end{align}
  total stochastic gradient and Hessian-vector product evaluations.
\end{restatable}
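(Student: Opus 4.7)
The plan has three main ingredients: (i) a uniform concentration argument showing that the sampled gradient $\mathbf{g}_t$ and the sampled Hessian-vector operator $\mathbf{B}_t[\cdot]$ are close to the true $\nabla f(\mathbf{x}_t)$ and $\nabla^2 f(\mathbf{x}_t)$ at every iteration; (ii) a per-iteration progress lemma showing that whenever the stopping criterion is not met, $f$ decreases by $\Omega(\sqrt{\epsilon^3/\rho})$; (iii) a termination lemma showing that when the criterion is met, the point returned by Cubic-Finalsolver really is an $\epsilon$-second-order stationary point. Combining these yields the stated rate.

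First I would fix the outer iteration budget $T_{\text{out}} = O(\sqrt{\rho}\Delta_f/\epsilon^{1.5})$, which is the largest number of iterations the outer loop can survive under the progress bound. With this $T_{\text{out}}$, the choice of $n_1$ and $n_2$ stated in the theorem is exactly what a vector Bernstein inequality (for the gradient, using $M_1,\sigma_1^2$) and a matrix Bernstein inequality (for the Hessian, using $M_2,\sigma_2^2$, and the ambient dimension $d$ through the $\log d$ factor) need in order to guarantee, after a union bound over $T_{\text{out}}$ rounds, that with probability at least $1-\delta$
\[
\|\mathbf{g}_t - \nabla f(\mathbf{x}_t)\| \le c\,\epsilon, \qquad \|\mathbf{B}_t - \nabla^2 f(\mathbf{x}_t)\| \le c\sqrt{\rho\epsilon} \qquad \text{for all } t \le T_{\text{out}}.
\]
All subsequent arguments are carried out on this good event.

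Next I would prove the progress lemma. On the good event, the stochastic submodel $m_t$ is close to the true cubic upper bound that appears in Nesterov--Polyak's analysis; combined with the $\rho$-Lipschitz Hessian assumption, this gives $f(\mathbf{x}_{t+1}) - f(\mathbf{x}_t) \le \tilde{m}(\bDelta) + O(\epsilon\|\bDelta\| + \sqrt{\rho\epsilon}\|\bDelta\|^2)$ up to small discretization error. If Case~1 of Condition~\ref{cond:cubic_subsolver} triggers, we directly get $f(\mathbf{x}_{t+1}) - f(\mathbf{x}_t) \le -\Omega(\sqrt{\epsilon^3/\rho})$. If Case~2 triggers, the approximate minimizer satisfies $\tilde{m}(\bDelta) \le \tilde{m}(\bDelta^\star) + \frac{c}{12}\rho\|\bDelta^\star\|^3$, and the well-known identity $\tilde{m}(\bDelta^\star) \le -\frac{\rho}{12}\|\bDelta^\star\|^3$ at the exact minimizer of a cubic gives $\tilde m(\bDelta) \le -\tfrac{\rho}{12}(1-c)\|\bDelta^\star\|^3$. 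So if $\Delta_m = \tilde m(\bDelta) \ge -\tfrac{1}{100}\sqrt{\epsilon^3/\rho}$ (the stopping check fails to fire), we conclude $\|\bDelta^\star\| \le O(\sqrt{\epsilon/\rho})$; in the complementary case, $\|\bDelta^\star\|^3 \ge \Omega(\epsilon^{1.5}/\rho^{1.5})$ and once plugged into the smoothness bound above we again obtain function progress $\Omega(\sqrt{\epsilon^3/\rho})$ per step. Summing over at most $\Delta_f/\Omega(\sqrt{\epsilon^3/\rho}) = O(\sqrt{\rho}\Delta_f/\epsilon^{1.5})$ iterations matches $T_{\text{out}}$.

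The trickiest piece, which I expect to be the main obstacle, is the termination lemma. Suppose the stopping criterion fires at iteration $t$. By the argument above we already know $\|\bDelta^\star\| \le O(\sqrt{\epsilon/\rho})$. The first-order and second-order optimality conditions for the exact cubic minimizer read
\[
\mathbf{g}_t + \mathbf{B}_t \bDelta^\star + \tfrac{\rho}{2}\|\bDelta^\star\|\bDelta^\star = 0, \qquad \mathbf{B}_t + \tfrac{\rho}{2}\|\bDelta^\star\| I \succeq 0,
\]
from which $\|\mathbf{g}_t\| \le (\ell + \tfrac{\rho}{2}\|\bDelta^\star\|)\|\bDelta^\star\| \le O(\epsilon)$ and $\lambda_{\min}(\mathbf{B}_t) \ge -O(\sqrt{\rho\epsilon})$. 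The Cubic-Finalsolver is invoked precisely so that the returned $\bDelta$ is within $c\sqrt{\epsilon/\rho}$ of $\bDelta^\star$ (the first clause of Case~2), and one more application of gradient- and Hessian-Lipschitzness propagates these bounds from $\mathbf{x}_t$ to $\mathbf{x}^\star = \mathbf{x}_t + \bDelta$. Finally, converting from $(\mathbf{g}_t, \mathbf{B}_t)$ back to $(\nabla f, \nabla^2 f)$ via the two concentration bounds above, with the absolute constant $c$ chosen small enough to absorb all the $O(\cdot)$ constants into Definition~\ref{def: eps-2}, yields $\|\nabla f(\mathbf{x}^\star)\| \le \epsilon$ and $\lambda_{\min}(\nabla^2 f(\mathbf{x}^\star)) \ge -\sqrt{\rho\epsilon}$.

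The stochastic cost accounting is then routine: each outer round draws $n_1$ stochastic gradients and runs the subsolver for at most $\mathcal{T}(\epsilon)$ inner iterations, each requiring one evaluation of $\mathbf{B}_t[\cdot]$ and hence $n_2$ stochastic Hessian-vector products; multiplying by $T_{\text{out}} = O(\sqrt{\rho}\Delta_f/\epsilon^{1.5})$ and substituting the values of $n_1,n_2$ gives the claimed $\tilde O$-bound, with the logarithmic factors originating from the union-bound-over-iterations/dimension inside $n_1$ and $n_2$.
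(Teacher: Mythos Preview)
Your concentration step and progress lemma are essentially the paper's argument, but the termination lemma contains a genuine gap.

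You write that the first-order optimality condition for the cubic minimizer gives
\[
\|\mathbf{g}_t\| \le \bigl(\ell + \tfrac{\rho}{2}\|\bDelta^\star\|\bigr)\|\bDelta^\star\| \le O(\epsilon).
\]
This bound is false. With $\|\bDelta^\star\|\le \tfrac12\sqrt{\epsilon/\rho}$ you only get $\|\mathbf{g}_t\| \lesssim \ell\sqrt{\epsilon/\rho}$, which can be of order $\ell^2/\rho \gg \epsilon$ (indeed the whole analysis assumes $\epsilon\le\ell^2/\rho$). The current iterate $\mathbf{x}_t$ is \emph{not} in general an approximate stationary point; the approximate stationary point is $\mathbf{x}_t+\bDelta^\star$. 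The paper proves this through Lemma~\ref{lem:s_big}, which shows directly that
\[
\|\bDelta^\star\| \;\gtrsim\; \max\Bigl\{\sqrt{\tfrac{1}{\rho}\|\nabla f(\mathbf{x}_t+\bDelta^\star)\|},\; -\tfrac{1}{\rho}\lambda_{\min}\bigl(\nabla^2 f(\mathbf{x}_t+\bDelta^\star)\bigr)\Bigr\},
\]
so smallness of $\|\bDelta^\star\|$ forces $\mathbf{x}_t+\bDelta^\star$ to be $\epsilon$-second-order stationary. Your plan never establishes this forward lemma; you try instead to read off stationarity at $\mathbf{x}_t$ and then ``propagate'' it by Lipschitzness, but Lipschitz propagation over a distance $\sqrt{\epsilon/\rho}$ again incurs an $\ell\sqrt{\epsilon/\rho}$ error in the gradient, which is too large.

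You also misidentify what Cubic-Finalsolver does. It is not governed by Case~2 of Condition~\ref{cond:cubic_subsolver}; it is a separate routine (Algorithm~\ref{algo:SCD-f}) that runs gradient descent until the \emph{submodel gradient} satisfies $\|\mathbf{g}+\mathbf{B}\bDelta+\tfrac{\rho}{2}\|\bDelta\|\bDelta\|\le\epsilon/2$. The paper's termination argument (Lemma~\ref{lem:final_grad_loop}) combines this stopping condition with the second-order Taylor bound $\|\nabla f(\mathbf{x}_t+\bDelta)-\nabla f(\mathbf{x}_t)-\nabla^2 f(\mathbf{x}_t)\bDelta\|\le\tfrac{\rho}{2}\|\bDelta\|^2\le O(\epsilon)$ and the concentration bounds to conclude $\|\nabla f(\mathbf{x}_t+\bDelta)\|\le\epsilon$. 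The crucial point is that the quantity being made small is $\|\mathbf{g}+\mathbf{B}\bDelta\|$, not $\|\mathbf{g}\|$, so the $\ell\|\bDelta\|$ term never appears. Your proposal needs this mechanism (or an equivalent one) to close the argument.
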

In the limit where $\epsilon$ is small the result simplifies:
\begin{remark}
    \label{rmk:main1}
If $\epsilon \leq \min \left \{ \frac{\sigma_1^2}{c_1 M_1}, \frac{\sigma_2^4}{c_2^2 M_2^2 \rho}  \right \}$, then under the settings of Theorem~\ref{thm:main1} we can conclude that Algorithm~\ref{algo:SCRN} will output an $\epsilon$-second-order stationary point of $f$ with probability at least $1-\delta$ within
  \begin{align}
    \mathcal{\tilde{O}}\left(\frac{\sqrt{\rho} \Delta_f}{\epsilon^{1.5}} \left(  \frac{\sigma_1^2}{\epsilon^2} + \frac{\sigma_2^2}{\rho \epsilon} \cdot \mathcal{T}(\epsilon) \right) \right)
  \end{align}
  total stochastic gradient and Hessian-vector product evaluations.
\end{remark}
Theorem \ref{thm:main1} states that after $T_{\text{out}} = \O( \frac{\sqrt{\rho}(f(\mathbf{x}_0)-f^*)}{\epsilon^{1.5}})$ iterations, stochastic cubic regularization (Algorithm~\ref{algo:SCRN}) will have found an $\epsilon$-second-order stationary point w.h.p. -- see Equation \eqref{eq:outer_iter_complexity} in the Appendix for an exact outer iteration complexity that guarantees early termination of Algorithm~\ref{algo:SCRN} w.h.p. Within each iteration, we require $n_1 = \tilde{\mathcal{O}}(\frac{\sigma_1^2}{\epsilon^2})$ samples for gradient averaging and $n_2 = \tilde{\mathcal{O}}(\frac{\sigma_2^2}{\rho\epsilon})$ samples for Hessian-vector product averaging to ensure small variance when $\epsilon$ is small. Recall that the averaged gradient $\mathbf{g}_t$ is fixed for a given cubic submodel, while the averaged Hessian-vector product $\mathbf{B}_t[\mathbf{v}]$ needs to be recalculated every time the cubic subsolver queries the gradient $\nabla \tilde{m}(\cdot)$. At most $\mathcal{T}(\epsilon)$ such queries will be made by definition. Therefore, each iteration takes $\O(\frac{\sigma_1^2}{\epsilon^2} + \frac{\sigma_2^2}{\rho \epsilon} \cdot \mathcal{T}(\epsilon)) $ stochastic gradient and Hessian-vector product evaluations in the limit where $\epsilon$ is small (see Remark \ref{rmk:main1}). Multiplying these two quantities together gives the full result.

\begin{algorithm}[t]
\caption{Cubic-Finalsolver via Gradient Descent}\label{algo:SCD-f}
% $(\x_0, \eta, r)$
\begin{algorithmic}[1]
\REQUIRE $\mathbf{g}$, $\mathbf{B}[\cdot]$, tolerance $\epsilon$.
 \STATE $\mathbf{\Delta} \leftarrow 0$, $\mathbf{g}_m \leftarrow \mathbf{g}$, $\eta \leftarrow \frac{1}{20 \ell}$
 \WHILE{$ \norm{\mathbf{g}_m} > \frac{\epsilon}{2}$ }
  \STATE $\mathbf{\Delta} \leftarrow \mathbf{\Delta} - \eta \mathbf{g}_m$
  \STATE $\mathbf{g}_m \leftarrow \mathbf{g} + \mathbf{B}[\mathbf{\Delta}] + \frac{\rho}{2} \norm{\mathbf{\Delta}} \mathbf{\Delta}$
 \ENDWHILE
 \ENSURE $\mathbf{\Delta}$
\end{algorithmic}
\end{algorithm}

Finally, we note that lines 8-11 of Algorithm \ref{algo:SCRN} give the termination condition of our meta-algorithm. When the decrease in submodel value $\Delta_m$ is too small, our theory guarantees $\x_t + \bDelta^\star$ is an $\epsilon$-second-order stationary point, where $\bDelta^\star$ is the optimal solution of the cubic submodel. However, Cubic-Subsolver may only produce an inexact solution $\bDelta$ satisfying Condition \ref{cond:cubic_subsolver}, which is not sufficient for $\x_t + \bDelta$ to be an $\epsilon$-second-order stationary point. We therefore call Cubic-Finalsolver to solve the subproblem with higher precision. Since Cubic-Finalsolver is invoked only once at the end of the algorithm, we can just use gradient descent, and its runtime will always be dominated by the rest of the algorithm.

\subsection{Gradient Descent as a Cubic-Subsolver}

One concrete example of a cubic subsolver is a simple variant of gradient descent (Algorithm \ref{algo:SCD}) as studied in \citet{carmon2016gradient}. The two main differences relative to standard gradient descent are: (1) lines 1--3: when $\mathbf{g}$ is large, the submodel (Equation \ref{eq:submodel_short}) may be ill-conditioned, so instead of doing gradient descent, the iterate only moves one step in the $\mathbf{g}$ direction, which already guarantees sufficient descent; (2) line 6: the algorithm adds a small perturbation to $\mathbf{g}$ to avoid a certain ``hard" case for the cubic submodel. We refer readers to \citet{carmon2016gradient} for more details about Algorithm \ref{algo:SCD}.

Adapting their result for our setting, we obtain the following lemma, which states that the gradient descent subsolver satisfies our Condition \ref{cond:cubic_subsolver}.
\begin{restatable}{lemma}{subsolveerr}\label{lem:subsolveerr}
There exists an absolute constant $c'$, such that under the same assumptions on $f(\mathbf{x})$ and the same choice of parameters $n_1, n_2$ as in Theorem \ref{thm:main1}, Algorithm \ref{algo:SCD} satisfies Condition \ref{cond:cubic_subsolver} with probability at least $1-\delta'$ with
\begin{align}
  \mathcal{T}(\epsilon) \le \tilde{\O}(\frac{\ell}{\sqrt{\rho\epsilon}}) .
\end{align}
\end{restatable}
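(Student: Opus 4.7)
The plan is to case-split on the magnitude of $\mathbf{g}$, following \citet{carmon2016gradient}, and then verify that each branch produces an output satisfying the appropriate clause of Condition \ref{cond:cubic_subsolver}. In the ``large gradient'' branch (Lines 1--3 of Algorithm \ref{algo:SCD}, triggered when $\|\mathbf{g}\| \gtrsim \ell^2/\rho$), a single step of magnitude $\asymp \|\mathbf{g}\|/\ell$ in the direction $-\mathbf{g}$ already drives the submodel down by $\Omega(\|\mathbf{g}\|^{3/2}/\sqrt{\rho})$: the linear term dominates at this scale, with quadratic and cubic terms providing only higher-order corrections. This secures the $\tilde{m}$-portion of Case 1. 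In the complementary branch we run perturbed gradient descent with stepsize $\eta=1/(20\ell)$. The Carmon--Duchi analysis then yields two properties of the iterates $\bDelta_k$: a norm bound $\|\bDelta_k\| \le \|\bDelta^\star\| + O(\sqrt{\epsilon/\rho})$ throughout the trajectory, and the submodel-value convergence $\tilde{m}(\bDelta_T) \le \tilde{m}(\bDelta^\star) + O\!\left(c\,\rho\,\|\bDelta^\star\|^3\right)$ after $T = \tilde{\O}(\ell/\sqrt{\rho\epsilon})$ iterations whenever $\|\bDelta^\star\| \ge \tfrac12\sqrt{\epsilon/\rho}$. Together these supply Case 2. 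The initial random perturbation of $\mathbf{g}$ is what rescues gradient descent in the ``hard case'' where $\bDelta^\star$ lies along an eigenvector of $\mathbf{B}$ not initially excited by $\mathbf{g}$; the perturbation ensures a nontrivial component along that unstable direction with high probability, so that gradient descent can amplify it in logarithmically many steps.

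To close Case 1 we must also transfer the submodel decrease into a true function decrease. The $\rho$-Lipschitz Hessian assumption gives
\begin{equation*}
\left| f(\x_t + \bDelta) - f(\x_t) - \nabla f(\x_t)^\top \bDelta - \tfrac{1}{2}\bDelta^\top \nabla^2 f(\x_t)\, \bDelta \right| \le \tfrac{\rho}{6}\|\bDelta\|^3,
\end{equation*}
so comparing with $\tilde{m}(\bDelta)$ produces an error of at most $\|\mathbf{g}_t - \nabla f(\x_t)\|\,\|\bDelta\| + \|\mathbf{B}_t - \nabla^2 f(\x_t)\|\,\|\bDelta\|^2$. Vector- and matrix-Bernstein inequalities, invoked with the mini-batch sizes $n_1, n_2$ from Theorem \ref{thm:main1}, deliver $\|\mathbf{g}_t - \nabla f(\x_t)\| \le O(c\epsilon)$ and $\|\mathbf{B}_t - \nabla^2 f(\x_t)\| \le O(c\sqrt{\rho\epsilon})$ with probability $1-\delta'$; the logarithmic factors in $n_1, n_2$ support the necessary union bound. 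Because the iterates produced by Algorithm \ref{algo:SCD} have norm of order $O(\sqrt{\epsilon/\rho})$ whenever Case 1 is in play, both error terms contribute at most $O(\sqrt{\epsilon^3/\rho})$, and the $\tilde{m}$-decrease transfers to an $f$-decrease of the same order.

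The principal obstacle is the convergence analysis of gradient descent on a cubic objective with possibly indefinite quadratic part, which is the technical heart of \citet{carmon2016gradient}: their potential-function argument combined with the small random perturbation of $\mathbf{g}$ controls the hard case and yields the $\ell/\sqrt{\rho\epsilon}$ rate. Everything else --- matching constants to the thresholds in Condition \ref{cond:cubic_subsolver}, and union-bounding the concentration events --- is bookkeeping that piggybacks on the sample-size specification of Theorem \ref{thm:main1}.
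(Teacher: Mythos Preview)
Your overall strategy---split on $\norm{\mathbf{g}}$, show the Cauchy step gives Case~1, and invoke the Carmon--Duchi analysis for Case~2---matches the paper exactly, and your Case~2 description is correct.

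There is, however, a real error in your Case~1 transfer argument. You write that ``the iterates produced by Algorithm~\ref{algo:SCD} have norm of order $O(\sqrt{\epsilon/\rho})$ whenever Case~1 is in play.'' This is false: when $\norm{\mathbf{g}} \ge \ell^2/\rho$, the Cauchy radius satisfies $R_c \ge \tfrac{7}{10}\,\ell/\rho$, which dominates $\sqrt{\epsilon/\rho}$ (recall $\epsilon \le \ell^2/\rho$), and $R_c$ can grow like $\sqrt{\norm{\mathbf{g}}/\rho}$ without any upper bound. Consequently the error terms $c_1\epsilon\,R_c$ and $\tfrac{c_2}{2}\sqrt{\rho\epsilon}\,R_c^2$ are \emph{not} $O(\sqrt{\epsilon^3/\rho})$ in absolute size. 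The paper's fix (Lemma~\ref{lem:cauchy}) is to compare these errors not to a fixed threshold but to the cubic portion of the submodel decrease: since $\tilde m(-R_c\,\mathbf{g}/\norm{\mathbf{g}}) = -\tfrac{1}{2}\norm{\mathbf{g}}R_c - \tfrac{\rho}{12}R_c^3$, and $R_c \gtrsim \ell/\rho \ge \sqrt{\epsilon/\rho}$, the term $\tfrac{\rho}{12}R_c^3$ alone already swallows both error terms for sufficiently small $c_1,c_2$. So the $f$-decrease survives \emph{relatively}, not because the errors are small but because the main decrease is larger still.

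A minor related point: the Cauchy step magnitude is $R_c \asymp \sqrt{\norm{\mathbf{g}}/\rho}$ in the regime $\norm{\mathbf{g}} \gg \ell^2/\rho$, not $\norm{\mathbf{g}}/\ell$ as you wrote; the two happen to coincide at the threshold $\norm{\mathbf{g}} = \ell^2/\rho$ but diverge above it. Your decrease estimate $\Omega(\norm{\mathbf{g}}^{3/2}/\sqrt{\rho})$ is nonetheless correct.
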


\begin{algorithm}[t]
\caption{Cubic-Subsolver via Gradient Descent}\label{algo:SCD}
\begin{algorithmic}[1]
\REQUIRE $\mathbf{g}$, $\mathbf{B}[\cdot]$, tolerance $\epsilon$.
 \IF{$\norm{\mathbf{g}} \geq \frac{\ell^2}{\rho}$}{ \label{algo2:line:big_grad}
  \STATE $R_c \leftarrow -\frac{\mathbf{g}^\top \mathbf{B}[\mathbf{g}]}{\rho \norm{\mathbf{g}}^2} + \sqrt{\left( \frac{\mathbf{g}^\top \mathbf{B} [\mathbf{g}]}{\rho \norm{\mathbf{g}}^2} \right)^2 + \frac{2 \norm{\mathbf{g}}}{\rho}}$
   \STATE  $\mathbf{\Delta} \leftarrow -R_c \frac{\mathbf{g}}{\norm{\mathbf{g}}}$ \label{algo2:line:cauchy}
   }
 \ELSE{
  \STATE $\mathbf{\Delta} \leftarrow 0$, $\sigma \leftarrow c' \frac{\sqrt{\epsilon\rho}}{\ell}$, $\eta \leftarrow \frac{1}{20 \ell}$
  \STATE $\tilde{\mathbf{g}} \leftarrow \mathbf{g} + \sigma \mathbf{\zeta}$ for $\mathbf{\zeta} \sim \text{Unif}(\mathbb{S}^{d-1})$
 \FOR{$t = 1, \hdots, \mathcal{T}(\epsilon)$} \label{algo2:line:grad_loop}
  \STATE $\mathbf{\Delta} \leftarrow \bDelta  - \eta (\tilde{\mathbf{g}} + \eta \mathbf{B}[\bDelta] + \frac{\rho}{2} \norm{\mathbf{\Delta}} \mathbf{\Delta})$
 \ENDFOR
 }
 \ENDIF
 \STATE $\Delta_m \leftarrow \mathbf{g}^\top \mathbf{\Delta} + \frac{1}{2} \mathbf{\Delta}^\top \mathbf{B}[\bDelta] + \frac{\rho}{6} \norm{\mathbf{\Delta}}^3$
 \ENSURE $\mathbf{\Delta}$, $\Delta_m$
\end{algorithmic}
\end{algorithm}

Our next corollary applies gradient descent (Algorithm \ref{algo:SCD}) as the approximate cubic subsolver in our meta-algorithm (Algorithm \ref{algo:SCRN}), which immediately gives the total number of gradient and Hessian-vector evaluations for the full algorithm.
 \begin{restatable}{corollary}{thmmaincarmon}
  \label{thm:main2}
  Under the same settings as Theorem \ref{thm:main1}, if $\epsilon \leq \min \left \{ \frac{\sigma_1^2}{c_1 M_1}, \frac{\sigma_2^4}{c_2^2 M_2^2 \rho}  \right \}$, and if we instantiate the Cubic-Subsolver subroutine with Algorithm \ref{algo:SCD}, 
  then with probability greater than $1-\delta$, Algorithm \ref{algo:SCRN}
   will output an $\epsilon$-second-order stationary point of $f(\mathbf{x})$ within
   \begin{align}
    \mathcal{\tilde{O}}\left(\frac{\sqrt{\rho} \Delta_f}{\epsilon^{1.5}} \left( \frac{\sigma_1^2}{\epsilon^2} + \frac{\sigma_2^2}{\rho \epsilon} \cdot \frac{\ell}{\sqrt{\rho\epsilon}}\right) \right)
   \end{align}
    total stochastic gradient and Hessian-vector product evaluations.
 \end{restatable}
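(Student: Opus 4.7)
The plan is essentially a direct composition of Theorem~\ref{thm:main1} with Lemma~\ref{lem:subsolveerr}. First, I would verify that Algorithm~\ref{algo:SCD} is a valid Cubic-Subsolver in the sense of Condition~\ref{cond:cubic_subsolver}: Lemma~\ref{lem:subsolveerr} provides exactly this, giving an inner iteration complexity of $\mathcal{T}(\epsilon) = \tilde{\O}(\ell / \sqrt{\rho \epsilon})$, with the constant $c$ in Condition~\ref{cond:cubic_subsolver} being the one required by Theorem~\ref{thm:main1}. Since the hypotheses of Lemma~\ref{lem:subsolveerr} ask for the same $f(\mathbf{x})$ assumptions and the same mini-batch sizes $n_1, n_2$ as in Theorem~\ref{thm:main1}, all conditions match up cleanly.

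Next, I would invoke Theorem~\ref{thm:main1} together with Remark~\ref{rmk:main1}. Since the corollary assumes $\epsilon \le \min\{\sigma_1^2/(c_1 M_1), \sigma_2^4/(c_2^2 M_2^2 \rho)\}$, the max expressions in the oracle count reduce to $\sigma_1^2/\epsilon^2$ and $\sigma_2^2/(\rho \epsilon)$ as in Remark~\ref{rmk:main1}. Substituting $\mathcal{T}(\epsilon) = \tilde{\O}(\ell/\sqrt{\rho\epsilon})$ into the expression
\begin{equation*}
\tilde{\O}\!\left(\frac{\sqrt{\rho}\,\Delta_f}{\epsilon^{1.5}} \left(\frac{\sigma_1^2}{\epsilon^2} + \frac{\sigma_2^2}{\rho\epsilon}\cdot \mathcal{T}(\epsilon) \right)\right)
\end{equation*}
produces exactly the claimed rate.

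The only nontrivial bookkeeping step is the probability accounting. Theorem~\ref{thm:main1} requires Condition~\ref{cond:cubic_subsolver} to hold on every outer iteration, whereas Lemma~\ref{lem:subsolveerr} guarantees it only with probability $1 - \delta'$ per call. I would union-bound over the $T_{\text{out}} = \O(\sqrt{\rho}\,\Delta_f/\epsilon^{1.5})$ outer iterations (plus the one Cubic-Finalsolver call, which is deterministic given its inputs), taking $\delta' = \delta/(2 T_{\text{out}})$ and absorbing the resulting logarithmic factor into $\tilde{\O}(\cdot)$. Combined with the $1-\delta/2$ probability for the meta-algorithm's concentration events, the total failure probability remains at most $\delta$.

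The main obstacle I anticipate is purely notational rather than mathematical: making sure that the absolute constant $c$ demanded by Theorem~\ref{thm:main1} and the absolute constant $c'$ produced by Lemma~\ref{lem:subsolveerr} are instantiated consistently so that the chosen $n_1, n_2$ simultaneously satisfy both. Since both bounds only require $n_1, n_2$ to be at least certain explicit expressions in $\epsilon, \sigma_i, M_i$ up to log factors, this is a matter of taking the maximum of the two thresholds, which the $\tilde{\O}$ notation hides. Once that alignment is made, the corollary follows by straightforward substitution.
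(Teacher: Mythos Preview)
Your proposal is correct and essentially matches the paper's approach: both compose Theorem~\ref{thm:main1} (with the small-$\epsilon$ simplification of Remark~\ref{rmk:main1}) with Lemma~\ref{lem:subsolveerr}'s bound $\mathcal{T}(\epsilon) = \tilde{\O}(\ell/\sqrt{\rho\epsilon})$. The paper additionally verifies that the Case~1 and Case~2 descent constants for Algorithm~\ref{algo:SCD} satisfy $-K_1 \le -K_2$ (an ordering assumed WLOG inside the proof of Theorem~\ref{thm:main1}), and folds the subsolver's per-call failure probability into the union bound already taken there, but these are exactly the bookkeeping points you anticipated.
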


\begin{remark}
The overall runtime of Algorithm \ref{algo:SCRN} in Corollary \ref{thm:main2} is the number of stochastic gradient and Hessian-vector product evaluations multiplied by the time to compute a gradient or Hessian-vector product. For neural networks, the latter takes $\O(d)$ time.
\end{remark}

From Corollary \ref{thm:main2}, we observe that the dominant term in solving the submodel is $\frac{\sigma_1^2}{\epsilon^2}$ when $\epsilon$ is sufficiently small, giving a total iteration complexity of $\O(\epsilon^{-3.5})$ when other problem-dependent parameters are constant. This improves on the $\O(\epsilon^{-4}\text{poly}(d))$ complexity attained by SGD.

It is reasonable to believe there may be another cubic subsolver which is faster than gradient descent and which satisfies Condition \ref{cond:cubic_subsolver} with a smaller $\mathcal{T}(\epsilon)$, for instance a variant of Nesterov's accelerated gradient descent. However, since the dominating term in our subsolver complexity is $\frac{\sigma_1^2}{\epsilon^2}$ due to gradient averaging, and this is independent of $\mathcal{T}(\epsilon)$, a faster cubic subsolver cannot improve the overall number of gradient and Hessian-vector product evaluations. This means that the gradient descent subsolver already achieves the optimal asymptotic rate for finding an $\epsilon$-second-order stationary point under our stochastic cubic regularization framework.

%!TEX root = SHV.tex
\section{Proof Sketch} \label{sec:proof_sketch}
This section sketches the crucial steps needed to understand and prove our main theorem (Theorem \ref{thm:main1}).  We begin by describing our high-level approach, then show how to instantiate this high-level approach in the stochastic setting, assuming oracle access to an exact subsolver. For the case of an inexact subsolver and other proof details, we defer to the Appendix.

Recall that at iteration $t$ of Algorithm \ref{algo:SCRN}, a stochastic cubic submodel $m_t$ is constructed around the current iterate $\x_t$ with the form given in
Equation \eqref{eq:approx_cubic}:
\begin{align*}
m_t(\mathbf{x}) &= f(\mathbf{x}_t) + (\mathbf{x}-\mathbf{x}_t)^\top \mathbf{g}_t + \frac{1}{2} (\mathbf{x}-\mathbf{x}_t)^\top \mathbf{B}_t (\mathbf{x}-\mathbf{x}_t) + \frac{\rho}{6} \norm{\mathbf{x}-\mathbf{x}_t}^3 ,
\end{align*}
where $\mathbf{g}_t$ and $\mathbf{B}_t$ are the averaged stochastic gradients and Hessians.
At a high level, we will show that for each iteration,
the following two claims hold:

\textbf{Claim 1.} If $\x_{t+1}$ is not an $\epsilon$-second-order stationary point of $f(\mathbf{x})$, the cubic submodel has large descent $m_t(\x_{t+1}) - m_t(\x_t)$.

\textbf{Claim 2.} If the cubic submodel has large descent $m_t(\x_{t+1}) - m_t(\x_t)$, then the true function also has large descent $f(\x_{t+1}) - f(\x_t)$.

Given these claims, it is straightforward to argue for the correctness of our approach. We know that if we observe a large decrease in the cubic submodel value $m_t(\x_{t+1}) - m_t(\x_t)$ during Algorithm~\ref{algo:SCRN}, then by Claim~2 the function will also have large descent. But since $f$ is bounded below, this cannot happen indefinitely, so we must eventually encounter an iteration with small cubic submodel descent. When that happens, we can conclude by Claim~1 that $\x_{t+1}$ is an $\epsilon$-second-order stationary point.

We note that Claim~2 is especially important in the stochastic setting, as we no longer have access to the true function but only the submodel. Claim~2 ensures that progress in $m_t$ still indicates progress in $f$, allowing the algorithm to terminate at the correct time.

In the remaining parts of this section, we discuss why the above two claims hold for an exact solver.

\subsection{Stochastic Setting with Exact Subsolver} \label{sec:SCRN}
In this setting, $\mathbf{g}_t$ and $\mathbf{B}_t$ are the averaged gradient and Hessian with sample sizes $n_1$ and $n_2$, respectively.
To ensure the stochastic cubic submodel approximates the exact cubic submodel well, we need large enough sample sizes so that both
$\mathbf{g}_t$ and $\mathbf{B}_t$ are close to the exact gradient and Hessian at $\x_t$ up to some tolerance:
\begin{restatable}{lemma}{concconds}
  \label{lem:conc_conds}
  For any fixed small constants $c_1, c_2$, we can pick gradient and Hessian mini-batch sizes $n_1 = \tilde{\O} \left(\max \left(\frac{M_1}{\epsilon}, \frac{\sigma_1^2}{\epsilon^2}\right) \right)$ and $n_2 = \tilde{\O}\left(\max \left(\frac{M_2}{\sqrt{\rho \epsilon}}, \frac{\sigma_2^2}{\rho \epsilon} \right) \right)$ so that with probability $1-\delta'$,
\begin{equation}
  \|\mathbf{g}_t -  \nabla f(\mathbf{x}_t) \| \leq c_1 \cdot \epsilon , \label{gradconc}
\end{equation}
\begin{equation}
  \forall \mathbf{v},  \| (\mathbf{B}_t-\nabla^2 f(\mathbf{x}_t))\mathbf{v} \| \leq c_2 \cdot \sqrt{\rho \epsilon}\norm{\mathbf{v}} .
  \label{hessconc}
\end{equation}
\end{restatable}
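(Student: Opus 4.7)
My plan is to apply standard vector and matrix Bernstein concentration inequalities to the i.i.d.\ sample means defining $\mathbf{g}_t$ and $\mathbf{B}_t$, then invert the resulting deviation bounds to read off the required minibatch sizes $n_1, n_2$. The two claims \eqref{gradconc} and \eqref{hessconc} concern independent minibatches $S_1, S_2$, so I would establish each separately with failure probability $\delta'/2$ and take a final union bound.

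For the gradient bound \eqref{gradconc}, set $X_i := \nabla f(\mathbf{x}_t, \xi_i) - \nabla f(\mathbf{x}_t)$. By Assumption~\ref{def: A2} these are i.i.d., mean-zero vectors with $\norm{X_i} \le M_1$ a.s.\ and $\mathbb{E}\norm{X_i}^2 \le \sigma_1^2$. A vector Bernstein inequality then gives a tail of the form
\[
\Pr\!\left(\norm{\tfrac{1}{n_1}\sum_{i=1}^{n_1} X_i} \ge t\right) \le (d+1)\exp\!\left(-\frac{n_1 t^2/2}{\sigma_1^2 + M_1 t/3}\right).
\]
Substituting $t = c_1 \epsilon$, requiring the right-hand side to be at most $\delta'/2$, and handling the variance-dominated regime ($\sigma_1^2 \gtrsim M_1 t$, which demands $n_1 \gtrsim \sigma_1^2/(c_1^2 \epsilon^2)$) and the bounded regime ($n_1 \gtrsim M_1/(c_1 \epsilon)$) separately produces the stated $n_1$, up to the $\log(d/\delta')$ factor that is absorbed into $\tilde{\O}$.

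For the Hessian-vector bound \eqref{hessconc}, the key observation is that $\sup_{\mathbf{v}\ne 0}\norm{(\mathbf{B}_t - \nabla^2 f(\mathbf{x}_t))\mathbf{v}}/\norm{\mathbf{v}}$ is exactly the spectral norm, so \eqref{hessconc} is equivalent to $\norm{\mathbf{B}_t - \nabla^2 f(\mathbf{x}_t)} \le c_2 \sqrt{\rho\epsilon}$. I would therefore apply the symmetric matrix Bernstein inequality to the mean-zero symmetric matrices $Y_i := \nabla^2 f(\mathbf{x}_t, \xi_i) - \nabla^2 f(\mathbf{x}_t)$. The matrix variance proxy satisfies $\norm{\mathbb{E}[Y_i^2]} \le \mathbb{E}\norm{Y_i}^2 \le \sigma_2^2$, since $Y_i^2 \preceq \norm{Y_i}^2 I$ for symmetric $Y_i$, and $\norm{Y_i}\le M_2$ a.s.\ by Assumption~\ref{def: A2}. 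Matrix Bernstein then yields
\[
\Pr\!\left(\norm{\tfrac{1}{n_2}\sum_{i=1}^{n_2} Y_i} \ge t\right) \le 2d\exp\!\left(-\frac{n_2 t^2/2}{\sigma_2^2 + M_2 t/3}\right),
\]
and taking $t = c_2 \sqrt{\rho\epsilon}$ and inverting in the two Bernstein regimes as before produces the stated bound on $n_2$.

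There is no real obstacle here: the result is a direct appeal to Bernstein-type concentration together with the elementary inequality relating the spectral-norm matrix variance proxy to the scalar variance hypothesis in Assumption~\ref{def: A2}. The only cosmetic care is in (i) splitting the Bernstein tail into its variance-dominated and max-dominated regimes when translating the tail probability into a sample-complexity bound, and (ii) carrying an additional factor of $d$ inside the logarithm for the matrix inequality (which is already absorbed into $\tilde{\O}$). A final union bound over the two events with probability $\delta'/2$ each completes the proof.
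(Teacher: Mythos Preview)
Your proposal is correct and essentially identical to the paper's own proof: both apply the matrix Bernstein inequality (the paper treats the gradient case as a $d\times 1$ matrix rather than invoking a separate ``vector Bernstein,'' but the resulting tail bound is the same), bound the matrix variance proxy by the scalar variance in Assumption~\ref{def: A2} via $Y_i^2 \preceq \norm{Y_i}^2 I$ (the paper phrases this as ``triangle inequality and Jensen''), and invert the Bernstein tail by splitting into the variance- and range-dominated regimes. The only cosmetic difference is that you split $\delta'$ in half for the union bound while the paper leaves this implicit.
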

\noindent
We need to ensure that the random vectors/matrices concentrate along an arbitrary direction (depending on $\mathbf{g}_t$ and $\mathbf{B}_t$). In order to guarantee the uniform concentration in Lemma \ref{lem:conc_conds}, we can directly apply results from matrix concentration to obtain the desired result \citep{tropp2015introduction}.

Let $\bDelta_t^\star = \argmin_\bDelta m_t(\x_t + \bDelta)$, i.e.\ $\x_t + \bDelta_t^\star$ is a global minimizer of the cubic submodel $m_t$. If we use an exact oracle solver, we have $\x_{t+1} = \x_t + \bDelta_t^\star$. In order to show Claim 1 and Claim 2, one important quantity to study is the decrease in the cubic submodel $m_t$:
      \begin{restatable}{lemma}{optdescent}\label{lem:optdescent}
      Let $m_t$ and $\bDelta_t^\star$ be defined as above. Then for all $t$,
        $$m_t(\x_t + \bDelta_t^\star) - m_t(\x_t) \le - \frac{1}{12} \rho \norm{\mathbf{\Delta}_t^\star}^3 .$$
      \end{restatable}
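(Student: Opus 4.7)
The plan is to use the standard first- and second-order optimality conditions for the global minimizer of a cubic model, which were established by Nesterov and Polyak. Writing the model shift as
\begin{align*}
m_t(\x_t + \bDelta_t^\star) - m_t(\x_t) = (\bDelta_t^\star)^\top \mathbf{g}_t + \tfrac{1}{2}(\bDelta_t^\star)^\top \mathbf{B}_t \bDelta_t^\star + \tfrac{\rho}{6}\norm{\bDelta_t^\star}^3,
\end{align*}
the goal is to rewrite this entirely in terms of $\rho\norm{\bDelta_t^\star}^3$ via the KKT-type conditions and then read off the constant.

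First I would record the two optimality conditions for $\bDelta_t^\star = \argmin_{\bDelta} m_t(\x_t + \bDelta)$. Since the gradient of $\frac{\rho}{6}\norm{\bDelta}^3$ is $\frac{\rho}{2}\norm{\bDelta}\bDelta$ and its Hessian at $\bDelta\neq 0$ is $\frac{\rho}{2}\norm{\bDelta} I + \frac{\rho}{2}\frac{\bDelta\bDelta^\top}{\norm{\bDelta}}$, the first-order stationarity condition reads
\begin{align*}
\mathbf{g}_t + \mathbf{B}_t \bDelta_t^\star + \tfrac{\rho}{2}\norm{\bDelta_t^\star}\bDelta_t^\star = 0,
\end{align*}
and the second-order (global-minimizer) condition gives $\mathbf{B}_t + \tfrac{\rho}{2}\norm{\bDelta_t^\star} I \succeq 0$. (The degenerate case $\bDelta_t^\star = 0$ makes the bound trivial, so I would handle it separately in one line.)

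Next I would substitute the first-order condition to eliminate $\mathbf{g}_t$ from the linear term:
\begin{align*}
(\bDelta_t^\star)^\top \mathbf{g}_t = -(\bDelta_t^\star)^\top \mathbf{B}_t \bDelta_t^\star - \tfrac{\rho}{2}\norm{\bDelta_t^\star}^3 .
\end{align*}
Plugging this in collapses the model shift to
\begin{align*}
m_t(\x_t + \bDelta_t^\star) - m_t(\x_t) = -\tfrac{1}{2}(\bDelta_t^\star)^\top \mathbf{B}_t \bDelta_t^\star - \tfrac{\rho}{3}\norm{\bDelta_t^\star}^3.
\end{align*}
Now I would apply the second-order condition, which implies $(\bDelta_t^\star)^\top \mathbf{B}_t \bDelta_t^\star \ge -\tfrac{\rho}{2}\norm{\bDelta_t^\star}^3$, so $-\tfrac{1}{2}(\bDelta_t^\star)^\top \mathbf{B}_t \bDelta_t^\star \le \tfrac{\rho}{4}\norm{\bDelta_t^\star}^3$. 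Combining with the $-\tfrac{\rho}{3}\norm{\bDelta_t^\star}^3$ term yields exactly $(\tfrac{1}{4}-\tfrac{1}{3})\rho\norm{\bDelta_t^\star}^3 = -\tfrac{1}{12}\rho\norm{\bDelta_t^\star}^3$, as claimed.

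There is no real obstacle here: the result is essentially a restatement of the Nesterov–Polyak descent lemma applied to the stochastic submodel rather than the true one, and nothing about the argument uses that $\mathbf{g}_t, \mathbf{B}_t$ are the true gradient/Hessian — only that $\bDelta_t^\star$ is the exact global minimizer of $m_t$. The only detail to be careful about is the degenerate $\bDelta_t^\star = 0$ case (where both sides vanish) and the precise form of the second-order condition one cites; I would state it as the semidefiniteness of $\mathbf{B}_t + \tfrac{\rho}{2}\norm{\bDelta_t^\star} I$, which is the standard Nesterov–Polyak characterization.
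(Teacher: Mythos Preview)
Your proposal is correct and essentially identical to the paper's proof: both invoke the Nesterov--Polyak first- and second-order optimality conditions for the cubic submodel, use the first-order condition to eliminate the linear term, and then apply the semidefiniteness condition $\mathbf{B}_t + \tfrac{\rho}{2}\norm{\bDelta_t^\star} I \succeq 0$ to obtain the $-\tfrac{1}{12}\rho\norm{\bDelta_t^\star}^3$ bound. The only cosmetic difference is that the paper computes $m_t(\x_t) - m_t(\x_t + \bDelta_t^\star)$ and lower-bounds it, whereas you compute the shift with the opposite sign and upper-bound it.
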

Lemma \ref{lem:optdescent} implies that in order to prove submodel $m_t$ has sufficient function value decrease, we only need to lower bound the norm of optimal solution, i.e.\ $\norm{\bDelta_t^\star}$.

\textbf{Proof sketch for claim 1:} Our strategy is to lower bound the norm of $\bDelta_t^\star$ when $\x_{t+1} = \x_t + \bDelta_t^\star$ is not an $\epsilon$-second-order stationary point. In the non-stochastic setting, \citet{nesterov2006cubic} prove
$$\norm{\mathbf{\Delta}_t^\star} \geq \frac{1}{2} \max \Bigg \{\sqrt{\frac{1}{\rho}\norm{\nabla f(\mathbf{x}_{t+1})}},   \frac{1}{\rho} \lambda_{\min}(\nabla^2 f(\mathbf{x}_{t+1}))  \Bigg\}, \label{eq:norm_lowerbound}$$
which gives the desired result.
In the stochastic setting, a similar statement holds up to some tolerance:
\begin{restatable}{lemma}{sbig}
  \label{lem:s_big}
    Under the setting of Lemma \ref{lem:conc_conds} with sufficiently small constants $c_1, c_2$,
    \begin{align}
    \norm{\mathbf{\Delta}^\star_t} \geq \frac{1}{2} \max \Bigg \{&\sqrt{\frac{1}{\rho}\left(\norm{\nabla f(\mathbf{x}_t + \mathbf{\Delta}^\star_t)}-\frac{\epsilon}{4}\right)}, \frac{1}{\rho} \left(\lambda_{\min}(\nabla^2 f(\mathbf{x}_t+\mathbf{\Delta}_t^\star)) - \frac{\sqrt{\rho\epsilon}}{4}\right) \Bigg\}. \notag
    \end{align}
\end{restatable}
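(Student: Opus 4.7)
The plan is to mimic the classical Nesterov argument for deterministic cubic regularization, carrying the stochastic error terms (from Lemma~\ref{lem:conc_conds}) as controllable perturbations. Since $\bDelta_t^\star$ is the exact global minimizer of the stochastic cubic submodel $m_t$, it satisfies the standard first- and second-order optimality conditions
\[
\mathbf{g}_t + \mathbf{B}_t \bDelta_t^\star + \tfrac{\rho}{2}\|\bDelta_t^\star\|\,\bDelta_t^\star = 0,
\qquad
\mathbf{B}_t + \tfrac{\rho}{2}\|\bDelta_t^\star\|\,\mathbf{I} \succeq 0.
\]
These identities are the only properties of $\bDelta_t^\star$ I will use; everything else is a perturbation calculation.

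\textbf{Step 1 (gradient bound).} I would expand $\nabla f(\x_t + \bDelta_t^\star)$ using Taylor's theorem with the $\rho$-Hessian-Lipschitz remainder,
\[
\nabla f(\x_t + \bDelta_t^\star) \;=\; \nabla f(\x_t) + \nabla^2 f(\x_t)\,\bDelta_t^\star + \mathbf{r},
\qquad \|\mathbf{r}\|\le \tfrac{\rho}{2}\|\bDelta_t^\star\|^2,
\]
then substitute $\nabla f(\x_t) = \mathbf{g}_t - (\mathbf{g}_t - \nabla f(\x_t))$ and $\nabla^2 f(\x_t)\bDelta_t^\star = \mathbf{B}_t \bDelta_t^\star - (\mathbf{B}_t - \nabla^2 f(\x_t))\bDelta_t^\star$, and finally use the first-order optimality condition to replace $\mathbf{g}_t + \mathbf{B}_t\bDelta_t^\star$ by $-\tfrac{\rho}{2}\|\bDelta_t^\star\|\bDelta_t^\star$. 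Using the concentration bounds $\|\mathbf{g}_t - \nabla f(\x_t)\| \le c_1\epsilon$ and $\|(\mathbf{B}_t - \nabla^2 f(\x_t))\bDelta_t^\star\| \le c_2\sqrt{\rho\epsilon}\|\bDelta_t^\star\|$ from Lemma~\ref{lem:conc_conds}, I would obtain
\[
\|\nabla f(\x_t+\bDelta_t^\star)\| \;\le\; c_1\epsilon + c_2\sqrt{\rho\epsilon}\,\|\bDelta_t^\star\| + \rho\|\bDelta_t^\star\|^2.
\]
An AM-GM split $c_2\sqrt{\rho\epsilon}\|\bDelta_t^\star\|\le \tfrac{c_2^2}{12}\epsilon + 3\rho\|\bDelta_t^\star\|^2$ collapses the middle term into the first and third, and choosing $c_1, c_2$ small enough (as permitted by Lemma~\ref{lem:conc_conds}) so that $c_1 + c_2^2/12 \le 1/4$ yields $\|\nabla f(\x_t+\bDelta_t^\star)\| \le \tfrac{\epsilon}{4} + 4\rho\|\bDelta_t^\star\|^2$. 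Rearranging gives the desired $\|\bDelta_t^\star\| \ge \tfrac{1}{2}\sqrt{(\|\nabla f(\x_t+\bDelta_t^\star)\|-\epsilon/4)/\rho}$ (the bound is vacuous, hence trivially true, when the right-hand side is imaginary).

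\textbf{Step 2 (Hessian bound).} Here I would start from the second-order optimality condition $\lambda_{\min}(\mathbf{B}_t) \ge -\tfrac{\rho}{2}\|\bDelta_t^\star\|$ and chain two Weyl-type perturbations, one to swap $\mathbf{B}_t$ for $\nabla^2 f(\x_t)$ (error $\le c_2\sqrt{\rho\epsilon}$ by Lemma~\ref{lem:conc_conds}) and one to swap $\nabla^2 f(\x_t)$ for $\nabla^2 f(\x_t+\bDelta_t^\star)$ (error $\le \rho\|\bDelta_t^\star\|$ by Assumption~\ref{def: A1}). This gives
\[
\lambda_{\min}(\nabla^2 f(\x_t+\bDelta_t^\star)) \;\ge\; -\tfrac{3\rho}{2}\|\bDelta_t^\star\| - c_2\sqrt{\rho\epsilon}.
\]
Rearranging (and taking $c_2\le 1/4$) yields $\|\bDelta_t^\star\| \ge \tfrac{1}{2\rho}\bigl(-\lambda_{\min}(\nabla^2 f(\x_t+\bDelta_t^\star)) - \sqrt{\rho\epsilon}/4\bigr)$, the Hessian half of the max, again trivially true when the bracketed quantity is negative.

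\textbf{Main obstacle.} There is no deep difficulty; the whole proof is a perturbation argument around Nesterov's identities. The only delicate point is making sure the absolute constants fit together: the AM-GM split in Step 1 must leave a leading coefficient of exactly $4\rho$ on $\|\bDelta_t^\star\|^2$ so that the factor of $\tfrac{1}{2}$ comes out cleanly, and the constants $c_1, c_2$ governing the concentration in Lemma~\ref{lem:conc_conds} must be chosen small enough (but only an absolute constant) to absorb all the additive $\epsilon$-slack into the $\epsilon/4$ on the right-hand side. This is precisely the role of the ``sufficiently small constants $c_1, c_2$'' hypothesis in the statement.
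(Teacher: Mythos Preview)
Your proposal is correct and follows essentially the same route as the paper's proof: both use the first- and second-order optimality conditions for $\bDelta_t^\star$, the Hessian-Lipschitz Taylor remainder, the concentration bounds from Lemma~\ref{lem:conc_conds}, and an AM--GM/Fenchel--Young split on the cross term $c_2\sqrt{\rho\epsilon}\,\|\bDelta_t^\star\|$. The only cosmetic difference is how the split is tuned---you choose the weights so the leading coefficient on $\rho\|\bDelta_t^\star\|^2$ is exactly $4$, whereas the paper leaves it as $1+c_2/2$ and then verifies the stated bound for explicit small $c_1,c_2$---but the argument and ingredients are identical.
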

That is, when $\x_{t+1}$ is not an $\epsilon$-second-order stationary point, we have $\norm{\mathbf{\Delta}_t^\star} \geq \Omega(\sqrt{\frac{\epsilon}{\rho}})$. In other words, we have sufficient movement.
It follows by Lemma~\ref{lem:optdescent} that we have sufficient cubic submodel descent.

\textbf{Proof sketch for claim 2:} In the non-stochastic case, $m_t(\x)$ is by construction an upper bound on $f(\x)$. Together with the fact $f(\x_t) = m_t(\x_t)$, we have:
$$f(\x_{t+1}) - f(\x_t) \le m_t(\x_{t+1}) - m_t(\x_t) ,$$
showing Claim 2 is always true. For the stochastic case, this inequality may no longer be true. Instead, under the setting of Lemma \ref{lem:conc_conds}, via Lemma \ref{lem:optdescent}, we can upper bound the function decrease with an additional error term:
\begin{align}
    f(\x_{t+1}) -& f(\x_t) \le \frac{1}{2}[m_t(\x_{t+1}) - m_t(\x_t)] + c \sqrt{\frac{\epsilon^3}{\rho}},\nonumber
    % \label{eq:claim2approximate}
\end{align}
for some sufficiently small constant $c$. Then when $m_t(\x_{t+1}) - m_t(\x_t) \le -4c\sqrt{\epsilon^3/\rho}$, we have
$f(\x_{t+1}) - f(\x_t) \le \frac{1}{4}[m_t(\x_{t+1}) - m_t(\x_t)] \le -c\sqrt{\epsilon^3/\rho} $, which proves Claim 2.

Finally, for an approximate cubic subsolver, the story becomes more elaborate. Claim 1 is only ``approximately'' true, while Claim 2 still holds but for more complicated reasons. We defer to the Appendix for the full proof.

\section{Experiments}\label{sec:experiments}

In this section, we provide empirical results on synthetic and real-world data sets to demonstrate the efficacy of our approach. All experiments are implemented using TensorFlow~\citep{abadi2016tensorflow}, which allows for efficient computation of Hessian-vector products using the method described by \citet{pearlmutter1994fast}.

\subsection{Synthetic Nonconvex Problem}

\begin{figure}[t]
\centering
\includegraphics[width=0.45\textwidth]{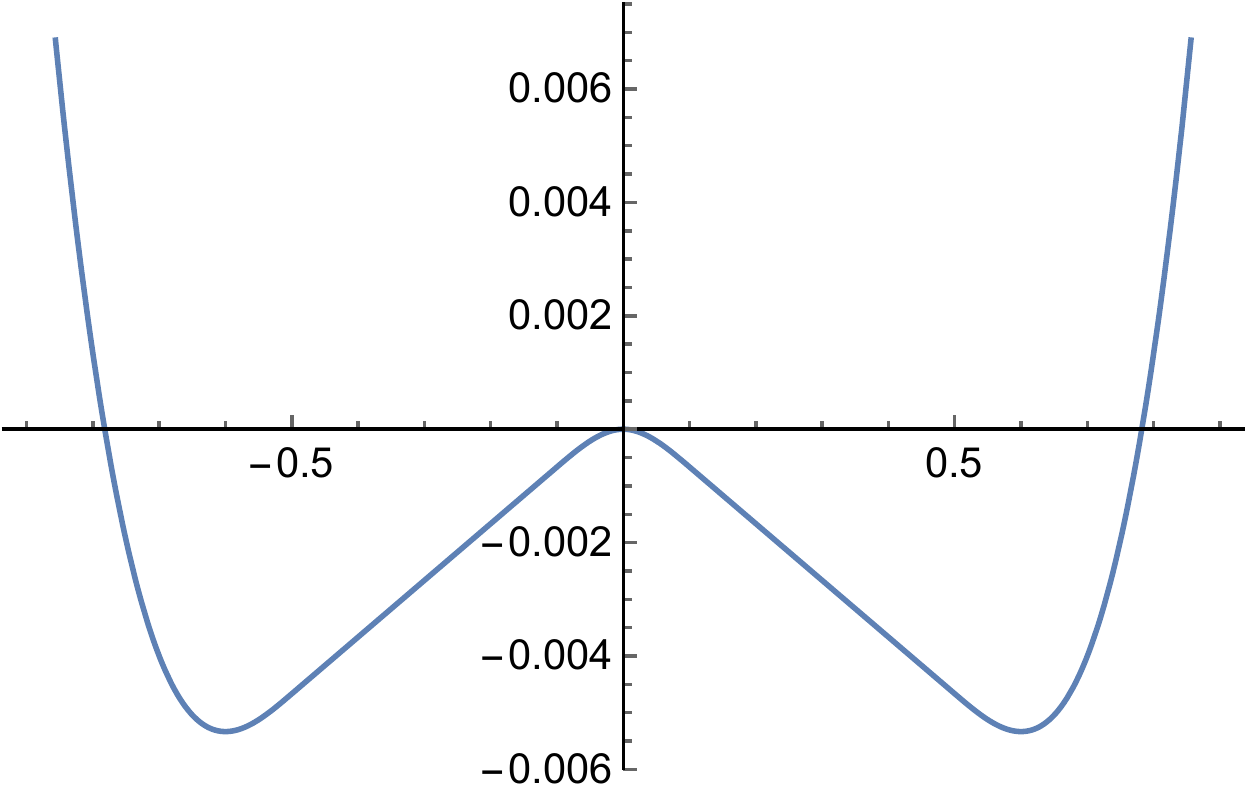}
\caption{The piecewise cubic function $w(x)$ used along one of the dimensions in the synthetic experiment. The other dimension uses a scaled quadratic.}
\label{fig:piecewise-plot}
\end{figure}

We begin by constructing a nonconvex problem with a saddle point to compare our proposed approach against stochastic gradient descent. Let $w(x)$ be the W-shaped scalar function depicted in Figure~\ref{fig:piecewise-plot}, with a local maximum at the origin and two local minima on either side. While we defer the exact form of $w(x)$ to Appendix~\ref{apx:experimental-details}, we note here that it has small negative curvature at the origin, $w''(0) = -0.2$, and that it has a 1-Lipschitz second derivative. We aim to solve the problem $$\min_{\mathbf{x} \in \mathbb{R}^2} \left[ w(x_1) + 10 x_2^2 \right] ,$$ with independent noise drawn from $\mathcal{N}(0,1)$ added separately to each component of every gradient and Hessian-vector product evaluation. By construction, the objective function has a saddle point at the origin with Hessian eigenvalues of -0.2 and 20, providing a simple but challenging test case where the negative curvature is two orders of magnitude smaller than the positive curvature and is comparable in magnitude to the noise.

We ran our method and SGD on this problem, plotting the objective value versus the number of oracle calls in Figure~\ref{fig:synthetic-results}. The batch sizes and learning rates for each method are tuned separately to ensure a fair comparison; see Appendix~\ref{apx:experimental-details} for details. We observe that our method is able to escape the saddle point at the origin and converge to one of the global minima faster than SGD, offering empirical evidence in support of our method's theoretical advantage.

\subsection{Deep Autoencoder}

\begin{figure}[t]
\centering
\begin{minipage}[t]{0.49\textwidth}
  \centering
  \includegraphics[width=0.9\textwidth]{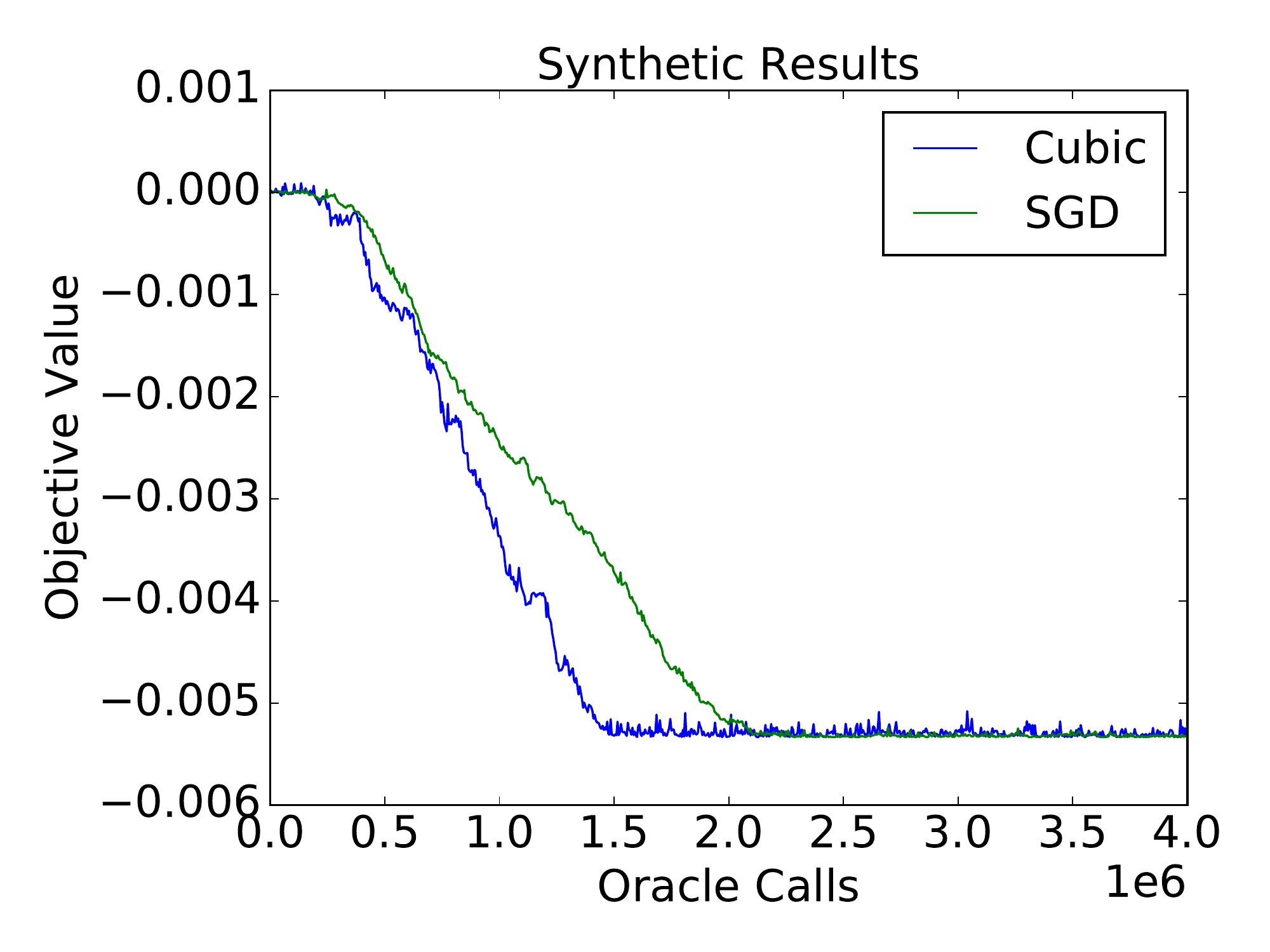}
  \caption{Results on our synthetic nonconvex optimization problem. Stochastic cubic regularization escapes the saddle point at the origin and converges to a global optimum faster than SGD.}
  \label{fig:synthetic-results}
\end{minipage}
\hfill
\begin{minipage}[t]{0.49\textwidth}
  \centering
  \includegraphics[width=0.9\textwidth]{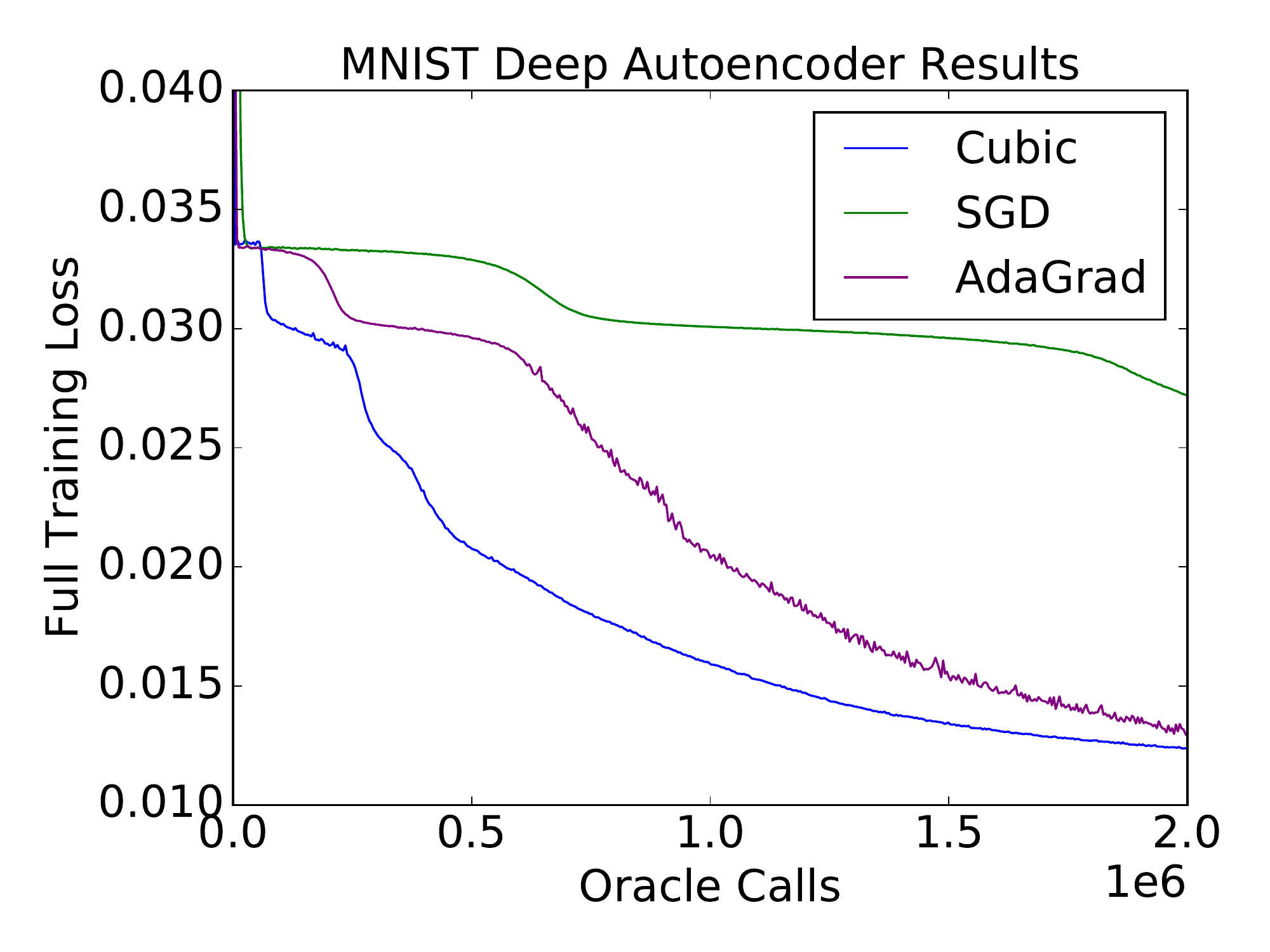}
  \caption{Results on the MNIST deep autoencoding task. Multiple saddle points are present in the optimization problem. Stochastic cubic regularization is able to escape them most quickly, allowing it to reach a local minimum faster than SGD and AdaGrad.}
  \label{fig:autoencoder-results}
\end{minipage}
\end{figure}

In addition to the synthetic problem above, we also investigate the performance of our approach on a more practical problem from deep learning, namely training a deep autoencoder on MNIST. The MNIST data set consists of grayscale images of handwritten digits, divided into 60,000 training images and 10,000 test images~\citep{lecun2010mnist}. Each image is $28 \times 28$ pixels, and pixel intensities are normalized to lie between 0 and 1. Our architecture consists of a fully connected encoder with dimensions $(28 \times 28) \to 512 \to 256 \to 128 \to 32$ together with a symmetric decoder. We use a softplus nonlinearity (defined as $\mathrm{softplus}(x) = \log(1 + \exp(x))$) for each hidden layer to ensure continuous first and second derivatives, and we use a pixelwise $\ell_2$ loss between the input and the reconstructed output as our objective function. We apply an elementwise sigmoid function to the final layer to guarantee that the output pixel intensities lie between 0 and 1.

Results on this autoencoding task are presented in Figure~\ref{fig:autoencoder-results}. In addition to training the model with our method and SGD, we also include results using AdaGrad, a popular adaptive first-order method with strong empirical performance~\citep{duchi2011adaptive}. Since the standard MNIST split does not include a validation set, we separate the original training set into 55,000 training images and 5,000 validation images, plotting training error on the former and using the latter to select hyperparameters for each method. More details about our experimental setup can be found in Appendix~\ref{apx:experimental-details}.

We observe that stochastic cubic regularization quickly escapes two saddle points and descends toward a local optimum, while AdaGrad takes two to three times longer to escape each saddle point, and SGD is slower still. This demonstrates that incorporating curvature information via Hessian-vector products can assist in escaping saddle points in practice. However, it is worth noting that AdaGrad makes slightly faster progress than our approach after reaching a basin around a local optimum, indicating that adaptivity may provide complementary benefits to second-order information. We leave the investigation of a hybrid method combining both techniques as an exciting direction for future work.

%!TEX root = SHV.tex

\section{Conclusion}\label{sec:concl}
In this paper, we presented a stochastic algorithm based on the classic cubic-regularized Newton method for nonconvex optimization.
Our algorithm provably finds  $\epsilon$-approximate local minima in $\mathcal{\tilde{O}}(\epsilon^{-3.5})$ total gradient and Hessian-vector product evaluations, improving upon the $\tilde{\O}(\epsilon^{-4})$ rate of SGD.
Our experiments show the favorable performance of our method relative to SGD on both a synthetic and a real-world problem.

% Finding approximate local minima under stochastic approximation setting 
% Furthermore, we show our stochastic cubic-regularized Newton method
% can find an
% complexity -- matching the best known rate for this problem in \citet{allen2017natasha}.
% While many recent advances in stochastic, nonconvex optimization (including the \citet{allen2017natasha})
% depend on acceleration/variance-reduction techniques derived from the finite-sum structure of the objective, our results
% hold more generally in the stochastic approximation framework. 

%\newpage
\bibliographystyle{plainnat}
\bibliography{SHV}

\newpage
\onecolumn
\appendix
%!TEX root = SHV.tex
\section{Proof of Main Results} \label{sec:main_appendix}
In this section, we give formal proofs of Theorems \ref{thm:main1} and \ref{thm:main2}.
We start by providing proofs of several useful auxiliary lemmas.
\begin{remark}
  It suffices to assume that $\epsilon \leq \frac{\ell^2}{\rho}$ for the following analysis, since otherwise every point $\mathbf{x}$ satisfies the second-order condition
  $\lambda_{\min}(\nabla^2 f(\mathbf{x})) \geq -\sqrt{\rho \epsilon}$ trivially by the Lipschitz-gradient assumption.
\end{remark}

\subsection{Set-Up and Notation}

Here we remind the reader of the relevant notation and provide further background from \citet{nesterov2006cubic} on the cubic-regularized Newton method.
We denote the stochastic gradient as
\begin{align}
  \mathbf{g}_t = \frac{1}{|S_1|} \sum_{\xi_i \in S_1} \nabla f(\mathbf{x}_t, \xi_i) \notag
\end{align}
and the stochastic Hessian as
\begin{align}
  \mathbf{B}_t = \frac{1}{|S_2|} \sum_{\xi_i \in S_2} \nabla^2 f (\mathbf{x}_t, \xi_i), \notag
\end{align}
both for iteration $t$.
We draw a sufficient number of samples $|S_1|$ and $|S_2|$ so that the concentration conditions
\begin{equation}
  \|\mathbf{g}_t -  \nabla f(\mathbf{x}_t) \| \leq c_1 \cdot \epsilon , \notag
\end{equation}
\begin{equation}
  \forall \mathbf{v},  \| (\mathbf{B}_t-\nabla^2 f(\mathbf{x}_t))\mathbf{v} \| \leq c_2 \cdot \sqrt{\rho \epsilon}\norm{\mathbf{v}}. \notag
\end{equation}
are satisfied for sufficiently small $c_1, c_2$ (see Lemma \ref{lem:conc_conds} for more details).
The cubic-regularized Newton subproblem is to minimize
\begin{align}
m_t(\mathbf{y}) &= f(\mathbf{x}_t) + (\mathbf{y}-\mathbf{x}_t)^\top \mathbf{g}_t
                 + \frac{1}{2} (\mathbf{y}-\mathbf{x}_t)^\top \mathbf{B}_t (\mathbf{y}-\mathbf{x}_t) + \frac{\rho}{6} \norm{\mathbf{y}-\mathbf{x}_t}^3. \label{eq:approx_cubic_appendix}
\end{align}
We denote the global optimizer of $m_t(\cdot)$ as $\mathbf{x}_{t}+\mathbf{\Delta}_t^\star$, that is
$\mathbf{\Delta}_t^\star = \argmin_{z} m_{k}(\mathbf{z}+\mathbf{x}_k)$.

As shown in \citet{nesterov2006cubic} a global optima of Equation \eqref{eq:approx_cubic_appendix} satisfies:
\begin{align}
& \mathbf{g}_t + \mathbf{B}_t \mathbf{\Delta}_t^\star+ \frac{\rho}{2} \norm{\mathbf{\Delta}_t^\star} \mathbf{\Delta}_t^\star = 0. \label{eq:cubic_sol_stat} \\
& \mathbf{B}_t + \frac{\rho}{2} \norm{\mathbf{\Delta}_t^\star}I \succeq 0. \label{eq:cubic_sol_dual}
\end{align}
Equation \eqref{eq:cubic_sol_stat} is the first-order stationary condition, while Equation \eqref{eq:cubic_sol_dual} follows from a duality argument. In practice,
we will not be able to directly compute $\mathbf{\Delta}_t^\star$ so will instead use a Cubic-Subsolver routine which must satisfy:

\addtocounter{condition}{-1}
\begin{condition}
For any fixed, small constant $c_3, c_4$, $\text{Cubic-Subsolver}(\mathbf{g}, \mathbf{B}[\cdot], \epsilon)$ terminates within $\mathcal{T}(\epsilon)$ gradient iterations (which may depend on $c_3, c_4$), and returns a $\mathbf{\Delta}$ satisfying at least one of the following:
\begin{enumerate}
\item $\max\{\tilde{m}(\bDelta), f(\x_t + \bDelta) - f(\x_t)\} \le -\Omega(\sqrt{\epsilon^3/\rho})$.  (\textbf{Case 1}) \label{cond:oracle_append}
\item $\norm{\mathbf{\Delta}} \leq \norm{\mathbf{\Delta}^\star} + c_4 \sqrt{\frac{\epsilon}{\rho}}$ and, if $\norm{\bDelta^\star} \ge \frac{1}{2} \sqrt{\epsilon/\rho}$, then
$\tilde{m}(\bDelta) \le \tilde{m}(\bDelta^\star) + \frac{c_3}{12} \cdot \rho\norm{\bDelta^\star}^3$. (\textbf{Case 2})  \label{cond:approx_min_append}
\end{enumerate}
\end{condition}

\subsection{Auxiliary Lemmas}
We begin by providing the proof of several useful auxiliary lemmas.
First we provide the proof of Lemma \ref{lem:conc_conds} which characterize the concentration conditions.

\concconds*

\begin{proof}
We can use the matrix Bernstein inequality from \citet{tropp2015introduction} to control both the fluctuations in the stochastic gradients and stochastic Hessians under Assumption \ref{def: A2}.

Recall that the spectral norm of a vector is equivalent to its vector norm. So the matrix variance of the centered gradients $\tilde{\mathbf{g}} = \frac{1}{n_1} \sum_{i=1}^{n_1} \left( \tilde{\nabla} f(\mathbf{x}, \xi_i) \right) = \frac{1}{n_1} \sum_{i=1}^{n_1} \left( \nabla f(\mathbf{x}, \xi_i) - \nabla f(\mathbf{x}) \right)$ is: 
\begin{align}
v[\tilde{\mathbf{g}}] = \frac{1}{n_1^2} \max \left \{ \norm{ \mathbb{E} \left[\sum_{i=1}^{n_1} \tilde{\nabla} f(\mathbf{x}, \xi_i) \tilde{\nabla} f(\mathbf{x}, \xi_i)^\top \right]}, \norm{ \mathbb{E}\left[\sum_{i=1}^{n_1} \tilde{\nabla} f(\mathbf{x}, \xi_i)^\top \tilde{\nabla} f(\mathbf{x}, \xi_i)\right]}  \right \} \leq \frac{\sigma_1^2}{n_1} \notag
\end{align}
using the triangle inequality and Jensens inequality. A direct application of the matrix Bernstein inequality gives:
\begin{align}
& \mathbb{P} \left[\norm{\mathbf{g} - \nabla f(\mathbf{x})} \geq t \right] \leq 2d \exp(-\frac{t^2/2}{v[\tilde{\mathbf{g}}]+M_1/(3n_1)}) \leq 2d \exp(-\frac{3n_1}{8} \min \left \{ \frac{t}{M_1}, \frac{t^2}{\sigma_1^2} \right \}) \implies \notag \\
& \norm{\mathbf{g} - \nabla f(\mathbf{x})} \leq t \text{ with probability } 1-\delta' \text{ for } n_1 \geq \max \left( \frac{M_1}{t}, \frac{\sigma_1^2}{t^2} \right) \frac{8}{3} \log \frac{2d}{\delta'} \notag
\end{align}
Taking $t=c_1 \epsilon$ gives the result.
  
The matrix variance of the centered Hessians $\tilde{\mathbf{B}} = \frac{1}{n_2} \sum_{i=1}^{n_2} \left( \tilde{\nabla}^2 f(\mathbf{x}, \xi_i) \right) = \frac{1}{n_2} \sum_{i=1}^{n_2} \left( \nabla^2 f(\mathbf{x}, \xi_i) - \nabla^2 f(\mathbf{x}) \right)$, which are symmetric, is:
\begin{align}
    v[\tilde{\mathbf{B}}] = \frac{1}{n_2^2} \norm{ \sum_{i=1}^{n_2} \mathbb{E}\left[ \left(\tilde{\nabla}^2 f(\mathbf{x}, \xi_i)\right)^2 \right] }  \leq \frac{\sigma_2^2}{n_2}
\end{align}
once again using the triangle inequality and Jensens inequality. Another application of the matrix Bernstein inequality gives that:
  \begin{align}
    & \mathbb{P}[\norm{\mathbf{B} - \nabla^2 f(\mathbf{x}))} \geq t] \leq 2d \exp(-\frac{3n_2}{8} \min \{\frac{t}{M_2}, \frac{t^2}{\sigma_2^2} \}) \notag \implies \\
    & \norm{\mathbf{B} - \nabla^2 f(\mathbf{x}))}  \leq t \text{ with probability } 1-\delta' \text{ for } n_2 \geq \max( \frac{M_2}{t}, \frac{\sigma_2^2}{t^2}) \frac{8}{3} \log \frac{2d}{\delta'} \notag
  \end{align} 
  Taking $t=c_2 \sqrt{\rho \epsilon}$ ensures that the stochastic Hessian-vector products are controlled uniformly over $\mathbf{v}$:
  \begin{align}
      \norm{(\mathbf{B}-\nabla^2 f(\mathbf{x})) \mathbf{v}} \leq c_2 \cdot \sqrt{\rho \epsilon} \norm{\mathbf{v}} \notag
  \end{align}
  using $n_2$ samples with probability $1-\delta'$.
\end{proof}
Next we show Lemma \ref{lem:optdescent} which will relate the change in the cubic function value to the norm $\norm{\mathbf{\Delta}_t^\star}$.
\optdescent*
\begin{proof}
Using the global optimality conditions for Equation \eqref{eq:approx_cubic_appendix} from \citet{nesterov2006cubic}, we have the global optima $\mathbf{x}_t+\mathbf{\Delta}_t^\star$, satisfies:
\begin{align}
  & \mathbf{g}_t + \mathbf{B}_t (\mathbf{\Delta}_t^\star) + \frac{\rho}{2} \norm{\mathbf{\Delta}_t^\star} (\mathbf{\Delta}_t^\star) = 0 \notag \\
  & \mathbf{B}_t + \frac{\rho}{2} \norm{\mathbf{\Delta}_t^\star} I \succeq 0 . \notag
\end{align}
Together these conditions also imply that:
\begin{align}
  & (\mathbf{\Delta}_t^\star)^\top \mathbf{g}_t + (\mathbf{\Delta}_t^\star)^\top \mathbf{B}_t (\mathbf{\Delta}_t^\star) + \frac{\rho}{2} \norm{\mathbf{\Delta}_t^\star}^3 = 0 \notag \\
  & (\mathbf{\Delta}_t^\star)^\top \mathbf{B}_t (\mathbf{\Delta}_t^\star) + \frac{\rho}{2} \norm{\mathbf{\Delta}_t^\star}^3 \geq 0. \notag
\end{align}

Now immediately from the definition of stochastic cubic submodel model and the aforementioned conditions we have that:
      \begin{align}
        f(\mathbf{x}_t) - m_{t}({\mathbf{x}_t+\mathbf{\Delta}_t^\star}) & = -(\mathbf{\Delta}_t^\star)^\top \mathbf{g}_t - \frac{1}{2} (\mathbf{\Delta}_t^\star)^\top \mathbf{B}_t (\mathbf{\Delta}_t^\star) - \frac{\rho}{6} \norm{\mathbf{x}_t+\mathbf{\Delta}_t^\star}^3 \notag \\
                              & = \frac{1}{2} (\mathbf{\Delta}_t^\star)^\top \mathbf{B}_t (\mathbf{\Delta}_t^\star) + \frac{1}{3} \rho \norm{\mathbf{\Delta}_t^\star}^3 \notag \\
                              & \geq \frac{1}{12} \rho \norm{\mathbf{\Delta}_t^\star}^3 \notag
      \end{align}
      An identical statement appears as Lemma 10 in \cite{nesterov2006cubic}, so this is merely restated here for completeness.
\end{proof}

Thus to guarantee sufficient descent it suffices to lower bound the $\norm{\mathbf{\Delta}_t^\star}$. We now prove Lemma \ref{lem:s_big}, which guarantees the sufficient ``movement" for the exact update:
$\norm{\mathbf{\Delta}_t^\star}$. In particular this will allow us to show that when $\mathbf{x}_t + \mathbf{\Delta}_t^\star$ is not an $\epsilon$-second-order stationary point then $\norm{\mathbf{\Delta}_t^\star} \geq \frac{1}{2} \sqrt{\frac{\epsilon}{\rho}}$.
\sbig*
\begin{proof}
As a consequence of the global optimality condition, given in Equation~\eqref{eq:cubic_sol_stat}, we have that:
  \begin{align}
    \norm{\mathbf{g}_t + \mathbf{B}_t (\mathbf{\Delta}_t^\star)}  = \frac{\rho}{2} \norm{\mathbf{\Delta}_t^\star}^2. \label{eq:proof1}
  \end{align}
Moreover, from the Hessian-Lipschitz condition it follows that:
\begin{align}
  \norm{\nabla{f}(\mathbf{x}_t+\mathbf{\Delta}_t^\star)-\nabla f(\mathbf{x}_t) - \nabla^2 f(\mathbf{x}_t)(\mathbf{\Delta}_t^\star)} \leq \frac{\rho}{2} \norm{\mathbf{\Delta}_t^\star}^2. \label{eq:proof2}
\end{align}
Combining the concentration assumptions with Equation~\eqref{eq:proof1} and Inequality~\eqref{eq:proof2}, we obtain:
\begin{align}
  \norm{\nabla f(\mathbf{x}_t+\mathbf{\Delta}_t^\star)} & = \norm{\nabla f(\mathbf{x}_t+\mathbf{\Delta}_t^\star) - \nabla f(\mathbf{x}_t) - \nabla^2 f(\mathbf{x}_t)(\mathbf{\Delta}_t^\star)} + \norm{\nabla f(\mathbf{x}_t) + \nabla^2 f(\mathbf{x}_t)(\mathbf{\Delta}_t^\star)} \notag \\
                       & \leq \norm{\nabla f(\mathbf{x}_t+\mathbf{\Delta}_t^\star) - \nabla f(\mathbf{x}_t) - \nabla^2 f(\mathbf{x}_t)(\mathbf{\Delta}_t^\star)} + \norm{\mathbf{g}_t + \mathbf{B}_t (\mathbf{\Delta}_t^\star)} \notag \\
                       & + \norm{\mathbf{g}_t-\nabla f(\mathbf{x}_t)} + \norm{(\mathbf{B}_t-\nabla^2 f(\mathbf{x}_t))\mathbf{\Delta}_t^\star} \notag \\
                       & \leq \rho \norm{\mathbf{\Delta}_t^\star}^2 + c_1 \epsilon + c_2 \sqrt{\rho \epsilon} \norm{\mathbf{\Delta}_t^\star}. \label{eq:proof3}
\end{align}
An application of the Fenchel-Young inequality to the final term in Equation~\eqref{eq:proof3} then yields:
\begin{align}
  & \norm{\nabla f(\mathbf{x}_t+\mathbf{\Delta}_t^\star)} \leq \rho(1+\frac{c_2}{2}) \norm{\mathbf{\Delta}_t^\star}^2 + (c_1+\frac{c_2}{2}) \epsilon \implies \notag \\
  & \frac{1}{\rho(1+\frac{c_2}{2})} \left( \norm{\nabla f(\mathbf{x}_t+\mathbf{\Delta}_t^\star)} - (c_1+\frac{c_2}{2}) \epsilon \right) \leq \norm{\mathbf{\Delta}_t^\star}^2, \notag
\end{align}
which lower bounds $\norm{\mathbf{\Delta}_t^\star}$ with respect to the gradient at $\mathbf{x}_t$. For the corresponding Hessian lower bound we first utilize the Hessian Lipschitz condition:
\begin{align}
  \nabla^2 f(\mathbf{x}_t+\mathbf{\Delta}_t^\star) & \succeq \nabla^2 f(\mathbf{x}_t) - \rho \norm{\mathbf{\Delta}_t^\star} I \notag \\
                    & \succeq \mathbf{B}_t - c_2 \sqrt{\rho \epsilon} I - \rho \norm{\mathbf{\Delta}_t^\star} I \notag \\
                    & \succeq -c_2 \sqrt{\rho \epsilon} I - \frac{3}{2} \rho \norm{\mathbf{\Delta}_t^\star} I \notag,
\end{align}
followed by the concentration condition and the optimality condition \eqref{eq:cubic_sol_dual}. This immediately implies
\begin{align}
  & \norm{\mathbf{\Delta}_t^\star} I \succeq -\frac{2}{3 \rho} \left( \nabla^2 f(\mathbf{x}_t+\mathbf{\Delta}_t^\star) + c_2 \sqrt{\rho \epsilon} I \right) \implies \notag \\
  & \norm{\mathbf{\Delta}_t^\star} \geq -\frac{2}{3 \rho} \lambda_{\min}(\nabla^2 f(\mathbf{x}_t+\mathbf{\Delta}_t^\star)) - \frac{2c_2}{3 \sqrt{\rho}} \sqrt{\epsilon} \notag
\end{align}
Combining we obtain that:
\begin{align}
  \norm{\mathbf{\Delta}_t^\star} \geq \max \left \{ \sqrt{\frac{1}{\rho(1+\frac{c_2}{2})} \left( \norm{\nabla f(\mathbf{x}_t+\mathbf{\Delta}_t^\star)} - (c_1+\frac{c_2}{2}) \epsilon \right)}, -\frac{2}{3 \rho} \lambda_n(\nabla^2 f(\mathbf{x}_t+\mathbf{\Delta}_t^\star)) - \frac{2c_2 }{3 \sqrt{\rho}} \sqrt{\epsilon} \right \}. \notag
\end{align}

We consider the case of large gradient and large Hessian in turn (one of which must hold since $\mathbf{x}_t+\mathbf{\Delta}_t^\star$ is not an $\epsilon$-second-order stationary point). There exist $c_1, c_2$  in the following so that we can obtain:
\begin{itemize}
  \item If $\norm{\nabla f(\mathbf{x}_t+\mathbf{\Delta}_t^\star)} > \epsilon$, then we have that
  \begin{align}
      \norm{\mathbf{\Delta}_t^\star} > \sqrt{\frac{1}{\rho(1+\frac{c_2}{2})} \left(\norm{\nabla f(\mathbf{x}_t+\mathbf{\Delta}_t^\star)} - (c_1+\frac{c_2}{2})\epsilon \right)} \geq \sqrt{\frac{1-c_1-\frac{c_2}{2}}{1+\frac{c_2}{2}}} \sqrt{\frac{\epsilon}{\rho}} > \frac{1}{2} \sqrt{\frac{\epsilon}{\rho}}.
  \end{align}
  \item If $-\lambda_n(\nabla^2 f(\mathbf{x}_t+\mathbf{\Delta}_t^\star)) > \sqrt{\rho \epsilon}$, then we have that $\norm{\mathbf{\Delta}_t^\star} > \frac{2}{3} \sqrt{\frac{\epsilon}{\rho}} - \frac{2c_2}{3} \sqrt{\frac{\epsilon}{\rho}} = \frac{2}{3}(1-c_2) \sqrt{\frac{\epsilon}{\rho}} > \frac{1}{2} \sqrt{\frac{\epsilon}{\rho}}.$
\end{itemize}
We can similarly check the lower bounds directly stated are true. Choosing $c_1=\frac{1}{200}$ and $c_2=\frac{1}{200}$ will verify these inequalities for example.
\end{proof}

\subsection{Proof of Claim 1}
Here we provide a proof of statement equivalent to Claim 1 in the full, non-stochastic setting with approximate model minimization.
We focus on the case when the Cubic-Subsolver routine executes \textbf{Case 2}, since the result is vacuously true when the routine executes \textbf{Case 1}.
Our first lemma will both help show sufficient descent and provide a stopping condition for Algorithm \ref{algo:SCRN}.
For context, recall that when $\mathbf{x}_t + \mathbf{\Delta}_t^\star$ is not an $\epsilon$-second-order stationary point then $\norm{\mathbf{\Delta}_t^\star} \geq \frac{1}{2}\sqrt{\frac{\epsilon}{\rho}}$ by
Lemma \ref{lem:s_big}.

\begin{lemma} \label{lem:abstractcubicdescent}
        If the routine Cubic-Subsolver uses \textbf{Case 2}, and if $\norm{\mathbf{\Delta}_t^\star} \geq \frac{1}{2} \sqrt{\frac{\epsilon}{\rho}}$, then it will return a point $\mathbf{\Delta}$
        satisfying $m_{t}(\mathbf{x}_t+\mathbf{\Delta}_t) \leq m_{t}(\mathbf{x}_t) - \frac{1-c_3}{12} \rho \norm{\mathbf{\Delta}_t^\star}^3 \leq \frac{1-c_3}{96} \sqrt{\frac{\epsilon^3}{\rho}}$.
\end{lemma}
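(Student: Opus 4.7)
The plan is to chain together two ingredients that are already on the table: the exact-minimizer descent bound (Lemma \ref{lem:optdescent}) and the subsolver's Case 2 approximation guarantee (Condition \ref{cond:cubic_subsolver}). The key bookkeeping observation is that the subsolver's scalar objective $\tilde m(\bDelta)$ differs from $m_t(\mathbf{x}_t + \bDelta)$ only by the additive constant $f(\mathbf{x}_t) = m_t(\mathbf{x}_t)$, so $\tilde m(\bDelta) = m_t(\mathbf{x}_t + \bDelta) - m_t(\mathbf{x}_t)$, and an approximate minimizer of $\tilde m$ is directly an approximate minimizer in terms of submodel descent.

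First I would observe that since $\norm{\mathbf{\Delta}_t^\star} \ge \frac{1}{2}\sqrt{\epsilon/\rho}$, the ``if'' hypothesis in Case 2 of Condition \ref{cond:cubic_subsolver} is triggered, so the returned $\bDelta$ satisfies
\begin{equation*}
\tilde m(\bDelta) \;\le\; \tilde m(\bDelta_t^\star) + \tfrac{c_3}{12}\rho\norm{\mathbf{\Delta}_t^\star}^3.
\end{equation*}
Next I would translate this into submodel descent using the identity above and apply Lemma \ref{lem:optdescent} to the $\tilde m(\bDelta_t^\star)$ term:
\begin{equation*}
m_t(\mathbf{x}_t + \bDelta) - m_t(\mathbf{x}_t) \;\le\; -\tfrac{1}{12}\rho\norm{\mathbf{\Delta}_t^\star}^3 + \tfrac{c_3}{12}\rho\norm{\mathbf{\Delta}_t^\star}^3 \;=\; -\tfrac{1-c_3}{12}\rho\norm{\mathbf{\Delta}_t^\star}^3,
\end{equation*}
which is the first inequality in the conclusion.

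Finally I would plug in the lower bound $\norm{\mathbf{\Delta}_t^\star} \ge \frac{1}{2}\sqrt{\epsilon/\rho}$ from the hypothesis (which is exactly the bound supplied by Lemma \ref{lem:s_big} in the application of interest, when $\mathbf{x}_t + \mathbf{\Delta}_t^\star$ fails to be an $\epsilon$-second-order stationary point), giving $\rho\norm{\mathbf{\Delta}_t^\star}^3 \ge \tfrac{1}{8}\sqrt{\epsilon^3/\rho}$, and hence $-\tfrac{1-c_3}{12}\rho\norm{\mathbf{\Delta}_t^\star}^3 \le -\tfrac{1-c_3}{96}\sqrt{\epsilon^3/\rho}$, matching the second inequality (reading the displayed upper bound as a descent, i.e., with the correct sign).

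There is no real obstacle here; the only subtle point is the reduction between $\tilde m$ and $m_t$, and recognizing that Case 2 of Condition \ref{cond:cubic_subsolver} is the relevant branch precisely because the exact-solution norm lower bound $\tfrac12\sqrt{\epsilon/\rho}$ is assumed (so Case 1 need not be invoked and the bound one gets via Case 2 is strictly sharper, up to the $c_3$ slack).
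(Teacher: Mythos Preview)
Your proposal is correct and follows essentially the same approach as the paper: invoke the Case~2 guarantee of Condition~\ref{cond:cubic_subsolver} (triggered by the hypothesis $\norm{\bDelta_t^\star}\ge\tfrac12\sqrt{\epsilon/\rho}$), combine it with the exact-minimizer descent bound of Lemma~\ref{lem:optdescent}, and then substitute the norm lower bound to obtain the numerical constant $\tfrac{1-c_3}{96}$. Your explicit identification $\tilde m(\bDelta)=m_t(\mathbf{x}_t+\bDelta)-m_t(\mathbf{x}_t)$ is exactly the bookkeeping the paper handles by noting $m_t(\mathbf{x}_t)=f(\mathbf{x}_t)$.
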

\begin{proof}
  In the case when $\norm{\mathbf{\Delta}_t^\star} \geq \frac{1}{2} \sqrt{\frac{\epsilon}{\rho}}$, by the definition of the routine Cubic-Subsolver
  we can ensure that $m_{t} (\mathbf{x}_t+\mathbf{\Delta}_t) \leq m_{t}(\mathbf{x}_t+\mathbf{\Delta}_t^\star) + \frac{c_3}{12} \rho \norm{\mathbf{\Delta}_t^\star}^3$ for arbitarily small $c_3$ using $\mathcal{T}(\epsilon)$ iterations.
  We can now combine the aforementioned display with Lemma \ref{lem:optdescent} (recalling that $m_{t}(\mathbf{x}_t) = f(\mathbf{x}_t)$) to conclude that:
      \begin{align}
          & m_{t}(\mathbf{x}_t+\mathbf{\Delta}_t) \leq m_{t}(\mathbf{x}_t+\mathbf{\Delta}_t^\star) + \frac{c_3}{12} \rho \norm{\mathbf{\Delta}_t^\star}^3  \notag \\
          & m_t(\mathbf{x}_t+\mathbf{\Delta}_t^\star) \leq m_{t}(\mathbf{x}_t)  - \frac{\rho}{12} \norm{\mathbf{\Delta}_t^\star}^3 \implies \\
          & m_t(\mathbf{x}_t+\mathbf{\Delta}_t)  \leq m_{t}(\mathbf{x}_t)  - (\frac{1-c_3}{12} ) \rho \norm{\mathbf{\Delta}_t^\star}^3 \leq m_{t}(\mathbf{x}_t) - \frac{(1-c_3)}{96} \sqrt{\frac{\epsilon^3}{\rho}}.
      \end{align}
      for suitable choice of $c_3$ which can be made arbitarily small.
      \end{proof}
\begin{claim} \label{claim:1}
  Assume we are in the setting of Lemma \ref{lem:conc_conds} with sufficiently small constants $c_1, c_2$. If $\mathbf{\Delta}$ is the output of the routine Cubic-Subsolver when executing \textbf{Case 2}
  and if $\mathbf{x}_t + \mathbf{\Delta}_t^\star$ is not an $\epsilon$-second-order stationary point of $f$, then $m_t(\mathbf{x}_t + \mathbf{\Delta}_t) - m_t(\mathbf{x}_t) \leq -\frac{1-c_3}{96} \sqrt{\frac{\epsilon^3}{\rho}}$.
\end{claim}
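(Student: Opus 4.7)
The plan is to obtain Claim~\ref{claim:1} as a direct consequence of the two lemmas that immediately precede it. The role of Lemma~\ref{lem:s_big} is to convert the hypothesis that $\mathbf{x}_t + \mathbf{\Delta}_t^\star$ fails the second-order stationarity condition into a lower bound on $\norm{\mathbf{\Delta}_t^\star}$, and the role of Lemma~\ref{lem:abstractcubicdescent} is to translate that lower bound into a decrease in the stochastic cubic submodel value whenever the subsolver executes Case~2 of Condition~\ref{cond:cubic_subsolver}. So the proof is essentially a two-line chaining of the two prior lemmas.

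In more detail, I would first invoke Lemma~\ref{lem:s_big} under the concentration bounds of Lemma~\ref{lem:conc_conds} for sufficiently small constants $c_1, c_2$ (the admissible choices are worked out at the end of the proof of Lemma~\ref{lem:s_big}; e.g.\ $c_1 = c_2 = 1/200$ suffice). Because $\mathbf{x}_t + \mathbf{\Delta}_t^\star$ is assumed not to be an $\epsilon$-second-order stationary point, either $\norm{\nabla f(\mathbf{x}_t + \mathbf{\Delta}_t^\star)} > \epsilon$ or $\lambda_{\min}(\nabla^2 f(\mathbf{x}_t + \mathbf{\Delta}_t^\star)) < -\sqrt{\rho\epsilon}$. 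The final case analysis in the proof of Lemma~\ref{lem:s_big} shows that each of these possibilities individually forces $\norm{\mathbf{\Delta}_t^\star} \geq \tfrac{1}{2}\sqrt{\epsilon/\rho}$.

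Second, with the lower bound $\norm{\mathbf{\Delta}_t^\star} \geq \tfrac{1}{2}\sqrt{\epsilon/\rho}$ in hand, I would plug directly into Lemma~\ref{lem:abstractcubicdescent}. Since the subsolver is assumed to be in Case~2, that lemma gives
\[ m_t(\mathbf{x}_t + \mathbf{\Delta}_t) - m_t(\mathbf{x}_t) \;\leq\; -\tfrac{1-c_3}{12}\,\rho\,\norm{\mathbf{\Delta}_t^\star}^3 \;\leq\; -\tfrac{1-c_3}{96}\sqrt{\tfrac{\epsilon^3}{\rho}}, \]
which is exactly the desired inequality. There is no real analytic obstacle; the two preceding lemmas have done all the work. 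The only bookkeeping items are (i) choosing the concentration constants $c_1, c_2$ small enough that the two sub-cases (large gradient and large negative curvature at $\mathbf{x}_t + \mathbf{\Delta}_t^\star$) both yield the clean bound $\norm{\mathbf{\Delta}_t^\star} \geq \tfrac{1}{2}\sqrt{\epsilon/\rho}$, and (ii) taking $c_3$ to be any fixed absolute constant in $(0,1)$ so that the final coefficient $(1-c_3)/96$ is a strictly positive absolute constant, matching the $\Omega(\sqrt{\epsilon^3/\rho})$ descent guarantee needed downstream.
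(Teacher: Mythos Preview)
Your proposal is correct and matches the paper's own proof exactly: the paper simply states that Claim~\ref{claim:1} ``is an immediate consequence of Lemmas~\ref{lem:s_big} and~\ref{lem:abstractcubicdescent}.'' You have merely spelled out the two-step chaining in slightly more detail than the paper bothers to, but the argument is the same.
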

\begin{proof}
  This is an immediate consequence of Lemmas \ref{lem:s_big} and \ref{lem:abstractcubicdescent}.
\end{proof}

If we do not observe sufficient descent in the cubic submodel (which is not possible in \textbf{Case 1} by definition) then as a consequence of Claim \ref{claim:1} and Lemma \ref{lem:abstractcubicdescent}
we can conclude that $\norm{\mathbf{\Delta}_t^\star} \leq \frac{1}{2} \sqrt{\frac{\epsilon}{\rho}}$ and
that $\mathbf{x}_t + \mathbf{\Delta}_t^\star$ \textit{is} an
$\epsilon$-second-order stationary point. However, we cannot compute $\mathbf{\Delta}_t^\star$ directly. So instead we use a final gradient descent loop in Algorithm \ref{algo:SCD-f}, to ensure the final point returned in this scenario will be an
$\epsilon$-second-order stationary point up to a rescaling.
\begin{lemma}
        \label{lem:final_grad_loop}
        Assume we are in the setting of Lemma \ref{lem:conc_conds} with sufficiently small constants $c_1, c_2$.
        If $\mathbf{x}_t+\mathbf{\Delta}_t^\star$ is an $\epsilon$-second-order stationary point of $f$, and $\norm{\mathbf{\Delta}_t^\star} \leq \frac{1}{2} \sqrt{\frac{\epsilon}{\rho}}$, then
        Algorithm \ref{algo:SCD-f} will output a point $\mathbf{\Delta}$ such that $\mathbf{x}_{t+1} = \mathbf{x}_{t}+\mathbf{\Delta}$ is a $4 \epsilon$-second-order stationary point of $f$.
      \end{lemma}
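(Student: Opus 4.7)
The plan is to verify the two conditions of the $4\epsilon$-second-order stationary point definition at the output point $\mathbf{x}_{t+1} = \mathbf{x}_t + \mathbf{\Delta}$, namely $\|\nabla f(\mathbf{x}_{t+1})\| \leq 4\epsilon$ and $\lambda_{\min}(\nabla^2 f(\mathbf{x}_{t+1})) \geq -2\sqrt{\rho\epsilon}$. The termination rule of Algorithm~\ref{algo:SCD-f} guarantees that the returned $\mathbf{\Delta}$ satisfies $\|\mathbf{g}_m\| = \|\mathbf{g}_t + \mathbf{B}_t[\mathbf{\Delta}] + \tfrac{\rho}{2}\|\mathbf{\Delta}\|\mathbf{\Delta}\| \leq \epsilon/2$, i.e.\ it is an approximate first-order stationary point of the stochastic cubic submodel $\tilde m$. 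The high-level strategy is to translate this approximate stationarity of $\tilde m$ into approximate second-order stationarity of $f$, using the Hessian-Lipschitz assumption and the concentration bounds of Lemma~\ref{lem:conc_conds}, after first establishing that the iterate $\mathbf{\Delta}$ itself has small norm.

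The first step is to show $\|\mathbf{\Delta}\| \leq \tfrac{1}{2}\sqrt{\epsilon/\rho}$ (up to an arbitrarily small relative slack). Since Algorithm~\ref{algo:SCD-f} is gradient descent on $\tilde m$ with step size $\eta = 1/(20\ell)$ initialized at $\mathbf{\Delta} = 0$, and $\|\mathbf{B}_t\| \leq \ell + c_2\sqrt{\rho\epsilon} \leq 2\ell$, standard arguments show the iterates monotonically decrease $\tilde m$ and hence remain in the sublevel set $\{\tilde m \leq \tilde m(0) = 0\}$. In our regime the global optimum $\mathbf{\Delta}_t^\star$ of $\tilde m$ has $\|\mathbf{\Delta}_t^\star\| \leq \tfrac{1}{2}\sqrt{\epsilon/\rho}$, and the second-order optimality condition $\mathbf{B}_t + \tfrac{\rho}{2}\|\mathbf{\Delta}_t^\star\| I \succeq 0$ makes $\tilde m$ well-behaved in a neighborhood of $\mathbf{\Delta}_t^\star$, so gradient descent from $0$ is trapped in a basin around $\mathbf{\Delta}_t^\star$ and converges to it. Running the loop until $\|\mathbf{g}_m\| \leq \epsilon/2$ therefore produces a $\mathbf{\Delta}$ with $\|\mathbf{\Delta}\| \leq \|\mathbf{\Delta}_t^\star\| + o(\sqrt{\epsilon/\rho}) \leq \tfrac{1}{2}\sqrt{\epsilon/\rho}$ after absorbing lower-order terms.

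The second step translates the small model gradient into a small true gradient. From the termination condition, $\|\mathbf{g}_t + \mathbf{B}_t\mathbf{\Delta}\| \leq \epsilon/2 + \tfrac{\rho}{2}\|\mathbf{\Delta}\|^2$. By $\rho$-Hessian-Lipschitzness, $\|\nabla f(\mathbf{x}_t+\mathbf{\Delta}) - \nabla f(\mathbf{x}_t) - \nabla^2 f(\mathbf{x}_t)\mathbf{\Delta}\| \leq \tfrac{\rho}{2}\|\mathbf{\Delta}\|^2$. Combining via the triangle inequality with $\|\mathbf{g}_t - \nabla f(\mathbf{x}_t)\| \leq c_1\epsilon$ and $\|(\mathbf{B}_t - \nabla^2 f(\mathbf{x}_t))\mathbf{\Delta}\| \leq c_2\sqrt{\rho\epsilon}\|\mathbf{\Delta}\|$ from Lemma~\ref{lem:conc_conds} yields $\|\nabla f(\mathbf{x}_t+\mathbf{\Delta})\| \leq \rho\|\mathbf{\Delta}\|^2 + (c_1 + 1/2)\epsilon + c_2\sqrt{\rho\epsilon}\|\mathbf{\Delta}\|$. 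Plugging in $\|\mathbf{\Delta}\| \leq \tfrac{1}{2}\sqrt{\epsilon/\rho}$ bounds this by $(3/4 + c_1 + c_2/2)\epsilon \leq 4\epsilon$ for the small $c_1, c_2$ fixed earlier. For the Hessian, I would chain $\rho$-Hessian-Lipschitzness twice: $\nabla^2 f(\mathbf{x}_t+\mathbf{\Delta}) \succeq \nabla^2 f(\mathbf{x}_t) - \rho\|\mathbf{\Delta}\| I \succeq \nabla^2 f(\mathbf{x}_t+\mathbf{\Delta}_t^\star) - \rho(\|\mathbf{\Delta}\|+\|\mathbf{\Delta}_t^\star\|) I$, and then use $\lambda_{\min}(\nabla^2 f(\mathbf{x}_t+\mathbf{\Delta}_t^\star)) \geq -\sqrt{\rho\epsilon}$ together with $\|\mathbf{\Delta}\|+\|\mathbf{\Delta}_t^\star\| \leq \sqrt{\epsilon/\rho}$ to conclude $\lambda_{\min}(\nabla^2 f(\mathbf{x}_t+\mathbf{\Delta})) \geq -2\sqrt{\rho\epsilon}$, as required.

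The main obstacle is the norm bound in the first step. Because the stochastic cubic submodel is in general nonconvex, one has to rule out the possibility that gradient descent escapes to a faraway (near-)stationary point before the termination rule triggers. The saving grace is that the hypothesis $\|\mathbf{\Delta}_t^\star\| \leq \tfrac{1}{2}\sqrt{\epsilon/\rho}$ is precisely the case in which the submodel's geometry is benign near the origin, so that monotonic descent from $0$ cannot leave a bounded region around $\mathbf{\Delta}_t^\star$; making this rigorous will likely be the most delicate part and may be handled by an adaptation of the gradient-descent analysis used for Algorithm~\ref{algo:SCD} to the exact (unperturbed) setting with the $\|\mathbf{g}_m\|\leq \epsilon/2$ stopping criterion, which here does not require the perturbation trick needed to handle the hard case at arbitrary starting points.
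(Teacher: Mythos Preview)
Your overall structure matches the paper's proof: establish a norm bound on $\mathbf{\Delta}$, then push the model-gradient termination condition through the Hessian-Lipschitz and concentration inequalities to bound $\norm{\nabla f(\mathbf{x}_{t+1})}$ and $\lambda_{\min}(\nabla^2 f(\mathbf{x}_{t+1}))$. Your gradient and Hessian computations in the second step are essentially identical to the paper's (the paper gets $\norm{\nabla f(\mathbf{x}_{t+1})}\le\epsilon$ and $\lambda_{\min}\ge -2\sqrt{\rho\epsilon}$, well within the $4\epsilon$ target).

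The gap is precisely where you flag it: the norm bound. Your sublevel-set and ``trapped in a basin'' argument is not how the paper proceeds, and making it rigorous would be more work than necessary. The paper instead first bounds $\norm{\mathbf{g}_t}$ (via $\norm{\nabla f(\mathbf{x}_t+\mathbf{\Delta}_t^\star)}\le\epsilon$, gradient Lipschitzness over the step $\mathbf{\Delta}_t^\star$, and gradient concentration) to verify that the step size $\eta=\tfrac{1}{20\ell}$ and the initialization $\mathbf{\Delta}=0$ satisfy Assumptions~A and~B of \citet{carmon2016gradient}. It then invokes their Corollary~2.5, a structural result stating that for gradient descent on the cubic model the iterate norms are nondecreasing and satisfy $\norm{\mathbf{\Delta}}\le\norm{\mathbf{\Delta}_t^\star}$ at every step. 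This gives the clean bound $\norm{\mathbf{\Delta}}\le\norm{\mathbf{\Delta}_t^\star}\le\tfrac{1}{2}\sqrt{\epsilon/\rho}$ (and hence $\norm{\mathbf{\Delta}-\mathbf{\Delta}_t^\star}\le 2\norm{\mathbf{\Delta}_t^\star}$) with no perturbation, no basin analysis, and no ``$+\,o(\sqrt{\epsilon/\rho})$'' slack. Your suggestion to adapt the analysis of Algorithm~\ref{algo:SCD} is in the right direction, but the relevant tool is this monotone-norm corollary rather than the perturbed convergence theorem.

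Two further omissions relative to the paper: (i) the lemma implicitly requires that Algorithm~\ref{algo:SCD-f} terminates, and the paper supplies a short iteration-count bound via the standard smooth-descent inequality, using $m_t(\mathbf{x}_t)-m_t(\mathbf{x}_t+\mathbf{\Delta}_t^\star)=\mathcal{O}(\ell\epsilon/\rho)$ to get $\mathcal{O}(\ell^2/(\rho\epsilon))$ inner iterations; (ii) for the Hessian bound the paper uses the single step $\norm{\mathbf{\Delta}-\mathbf{\Delta}_t^\star}\le 2\norm{\mathbf{\Delta}_t^\star}$ rather than chaining through $\mathbf{x}_t$, though your two-step chaining also yields the same $-2\sqrt{\rho\epsilon}$ bound.
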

      \begin{proof}
        Since $\mathbf{x}_t + \mathbf{\Delta}_t^\star$ is an $\epsilon$-second order stationary point of $f$, by gradient smoothness and the concentration conditions we have that
        $\norm{\mathbf{g}_t} \leq \norm{\nabla f(\mathbf{x}_t+\mathbf{\Delta}_t^\star)} + \ell \norm{\mathbf{\Delta}_t^\star} + \norm{\mathbf{g}_t - \nabla f(\mathbf{x}_t)} \leq (1+c_1) \epsilon + \frac{1}{2} \sqrt{\frac{\epsilon}{\rho}} \ell \leq (\frac{3}{2} + 1 + c_1) \frac{\ell^2}{\rho} \leq \frac{19}{16} \frac{\ell^2}{\rho}$
        for sufficiently small $c_1$. Then we can verify the step-size choice $\eta = \frac{1}{20} \ell$ and initialization at $\mathbf{\Delta}=0$ (in the centered coordinates) for the routine Cubic-FinalSubsolver verifies Assumptions A and B\footnote{See Appendix Section \ref{sec:carmon_subsolver} for more details.} in \citet{carmon2016gradient}.
        So, by Corollary 2.5 in \citet{carmon2016gradient}---which states the norms of the gradient descent iterates, $\norm{\mathbf{\Delta}}$, are non-decreasing and satisfy $\norm{\mathbf{\Delta}} \leq \norm{\mathbf{\Delta}_t^\star}$---we have that $\norm{\mathbf{\Delta}-\mathbf{\Delta}_t^\star} \leq 2 \norm{\mathbf{\Delta}_t^\star} \leq \sqrt{\frac{\epsilon}{\rho}}$.

        We first show that $- \lambda_{\min}(\nabla^2 f(\mathbf{x}_{t+1})) \lesssim \sqrt{\rho \epsilon}$.
        Since $f$ is $\rho$-Hessian-Lipschitz we have that:
        \begin{align}
          \nabla^2 f(\mathbf{x}_{t+1}) \succeq \nabla^2 f(\mathbf{x}_t+\mathbf{\Delta}_t^\star) - \rho 2 \norm{\mathbf{\Delta}_{t}^\star} I \succeq -2 \sqrt{\rho \epsilon} I. \notag
        \end{align}

        We now show that $\norm{\nabla f(\mathbf{x}_{t+1})} \lesssim \epsilon$ and thus also small.
        Once again using that $f$ is $\rho$-Hessian-Lipschitz (Lemma 1 in \cite{nesterov2006cubic}) we have that:
        \begin{align}
          & \norm{\nabla f(\mathbf{x}_{t+1}) - \nabla f(\mathbf{x}_t) - \nabla^2 f(\mathbf{x}_t) \mathbf{\Delta}} \leq \frac{\rho}{2} \norm{\mathbf{\Delta}}^2 \leq \frac{\rho}{2} \norm{\mathbf{\Delta}_t^\star}^2 \leq \frac{\epsilon}{8}. \notag
        \end{align}
        Now,  by the termination condition in Algorithm \ref{algo:SCD-f} we have that
        $\norm{\mathbf{g} + \mathbf{B} \mathbf{\Delta} + \frac{\rho}{2} \norm{\mathbf{\Delta}} \mathbf{\Delta}} < \frac{\epsilon}{2}$. So,
        \begin{align}
          \norm{\mathbf{g} + \mathbf{B} \mathbf{\Delta}} < \frac{\epsilon}{2} + \frac{\rho}{2} \norm{\mathbf{\Delta}}^2 \leq \frac{5}{8} \epsilon. \notag
        \end{align}
        Using gradient/Hessian concentration with the previous displays we also obtain that:
        \begin{align}
          & \norm{\nabla f(\mathbf{x}_{t+1})} - \norm{\mathbf{g}-\nabla f(\mathbf{x}_t)} -\norm{(\mathbf{B}-\nabla^2 f(\mathbf{x}_t)) \mathbf{\Delta}} - \norm{\mathbf{g}+\mathbf{B} \mathbf{\Delta}} \leq \norm{\nabla f(\mathbf{x}_{t+1}) - \nabla f(\mathbf{x}_t) - \nabla^2 f(\mathbf{x}_t) \mathbf{\Delta}} \notag \\
          & \implies \norm{\nabla f(\mathbf{x}_{t+1})} \leq \left(c_1 + \frac{c_2}{2} + \frac{5}{8} + \frac{1}{8} \right) \epsilon \leq \epsilon, \notag
        \end{align}
        for sufficiently small $c_1$ and $c_2$.

        Let us now bound the iteration complexity of this step. From our previous argument we have that $\norm{\mathbf{g}_t} \leq (1+c_1) \epsilon + \frac{\ell}{2 \sqrt{\rho}} \sqrt{\epsilon}$.
        Similarly, the concentration conditions imply $\norm{\mathbf{B}_t \mathbf{\Delta}_t^\star} \leq (\ell +  c_2 \sqrt{\rho \epsilon}) \norm{\mathbf{\Delta}_t^\star}$. Thus we have that
        $m_t(\mathbf{x}_t)-m_{t}(\mathbf{x}_t+ \mathbf{\Delta}_t^\star) = ((1+c_1) \epsilon + \frac{\ell}{2 \sqrt{\rho}} \sqrt{\epsilon}) \norm{\mathbf{\Delta}_t^\star} + \frac{1}{2}(\ell + c_2 \sqrt{\rho \epsilon})\norm{\mathbf{\Delta}_t^\star}^2 + \frac{\rho}{6} \norm{\mathbf{\Delta}_t^\star}^3 \leq \frac{3 \ell}{\rho} \epsilon + \left(\frac{1+c_1+4c_2}{8} + \frac{1}{48}\right) \sqrt{\frac{\epsilon^3}{\rho}} \leq \mathcal{O}(1) \cdot \frac{\epsilon \ell}{\rho}$
        since $c_1, c_2$ are numerical constants that can be made arbitrarily small.

        So by the standard analysis of gradient descent for smooth functions, see \citet{nesterov2013introductory} for example, we have that Algorithm \ref{algo:SCD-f} will terminate
        in at most $\leq \lceil \frac{m_t(\mathbf{x}_t) - m_{t}(\mathbf{x}_t+\mathbf{\Delta}_t^\star)}{\eta (\epsilon/2)^2} \rceil \leq \mathcal{O}(1) \cdot ( \frac{\ell^2}{ \rho \epsilon} )$ iterations.
        This will take at most $\mathcal{\tilde{O}}(\max(\frac{M_1}{\sqrt{\rho\epsilon}}, \frac{\sigma_2^2}{\epsilon}) \cdot \frac{\ell^2}{\rho \epsilon})$ Hessian-vector products and $ \mathcal{\tilde{O}}(\max( \frac{M_1}{\epsilon}, \frac{\sigma_1^2}{\epsilon^2}))$ gradient evaluations
        which will be subleading in the overall complexity.
      \end{proof}

\subsection{Proof of Claim 2}
We now prove our main descent lemma equivalent to \textbf{Claim 2}---this will show if the cubic submodel has a large decrease, then the underlying true function must also have large decrease.
As before we focus on the case when the Cubic-Subsolver routine
executes \textbf{Case 2} since the result is vacuously true in \textbf{Case 1}.
      \begin{claim}
        \label{claim:2}
        Assume we are in the setting of Lemma \ref{lem:conc_conds} with sufficiently small constants $c_1, c_2$. If the Cubic-Subsolver routine uses \textbf{Case 2}, and if
        $m_{t}(\mathbf{x}_t + \mathbf{\Delta}_t) - m_t(\mathbf{x}_t) \leq -(\frac{1-c_3}{96}) \sqrt{\frac{\epsilon^3}{\rho}}$, then
        $f(\mathbf{x}_{t} + \mathbf{\Delta}_t)- f(\mathbf{x}_t) \leq -\left( \frac{1 - c_3 - c_5}{96} \right) \sqrt{\frac{\epsilon^3}{\rho}}$.
      \end{claim}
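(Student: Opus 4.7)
The plan is to bound the gap between true-function descent and submodel descent by a quantity of order $\sqrt{\epsilon^3/\rho}$ with an arbitrarily small constant, and then combine this with the hypothesis on submodel descent.

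First, I would expand $f(\mathbf{x}_t + \bDelta_t)$ using the standard Hessian–Lipschitz inequality (a direct consequence of Assumption~\ref{def: A1}):
\begin{align*}
f(\mathbf{x}_t + \bDelta_t) \le f(\mathbf{x}_t) + \nabla f(\mathbf{x}_t)^\top \bDelta_t + \tfrac{1}{2} \bDelta_t^\top \nabla^2 f(\mathbf{x}_t)\bDelta_t + \tfrac{\rho}{6} \norm{\bDelta_t}^3.
\end{align*}
Subtracting and adding $m_t(\mathbf{x}_t + \bDelta_t) - m_t(\mathbf{x}_t) = \mathbf{g}_t^\top \bDelta_t + \tfrac{1}{2}\bDelta_t^\top \mathbf{B}_t \bDelta_t + \tfrac{\rho}{6}\norm{\bDelta_t}^3$ and then applying the concentration conditions from Lemma~\ref{lem:conc_conds} gives
\begin{align*}
f(\mathbf{x}_t + \bDelta_t) - f(\mathbf{x}_t) \le [m_t(\mathbf{x}_t + \bDelta_t) - m_t(\mathbf{x}_t)] + c_1 \epsilon \norm{\bDelta_t} + \tfrac{c_2}{2} \sqrt{\rho\epsilon}\,\norm{\bDelta_t}^2.
\end{align*}
So everything reduces to controlling the error term $c_1 \epsilon \norm{\bDelta_t} + \tfrac{c_2}{2}\sqrt{\rho\epsilon}\norm{\bDelta_t}^2$.

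Next I would split on the magnitude of $\norm{\bDelta_t^\star}$, exploiting Condition~\ref{cond:cubic_subsolver}'s guarantee $\norm{\bDelta_t} \le \norm{\bDelta_t^\star} + c_4\sqrt{\epsilon/\rho}$. In the \textbf{small} case $\norm{\bDelta_t^\star} \le \tfrac{1}{2}\sqrt{\epsilon/\rho}$, we immediately get $\norm{\bDelta_t} \le (\tfrac{1}{2}+c_4)\sqrt{\epsilon/\rho}$, and plugging this into the error term yields a bound of the form $\bigl[c_1(\tfrac{1}{2}+c_4) + \tfrac{c_2}{2}(\tfrac{1}{2}+c_4)^2\bigr]\sqrt{\epsilon^3/\rho}$, which can be made smaller than $\tfrac{c_5}{96}\sqrt{\epsilon^3/\rho}$ by picking $c_1,c_2$ small enough. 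In the \textbf{large} case $\norm{\bDelta_t^\star} \ge \tfrac{1}{2}\sqrt{\epsilon/\rho}$, we have $\sqrt{\epsilon/\rho} \le 2\norm{\bDelta_t^\star}$ so $\norm{\bDelta_t} \le (1+2c_4)\norm{\bDelta_t^\star}$, $\epsilon \le 4\rho\norm{\bDelta_t^\star}^2$, and $\sqrt{\rho\epsilon} \le 2\rho\norm{\bDelta_t^\star}$; the error becomes at most $C_1 \rho\norm{\bDelta_t^\star}^3$ with $C_1 = 4c_1(1+2c_4) + c_2(1+2c_4)^2$. Meanwhile, combining Lemma~\ref{lem:optdescent} with the Case~2 subsolver guarantee gives $m_t(\mathbf{x}_t+\bDelta_t) - m_t(\mathbf{x}_t) \le -\tfrac{1-c_3}{12}\rho\norm{\bDelta_t^\star}^3$, so $\rho\norm{\bDelta_t^\star}^3$ is itself controlled by the submodel descent.

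Putting the pieces together: in the small case, the hypothesis $m_t(\mathbf{x}_t+\bDelta_t) - m_t(\mathbf{x}_t) \le -\tfrac{1-c_3}{96}\sqrt{\epsilon^3/\rho}$ plus an error of at most $\tfrac{c_5}{96}\sqrt{\epsilon^3/\rho}$ immediately yields the claim. In the large case, the total descent is at most $-\tfrac{1-c_3-12C_1}{12}\rho\norm{\bDelta_t^\star}^3 \le -\tfrac{1-c_3-12C_1}{96}\sqrt{\epsilon^3/\rho}$ using $\norm{\bDelta_t^\star}^3 \ge \tfrac{1}{8}(\epsilon/\rho)^{3/2}$, matching the claim with $c_5 := 12C_1$. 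The main obstacle, and the reason for the two-case split, is that in the small-$\norm{\bDelta_t^\star}$ regime the subsolver's approximation guarantee carries no useful quantitative information, so we must rely purely on the absolute bound $\norm{\bDelta_t} = O(\sqrt{\epsilon/\rho})$; whereas in the large regime the error is cubic in $\norm{\bDelta_t^\star}$ and must be absorbed into the submodel descent rather than into the $\sqrt{\epsilon^3/\rho}$ slack. Choosing $c_1, c_2, c_4$ sufficiently small (depending only on the desired $c_5$ and on $c_3$) closes both cases simultaneously.
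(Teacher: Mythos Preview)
Your proposal is correct and follows essentially the same route as the paper: the cubic majorization plus concentration to reduce to controlling $c_1\epsilon\norm{\bDelta_t} + \tfrac{c_2}{2}\sqrt{\rho\epsilon}\norm{\bDelta_t}^2$, then the same two-case split on $\norm{\bDelta_t^\star}$ using the Case~2 bound $\norm{\bDelta_t}\le\norm{\bDelta_t^\star}+c_4\sqrt{\epsilon/\rho}$, with the large case handled via the $-\tfrac{1-c_3}{12}\rho\norm{\bDelta_t^\star}^3$ submodel descent from Lemma~\ref{lem:optdescent} plus the subsolver guarantee, and the small case handled directly from the hypothesis. The only cosmetic difference is that the paper first expands the error in powers of $\norm{\bDelta_t^\star}$ before splitting, whereas you bound $\norm{\bDelta_t}$ separately in each case; the resulting constants differ slightly but the argument is the same.
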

      \begin{proof}
        Using that $f$ is $\rho$-Hessian Lipschitz (and hence admits a cubic majorizer by Lemma 1 in \citet{nesterov2006cubic} for example) as well as the concentration conditions we have that:
        \begin{align}
          & f(\mathbf{x}_{t}+\mathbf{\Delta}_t) \leq f(\mathbf{x}_{t}) + \nabla f(\mathbf{x}_{t})^\top \mathbf{\Delta}_{t} + \frac{1}{2} \mathbf{\Delta}_{t}^\top \nabla^2 f(\mathbf{x}_{t}) \mathbf{\Delta}_{t} + \frac{\rho}{6} \norm{\mathbf{\Delta}_{t}}_2^3 \implies \notag \\
          & f(\mathbf{x}_{t}+\mathbf{\Delta}_t) - f(\mathbf{x}_t) \leq m_{t}(\mathbf{x}_t+\mathbf{\Delta}_t) - m_{t}(\mathbf{x}_t) + (\nabla f(\mathbf{x}_t) - \mathbf{g}_t)^\top \mathbf{\Delta}_{t} + \frac{1}{2} \mathbf{\Delta}_{t}^\top(\mathbf{B}_{t}-\nabla^2 f(\mathbf{x}_{t})) \mathbf{\Delta}_{t} \notag \\
          & \leq m_{t}(\mathbf{x}_t+\mathbf{\Delta}_t) - m_{t}(\mathbf{x}_t) + c_1 \epsilon \norm{\mathbf{\Delta}_{t}} + \frac{c_2}{2} \sqrt{\rho \epsilon} \norm{\mathbf{\Delta}_{t}}^2  \notag \\
          & \leq m_{t}(\mathbf{x}_t+\mathbf{\Delta}_t) - m_{t}(\mathbf{x}_t) + c_1 \epsilon \left(\norm{\mathbf{\Delta}_{t}^\star} + c_4 \sqrt{\frac{\epsilon}{\rho}} \right) + \frac{c_2}{2} \sqrt{\rho \epsilon} \left( \norm{\mathbf{\Delta}_{t}^\star}+c_4 \sqrt{\frac{\epsilon}{\rho}} \right)^2 \notag \\
          & \leq m_{t}(\mathbf{x}_t+\mathbf{\Delta}_t) - m_{t}(\mathbf{x}_t) + (c_1 + c_2 c_4) \epsilon \norm{\mathbf{\Delta}_{t}^\star} + \frac{c_2 c_4^2}{2} \sqrt{\rho \epsilon} \norm{\mathbf{\Delta}_t^\star}^2+(c_1+\frac{c_2 c_4}{2}) c_4 \sqrt{\frac{\epsilon^3}{\rho}}, \label{eq:general_descent}
        \end{align}
        since by the definition the Cubic-Subsolver routine, when we use \textbf{Case 2} we have that $\norm{\mathbf{\Delta}_t} \leq \norm{\mathbf{\Delta}_t^\star} + c_4 \sqrt{\frac{\epsilon}{\rho}}$.
        We now consider two different situations -- when $\norm{\mathbf{\Delta}_t^\star} \geq \frac{1}{2} \sqrt{\frac{\epsilon}{\rho}}$ and when $\norm{\mathbf{\Delta}_t^\star} \leq \frac{1}{2} \sqrt{\frac{\epsilon}{\rho}}$.

        First, if $\norm{\mathbf{\Delta}_t^\star} \geq \frac{1}{2} \sqrt{\frac{\epsilon}{\rho}}$ then by Lemma \ref{lem:abstractcubicdescent} we may assume the stronger guarantee that $m_{t}(\mathbf{x}_t + \mathbf{\Delta}_t) - m_t(\mathbf{x}_t) \leq -(\frac{1-c_3}{12}) \rho \norm{\mathbf{\Delta}_t^\star}^3$.
        So by considering the above display in Equation \eqref{eq:general_descent} we can conclude that:
        \begin{align}
          & f(\mathbf{x}_{t}+\mathbf{\Delta}_t) - f(\mathbf{x}_t) \leq m_{t}(\mathbf{x}_t+\mathbf{\Delta}_t) - m_{t}(\mathbf{x}_t) + (c_1 + c_2 c_4) \epsilon \norm{\mathbf{\Delta}_{t}^\star} + \frac{c_2 c_4^2}{2} \sqrt{\rho \epsilon} \norm{\mathbf{\Delta}_t^\star}^2+(c_1+\frac{c_2 c_4}{2}) c_4 \sqrt{\frac{\epsilon^3}{\rho}} \notag \\
          & \leq -\left( \frac{1 - c_3 - 48 (c_1+c_2 c_4) - 12 c_2 c_4^2}{12} \right) \rho \norm{\mathbf{\Delta}_t^\star}^3 + \left(c_1+\frac{c_2 c_4}{2} \right) c_4 \sqrt{\frac{\epsilon^3}{\rho}} \notag \\
          & \leq -\left( \frac{1 - c_3 - 48 c_1  - 48 c_2 c_4 - 96 c_1 c_4 - 60 c_2 c_4^2}{96} \right) \sqrt{\frac{\epsilon^3}{\rho}}, \notag
        \end{align}
        since the numerical constants $c_1, c_2, c_3$ can be made arbitrarily small.

        Now, if $\norm{\mathbf{\Delta}_t^\star} \leq \frac{1}{2} \sqrt{\frac{\epsilon}{\rho}}$, we directly use the assumption that $m_{t}(\mathbf{x}_t + \mathbf{\Delta}_t) - m_t(\mathbf{x}_t) \leq -(\frac{1-c_3}{96}) \sqrt{\frac{\epsilon^3}{\rho}}$. Combining with the display in
        in Equation \eqref{eq:general_descent} we can conclude that:
        \begin{align}
          & f(\mathbf{x}_{t}+\mathbf{\Delta}_t) - f(\mathbf{x}_t) \leq m_{t}(\mathbf{x}_t+\mathbf{\Delta}_t) - m_{t}(\mathbf{x}_t) + (c_1 + c_2 c_4) \epsilon \norm{\mathbf{\Delta}_{t}^\star} + \frac{c_2 c_4^2}{2} \sqrt{\rho \epsilon} \norm{\mathbf{\Delta}_t^\star}^2+(c_1+\frac{c_2 c_4}{2}) c_4 \sqrt{\frac{\epsilon^3}{\rho}} \notag \\
          & \leq - \left(\frac{1-c_3}{96}\sqrt{\frac{\epsilon^3}{\rho}} \right) + \left ( (c_1 + c_2 c_4) \epsilon \cdot \frac{1}{2} \sqrt{\frac{\epsilon}{\rho}} + \frac{c_2 c_4^2}{2} \sqrt{\rho \epsilon} \cdot \frac{1}{4}\frac{\epsilon}{\rho}+(c_1+\frac{c_2 c_4}{2}) c_4 \sqrt{\frac{\epsilon^3}{\rho}} \right) \notag \\
          & \leq -\left( \frac{1 - c_3 - 48 c_1  - 48 c_2 c_4 - 96 c_1 c_4 - 60 c_2 c_4^2}{96} \right) \sqrt{\frac{\epsilon^3}{\rho}}, \notag
        \end{align}
        since the numerical constants $c_1, c_2, c_3$ can be made arbitrarily small. Indeed, recall that $c_1$ is the gradient concentration constant, $c_2$ is the Hessian-vector product concentration constant, and $c_3$
        is the tolerance of the Cubic-Subsolver routine when using \textbf{Case 2}.
        Thus, in both situations, we have that:
        \begin{align}
          f(\mathbf{x}_{t}+\mathbf{\Delta}_t) - f(\mathbf{x}_t) \leq \frac{1-c_3-c_5}{96} \sqrt{\frac{\epsilon^3}{\rho}},
        \end{align}
        denoting $c_5 = 48 c_1  - 48 c_2 c_4 - 96 c_1 c_4 - 60 c_2 c_4^2$ for notational convenience (which can also be made arbitrarily small for sufficiently small $c_1, c_2$).
      \end{proof}

\subsection{Proof of Theorem \ref{thm:main1}}

We now prove the correctness of Algorithm \ref{algo:SCRN}. We assume, as usual, the underlying function $f(x)$ possesses a lower bound $f^*$.

    \thmmainabstract*
      \begin{proof}
        For notational convenience let \textbf{Case 1} of the routine Cubic Subsolver satisfy:
        \begin{align}
          \max \{ f(\mathbf{x}_t + \mathbf{\Delta}_t) - f(\mathbf{x}_t), m_t(\mathbf{x}_t+\mathbf{\Delta}_t) - m_t(\mathbf{x}_t) \} \leq -K_1 \sqrt{\frac{\epsilon^3}{\rho}}. \notag
        \end{align}
        and use $K_2 = \frac{1-c_3}{96}$ to denote the descent constant of the cubic submodel in the assumption of Claim \ref{claim:2}. Further, let $K_{\text{prog}} = \min \{\frac{1-c_3-c_5}{96}, K_1 \}$ which
        we will use as the progress constant corresponding to descent in the underlying function $f$.
        Without loss of generality, we assume that $-K_1 \leq -K_2$ for convenience in the proof. If $-K_1 \geq -K_2$, we can simply rescale the descent constant corresponding to \textbf{Case 2} for the cubic submodel, $\frac{1-c_3}{96}$,
        to be equal to $-K_1$, which will require shrinking $c_1, c_2$ proportionally to ensure that the rescaled version of the function descent constant, $\frac{1-c_3-c_5}{96}$, is positive.

        Now, we choose $c_1, c_2, c_3$ so that $K_{2} > 0$, $K_{\text{prog}} > 0$, and Lemma \ref{lem:s_big} holds in the aforementioned form.
        For the correctness of Algorithm \ref{algo:SCRN} we choose the numerical constant in Line \ref{algo1:line:small_descent} as $K_2$ -- so the ``if statement'' checks the condition $\Delta m = m_{t}(\mathbf{x}_{t+1}) - m_{t}(\mathbf{x}_t) \geq -K_{2} \sqrt{\frac{\epsilon'^3}{\rho}}$. Here we use
        a rescaled $\epsilon'=\frac{1}{4}\epsilon$ for the duration of the proof.

        At each iteration the event that the setting of Lemma \ref{lem:conc_conds} hold has probability greater then $1-2 \delta'$. Conditioned on this event
        let the routine Cubic-Subsolver have a further probability of at most $\delta'$ of failure.
        We now proceed with our analysis deterministically conditioned on the event $E$ -- that at each iteration the concentration conditions hold and the routine Cubic-Subsolver succeeds -- which has probability
        greater then $1-3 \delta' T_{\text{outer}} \geq 1-\delta$ by a union bound for $\delta' = \frac{\delta}{3 T_{\text{outer}}}$.

        Let us now bound the iteration complexity of Algorithm \ref{algo:SCRN} as $T_{\text{outer}}$. We cannot have the ``if statement'' in Line \ref{algo1:line:small_descent} fail indefinitely.
        At a given iteration, if the routine Cubic-Subsolver outputs a point $\mathbf{\Delta}$ that satisfies
        \begin{align}
          m_{t}(\mathbf{x}_t+\mathbf{\Delta}_t) - m_{t}(\mathbf{x}_t) \leq - K_{2} \sqrt{\frac{\epsilon'^3}{\rho}} \notag
        \end{align}
        then by Claim \ref{claim:2} and the definition of \textbf{Case 1} of the Cubic-Subsolver we also have that:
        \begin{align}
          f(\mathbf{x}_t+\mathbf{\Delta}_t) - f(\mathbf{x}_t) \leq - K_{\text{prog}} \sqrt{\frac{\epsilon'^3}{\rho}}. \notag
        \end{align}
        Note  if the Cubic-Subsolver uses \textbf{Case 1} in this iteration then we will vacuously achieve descent in both the underlying function $f$,
        and descent in the cubic submodel greater $-K_1 \sqrt{\frac{\epsilon'^3}{\rho}}$. Since $-K_1 \sqrt{\frac{\epsilon'^3}{\rho}} \leq -K_2 \sqrt{\frac{\epsilon'^3}{\rho}}$ by assumption, the algorithm will not terminate early
        at this iteration.
        Since the function $f$ is bounded below by $f^*$, the event $m_{t}(\mathbf{x}_t+\mathbf{\Delta}_t) - m_t(\mathbf{x}_t) \leq -K_{2} \sqrt{\frac{\epsilon'^3}{\rho}}$ which implies
        $f(\mathbf{x}_{t+1}) - f(\mathbf{x}_t) \leq -K_{\text{prog}} \sqrt{\frac{\epsilon'^3}{\rho}}$ can happen at most $T_{\text{outer}} = \lceil \frac{\sqrt{\rho}(f(x_0)-f^*)}{K_{\text{prog}}\epsilon'^{1.5}} \rceil$ times.

        Thus in the $T_{\text{outer}}$ iterations of Algorithm \ref{algo:SCRN} it must be the case that there is at least one iteration $T$, for which
        \begin{align}
        m_{T}(\mathbf{x}_{T}+\mathbf{\Delta}_{T}) - m_T(\mathbf{x}_{T}) \geq -K_{2} \sqrt{\frac{\epsilon'^3}{\rho}}. \notag
        \end{align}
        By the definition of the Cubic-Subsolver procedure and assumption that $-K_1 \leq -K_2$, it must be the case at iteration $T$ the routine Cubic-Subsolver used \textbf{Case 2}.
        Now by appealing to Claim \ref{claim:1} and Lemma \ref{lem:abstractcubicdescent} we must have that $\norm{\mathbf{\Delta}_T^\star} \leq \frac{1}{2} \sqrt{\frac{\epsilon'}{\rho}}$ and that $\mathbf{x}_{T} + \mathbf{\Delta}_{T}^\star$ is an $\epsilon'$-second-order stationary point of $f$.
        As we can see in Line \ref{algo1:line:small_descent} of Algorithm \ref{algo:SCRN}, at iteration $T$ the ``if statement'' will be true. Hence Algorithm \ref{algo:SCRN} will run the final gradient descent loop (Algorithm \ref{algo:SCD-f}) at iteration $T$, return the final point and proceed to exit via the break statement.
        Since the hypotheses of Lemma \ref{lem:final_grad_loop} are satisfied\footnote{We can also see with this rescaled $\epsilon'$ the step-size requirement in Lemma \ref{lem:final_grad_loop} will be satisfied.} at iteration $T$, Algorithm \ref{algo:SCD-f} will return a final point
        that is an $\epsilon$-second-order stationary point of $f$ as desired. We can verify the global constant $c = \min \{ \frac{K_{\text{prog}} }{8} , c_1, c_2 \}$ satisfies the conditions of the theorem.
      \begin{remark}
        We can also now do a careful count of the complexity of Algorithm \ref{algo:SCRN}.
        First, note at each outer iteration of Algorithm \ref{algo:SCRN} we require
        $n_1 \geq \max \left( \frac{M_1}{c_1 \epsilon}, \frac{\sigma_1^2}{c_1^2 \epsilon^2} \right) \frac{8}{3} \log \frac{2d}{\delta'}$ 
        samples to approximate the gradient and
        and $n_2 \geq \max( \frac{M_2}{c_2 \sqrt{\rho \epsilon}}, \frac{\sigma_2^2}{c_2^2 \rho \epsilon}) \frac{8}{3} \log \frac{2d}{\delta'}$
        to approximate the Hessian. The union bound stipulates we should take $\delta'(\epsilon) = \frac{\delta}{3 T_{\text{outer}}}$ to control the total failure probability of Algorithm \ref{algo:SCRN}.
        Then as we can see in the Proof of Theorem \ref{thm:main1}, Algorithm \ref{algo:SCRN} will terminate in
        at most
        \begin{align}
        T_{\text{outer}} = \lceil \frac{8 K_{\text{prog}} \sqrt{\rho}(f(x_0)-f^*)}{\epsilon^{3/2}} \rceil \label{eq:outer_iter_complexity}
        \end{align}
        iterations.
        The inner iteration complexity of the Cubic-Subsolver routine is $\mathcal{T}(\epsilon)$. The routine only requires computing the gradient vector once,
        but recomputes Hessian-vector products at each iteration.
        
        So the gradient complexity becomes
        \begin{align}
          & T_{\text{G}} \lesssim  \mathcal{T}(\epsilon) \times \frac{\sqrt{\rho}(f(x_0)-f^*)}{\epsilon^{1.5}} \times \max \left( \frac{M_1}{c_1 \epsilon}, \frac{\sigma_1^2}{c_1^2 \epsilon^2} \right) \frac{8}{3} \log \frac{2d}{\delta'} \notag \\
          & \sim \mathcal{\tilde{O}}\left(\frac{\sqrt{\rho} \sigma_1^2 (f(x_0)-f^*)}{\epsilon^{3.5}}\right) \text{ for } \epsilon \leq \frac{\sigma_1^2}{c_1 M_1}. \notag
        \end{align}
        Note that $\tilde{O}$ hides logarithmic factors since $\delta'(\epsilon) = \frac{\delta}{3 T_{\text{outer}}}$.

        The total complexity of Hessian-vector product evaluations is:
        \begin{align}
          & T_{\text{HV}} \lesssim \frac{K_{\text{prog}}\sqrt{\rho}(f(x_0)-f^*)}{\epsilon^{1.5}} \times \max( \frac{M_2}{c_2 \sqrt{\rho \epsilon}}, \frac{\sigma_2^2}{c_2^2 \rho \epsilon}) \frac{8}{3} \log \frac{2d}{\delta'} \notag \\
          & \sim \mathcal{\tilde{O}}\left( \mathcal{T}(\epsilon) \frac{\sigma_2^2 (f(x_0)-f^*)}{\sqrt{\rho} \epsilon^3}  \right) \text{ for } \epsilon \leq \frac{\sigma_2^4}{c_2^2 M_2^2 \rho}. \notag
        \end{align}
       
        Finally, recall the proof of Lemma \ref{lem:final_grad_loop} which shows total complexity of the final gradient descent loop, in Algorithm \ref{algo:SCD-f},
        will be subleading in overall gradient and Hessian-vector product complexity. 
        As before, we can verify the global constant $c = \min \{ \frac{K_{\text{prog}} }{8} , c_1, c_2 \}$ satisfies the conditions of the Theorem \ref{thm:main1}.
    \end{remark}
     \end{proof}
\section{Gradient Descent as a Cubic Subsolver}

Here we provide the proofs of Lemma \ref{lem:subsolveerr} and Corollary \ref{thm:main2}. In particular, we show Algorithm $\ref{algo:SCD}$ is a Cubic-Subsolver routine satisfying Condition \ref{cond:cubic_subsolver}.
We break the analysis into two cases showing that whenever $\norm{\mathbf{g}} \geq \frac{\ell^2}{\rho}$ taking a Cauchy step
satisfies the conditions of a \textbf{Case 1} procedure, and when $\norm{\mathbf{g}} \leq \frac{\ell^2}{\rho}$, running gradient descent on the cubic submodel satisfies the conditions of a \textbf{Case 2} procedure. Showing the latter relies on Theorem 3.2 from \citet{carmon2016gradient}.

\subsection{Cauchy Step in Algorithm \ref{algo:SCD}}
First, we argue that when the stochastic gradients are large -- $\norm{\mathbf{g}_t} \geq \frac{\ell^2}{\rho}$ -- the Cauchy step achieves sufficient descent in both $f$ and $\tilde{m}$. Thus the Cauchy step will satisfy Equation \ref{cond:oracle_append} of Condition \ref{cond:cubic_subsolver}, so it satisfies the conditions of
a \textbf{Case 1} procedure. As before, $\tilde{m}(\bDelta) = \bDelta^\top \mathbf{g} + \frac{1}{2}\bDelta^\top \mathbf{B}[\bDelta] + \frac{\rho}{6} \norm{\bDelta}^3$, refers to stochastic cubic submodel in centered coordinates.

First, recall several useful results: the \textit{Cauchy radius} is the magnitude of the global minimizer in the subspace spanned by $\mathbf{g}$ (which is analytically tractable for this cubic submodel).
Namely $R_c = \argmin_{\eta \in \mathbb{R}^d} \tilde{m} \left(-\eta \frac{\mathbf{g}}{\norm{\mathbf{g}}} \right)$ and a short computation shows that
\begin{align}
R_c = -\frac{\mathbf{g}^\top \mathbf{B} \mathbf{g}}{\rho \norm{\mathbf{g}}^2} + \sqrt{\left( \frac{\mathbf{g}^\top \mathbf{B} \mathbf{g}}{\rho \norm{\mathbf{g}}^2} \right)^2 + \frac{2 \norm{\mathbf{g}}}{\rho}} \notag
\end{align}
and that
\begin{align}
  \tilde{m} \left(-R_c \frac{\mathbf{g}}{\norm{\mathbf{g}}} \right) = -\frac{1}{2} \norm{\mathbf{g}} R_c - \frac{\rho}{12} R_c^3. \notag
\end{align}

\begin{lemma} \label{lem:cauchy}
    Assume we are in the setting of Lemma \ref{lem:conc_conds} with sufficiently small constants $c_1, c_2$. If $\norm{\mathbf{g}} \geq \frac{\ell^2}{\rho}$, the Cauchy step defined by taking $\mathbf{\Delta} = -R_c \frac{\mathbf{g}}{\norm{\mathbf{g}}}$
    will satisfy $\tilde{m} \left( \mathbf{\Delta} \right)  \leq -K_{1} \sqrt{\frac{\epsilon^3}{\rho}}$ and
    $f \left(\mathbf{x}_t + \mathbf{\Delta} \right) - f(\mathbf{x}_t) \leq -K_{1} \sqrt{\frac{\epsilon^3}{\rho}}$ for $K_{1} = -\frac{7}{20}$.  Thus, if Algorithm \ref{algo:SCD} takes a Cauchy step (when $\norm{\mathbf{g}} \geq \frac{\ell^2}{\rho}$)
    it will satisfy the conditions of an \textbf{Case 1} procedure.
\end{lemma}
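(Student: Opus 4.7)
The plan is to exploit the closed-form expression for $R_c$ and the identity for $\tilde{m}(-R_c \mathbf{g}/\norm{\mathbf{g}})$ already recalled in the setup, together with the operator-norm control $\norm{\mathbf{B}_t} \le \ell + c_2 \sqrt{\rho\epsilon}$ that follows from Lemma \ref{lem:conc_conds}. The hypothesis $\norm{\mathbf{g}} \ge \ell^2/\rho$ together with the appendix-wide convention $\epsilon \le \ell^2/\rho$ gives $\norm{\mathbf{g}} \ge \epsilon$, which is what ultimately allows bounds in $\norm{\mathbf{g}}$ to be converted into the desired $\sqrt{\epsilon^3/\rho}$ scale.

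The first step is a two-sided estimate on $R_c$. Setting $a = \mathbf{g}^\top \mathbf{B}\mathbf{g}/(\rho\norm{\mathbf{g}}^2)$ and $\beta = a/\sqrt{\norm{\mathbf{g}}/\rho}$, rationalizing the quadratic formula yields $R_c = 2\sqrt{\norm{\mathbf{g}}/\rho}/(\beta + \sqrt{\beta^2+2})$. Since $|a| \le \norm{\mathbf{B}_t}/\rho \le (\ell + c_2\sqrt{\rho\epsilon})/\rho$ and $\sqrt{\norm{\mathbf{g}}/\rho} \ge \ell/\rho$, we get $|\beta| \le 1 + \mathcal{O}(c_2)$. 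Monotonicity of $\beta \mapsto 2/(\beta + \sqrt{\beta^2+2})$ then gives
\begin{equation*}
(\sqrt{3}-1 - \mathcal{O}(c_2))\sqrt{\norm{\mathbf{g}}/\rho} \le R_c \le (2 + \sqrt{2} + \mathcal{O}(c_2))\sqrt{\norm{\mathbf{g}}/\rho},
\end{equation*}
where the upper bound uses the elementary inequality $\sqrt{a^2 + x} \le |a| + \sqrt{x}$.

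Plugging the lower bound into the identity $\tilde{m}(-R_c \mathbf{g}/\norm{\mathbf{g}}) = -\tfrac{1}{2}\norm{\mathbf{g}}R_c - \tfrac{\rho}{12}R_c^3$ and simplifying $\tfrac{\sqrt{3}-1}{2} + \tfrac{(\sqrt{3}-1)^3}{12} = \sqrt{3} - \tfrac{4}{3}$ yields
\begin{equation*}
-\tilde{m}(\mathbf{\Delta}) \ge \bigl(\sqrt{3} - \tfrac{4}{3} - \mathcal{O}(c_2)\bigr)\sqrt{\norm{\mathbf{g}}^3/\rho} \ge \bigl(\sqrt{3} - \tfrac{4}{3} - \mathcal{O}(c_2)\bigr)\sqrt{\epsilon^3/\rho},
\end{equation*}
where the second step uses $\norm{\mathbf{g}} \ge \epsilon$. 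For $c_2$ small enough this exceeds $\tfrac{7}{20}\sqrt{\epsilon^3/\rho}$, establishing the submodel descent. For the function-value bound, I would apply the cubic majorization from $\rho$-Hessian Lipschitzness to write
\begin{equation*}
f(\mathbf{x}_t + \mathbf{\Delta}) - f(\mathbf{x}_t) \le \tilde{m}(\mathbf{\Delta}) + (\nabla f(\mathbf{x}_t) - \mathbf{g}_t)^\top \mathbf{\Delta} + \tfrac{1}{2}\mathbf{\Delta}^\top(\nabla^2 f(\mathbf{x}_t) - \mathbf{B}_t)\mathbf{\Delta},
\end{equation*}
and control the two error terms by $c_1 \epsilon R_c$ and $\tfrac{c_2}{2}\sqrt{\rho\epsilon}\,R_c^2$ via Lemma \ref{lem:conc_conds}. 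Combining the upper bound on $R_c$ with $\epsilon \le \norm{\mathbf{g}}$ shows each error is bounded by a small constant times $\sqrt{\norm{\mathbf{g}}^3/\rho}$ with prefactor $\mathcal{O}(c_1)$ or $\mathcal{O}(c_2)$, hence absorbable into the submodel descent while preserving the constant $\tfrac{7}{20}$.

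The main obstacle is the tightness of the numerical constant: the clean algebraic lower bound on $-\tilde{m}(\mathbf{\Delta})$ is $\sqrt{3} - \tfrac{4}{3} \approx 0.399$, which leaves only a modest margin down to $\tfrac{7}{20} = 0.35$ once the $\mathcal{O}(c_1) + \mathcal{O}(c_2)$ slack from the concentration corrections and the error terms in the $f$-descent are absorbed. All remaining steps—manipulating the quadratic that defines $R_c$, and invoking the already-established concentration inequalities—are routine.
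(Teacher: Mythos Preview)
Your proposal is correct and follows essentially the same skeleton as the paper: lower-bound $R_c$ via the operator-norm control on $\mathbf{B}$ and monotonicity of $-x+\sqrt{x^2+c}$, plug into the closed-form identity $\tilde m(-R_c\,\mathbf{g}/\norm{\mathbf{g}}) = -\tfrac12\norm{\mathbf{g}}R_c - \tfrac{\rho}{12}R_c^3$, and then handle the $f$-descent via the cubic majorizer plus the concentration error terms $c_1\epsilon R_c + \tfrac{c_2}{2}\sqrt{\rho\epsilon}\,R_c^2$.

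The only noteworthy differences are cosmetic. You normalize $R_c$ by $\sqrt{\norm{\mathbf{g}}/\rho}$ and work with the dimensionless ratio $\beta$, obtaining two-sided bounds $(\sqrt{3}-1)\sqrt{\norm{\mathbf{g}}/\rho}\lesssim R_c\lesssim(2+\sqrt{2})\sqrt{\norm{\mathbf{g}}/\rho}$; the paper instead normalizes by $\ell/\rho$ and gets only the lower bound $R_c \ge \tfrac{7}{10}\,\ell/\rho$. Consequently, for the error terms in the $f$-descent you use your upper bound on $R_c$ to show each error is $\mathcal{O}(c_i)\sqrt{\norm{\mathbf{g}}^3/\rho}$, whereas the paper keeps the cubic term $-\tfrac{\rho}{12}R_c^3$ and argues the resulting quadratic $\tfrac{1}{12}x^2 - c_1 - \tfrac{c_2}{2}x$ in $x=\tfrac{\rho}{\ell}R_c$ is nonnegative for $x\ge \tfrac{7}{10}$, thereby avoiding any upper bound on $R_c$. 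Both routes reach $\tfrac{7}{20}$ with comparable slack (the paper verifies $c_1=c_2=\tfrac{1}{200}$ suffices), so your worry about the numerical margin is not a real obstacle.
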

\begin{proof}
    We first lower bound the Cauchy radius
    \begin{align}
    R_c = -\frac{\mathbf{g}^\top \mathbf{B} \mathbf{g}}{\rho \norm{\mathbf{g}}^2} + \sqrt{\left( \frac{\mathbf{g}^\top \mathbf{B} \mathbf{g}}{\rho \norm{\mathbf{g}}^2} \right)^2 + \frac{2 \norm{\mathbf{g}}}{\rho}} \geq \frac{1}{\rho} \left(-\frac{\mathbf{g}^\top \mathbf{B} \mathbf{g}}{\norm{\mathbf{g}}^2} + \sqrt{\left( \frac{\mathbf{g}^\top \mathbf{B} \mathbf{g}}{\norm{\mathbf{g}}^2} \right)^2 + 2 \ell^2 }\right). \notag
    \end{align}
    Note the function $-x+\sqrt{x^2+2}$ is decreasing over its support. Additionally, $\norm{\nabla^2 f(0)} \leq \ell$ and
    $\norm{(\mathbf{B}-\nabla^2 f(0)) \mathbf{g}} \leq c_2 \sqrt{\rho \epsilon} \norm{\mathbf{g}} \leq c_2 \ell \norm{\mathbf{g}}$ imply that $\frac{\mathbf{g}^\top B \mathbf{g}}{\norm{\mathbf{g}}^2} \leq (1+c_2) \ell$. So, combining with the previous display we obtain:
    \begin{align}
      R_c \geq \frac{\ell}{\rho}\left( -(1+c_2) + \sqrt{(1+c_2)^2+2} \right) \geq \frac{7}{10} \frac{\ell}{\rho}, \label{eq:cauchy_lower}
    \end{align}
    for sufficiently small $c_2$.
    So,
    \begin{align}
    \tilde{m} \left(-R_c \frac{\mathbf{g}}{\norm{\mathbf{g}}} \right) = -\frac{1}{2} \norm{\mathbf{g}} R_c - \frac{\rho}{12} R_c^3 \leq -\frac{7}{20} \frac{\ell^3}{\rho^2} \leq - \frac{7}{20} \sqrt{\frac{\epsilon^3}{\rho}}, \notag
    \end{align}
    for sufficiently small $c_2$.

    Now, at iteration $t$, using the $\rho$-Hessian Lipschitz condition and concentration conditions we obtain:
    \begin{align}
    & f \left( \mathbf{x}_{t}-R_c \frac{\mathbf{g}}{\norm{\mathbf{g}}} \right) - f(\mathbf{x}_t) \leq \tilde{m} \left(-R_c \frac{\mathbf{g}}{\norm{\mathbf{g}}} \right) + c_1 R_c \epsilon + \frac{1}{2} R_c^2 c_2 \sqrt{\rho \epsilon} \notag \\
    & \leq -\frac{1}{2} \norm{\mathbf{g}} R_c - \frac{\rho}{12} R_c^3 + c_1 R_c \epsilon + \frac{1}{2} R_c^2 c_2 \sqrt{\rho \epsilon} \notag \\
    & \leq -\frac{1}{2} \norm{\mathbf{g}} R_c - \frac{\ell^3}{\rho^2} \cdot \frac{\rho}{\ell} R_c \left(\frac{1}{12}(\frac{\rho}{\ell} R_c)^2-c_1-\frac{c_2}{2} (\frac{\rho}{\ell} R_c) \right) \notag.
    \end{align}
    For sufficiently small $c_1, c_2$ the quadratic $\frac{1}{12} x^2 - c_1 - \frac{c_2}{2}x$ is minimized at $x^*=\frac{7}{10}$ when restricted to $x \geq \frac{7}{10}$.
    Since $\frac{\rho}{\ell} R_c \geq \frac{7}{10}$ for sufficiently small $c_1, c_2$ as in Equation \eqref{eq:cauchy_lower}, we can also conclude that
    \begin{align}
      f \left( \mathbf{x}_{t}-R_c \frac{\mathbf{g}}{\norm{\mathbf{g}}} \right) - f(\mathbf{x}_t) \leq -\frac{7}{20} \frac{\ell^3}{\rho^2} \leq -\frac{7}{20} \sqrt{\frac{\epsilon^3}{\rho}}.
    \end{align}
    This establishes sufficient decrease with respect to the true function $f$.
    We can verify choosing $c_1=c_2=1/200$ satisfies all the inequalities in this section.
    Lastly, note the complexity of computing the Cauchy step is in fact $\mathcal{O}(1)$.
\end{proof}

\subsection{Gradient Descent Loop in Algorithm \ref{algo:SCD}} \label{sec:carmon_subsolver}
We now establish complexity of sufficient descent when we are using the gradient descent subsolver from \citet{carmon2016gradient} to minimize the stochastic cubic submodel in Algorithm \ref{algo:SCD} in the regime $\norm{\mathbf{g}} \leq \frac{\ell^2}{\rho}$.
\citet{carmon2016gradient} consider the gradient descent scheme\footnote{Note there is also a factor of $2$ difference used in the Hessian-Lipschitz constant in \citet{carmon2016gradient}. Namely the $2\rho'=\rho$ for every $\rho'$ that appears in \citet{carmon2016gradient}.}:
\begin{align}
  \mathbf{\Delta}_{t+1} = \mathbf{\Delta}_{t} - \eta \nabla \tilde{m}(\mathbf{\Delta}_{t}) \label{eq:gd}
\end{align}
for $\tilde{m}(\mathbf{\Delta}) = \mathbf{g}^\top \mathbf{\Delta} + \frac{1}{2} \mathbf{\Delta}^\top \mathbf{B} \mathbf{\Delta} + \frac{\rho}{6} \norm{\mathbf{\Delta}^3}$.
Here we let the index $t$ range over the iterates of the gradient descent loop for a fixed cubic submodel.
Defining $\beta = \norm{\mathbf{B}}$ and $R = \frac{\beta}{\rho} + \sqrt{(\frac{\beta}{\rho})^2+\frac{2 \norm{\mathbf{g}}}{\rho}}$
they make the following assumptions to show convergence:
  \begin{itemize}
    \item \textbf{Assumption A}: The step size for the gradient descent scheme satisfies
    \begin{align}
    0 <  \eta < \frac{1}{4(\beta + \frac{\rho}{2} R)}.
    \end{align}
    \item \textbf{Assumption B}: The initialization for the gradient descent scheme $\mathbf{\Delta}$, satisfies $\mathbf{\Delta} = -r \frac{\mathbf{g}}{\norm{\mathbf{g}}}$, with $0 \leq r \leq R_c$.
  \end{itemize}

  Then, Theorem 3.2 in \citet{carmon2016gradient} (restated here for convenience) gives that:
  \begin{theorem}[\citet{carmon2016gradient}]
    Let $q$ be uniformly distributed on the the unit sphere in $\mathbb{R}^d$. Then,  the iterates $\mathbf{\Delta}_{t}$ generated by the gradient descent scheme in Equation \eqref{eq:gd} satisfying Assumptions A and B
    obtained by replacing $\mathbf{g} \to \mathbf{g} + \sigma q$ for $\sigma = \frac{\rho \epsilon'}{\beta + \rho \norm{\mathbf{\Delta}^\star}} \cdot \frac{\bar{\sigma}}{12}$ with $\bar{\sigma} \leq 1$,
    satisfy $\tilde{m}(\mathbf{\Delta}_{t}) \leq \tilde{m}(\mathbf{\Delta}^\star) + (1+\bar{\sigma}) \epsilon'$ for all:
    \begin{align}
      & t \geq \mathcal{T}(\epsilon') = \frac{1+\bar{\sigma}}{\eta} \min \left \{ \frac{1}{\frac{\rho}{2} \norm{\mathbf{\Delta}^\star}-\gamma}, \frac{10 \norm{\mathbf{\Delta}^\star}^2}{\epsilon'} \right \} \times \notag \\
      & \left[ 6 \log \left( 1+\mathbb{I}_{\{ \gamma > 0 \}} \frac{3 \sqrt{d}}{\bar{\sigma} \delta'} \right) + 14 \log \left( \frac{(\beta+\rho \norm{\mathbf{\Delta}^\star}) \norm{\mathbf{\Delta}^\star}^2}{\epsilon'} \right)  \right], \label{eq:inner_iters}
    \end{align}
    with probability $1-\delta'$.
  \end{theorem}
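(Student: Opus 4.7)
The plan is to adapt the analysis of \citet{carmon2016gradient} for gradient descent applied to the (nonconvex) cubic submodel. The core idea is that although $\tilde{m}$ may fail to be convex when $\mathbf{B}$ has negative eigenvalues, the cubic regularizer $\frac{\rho}{6}\norm{\bDelta}^3$ provides an implicit ``effective strong convexity'' near the minimizer $\bDelta^\star$, quantified by $\mu := \frac{\rho}{2}\norm{\bDelta^\star} - \gamma \geq 0$; nonnegativity follows from the optimality condition $\mathbf{B} + \frac{\rho}{2}\norm{\bDelta^\star} I \succeq 0$.

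First I would verify that under Assumptions A and B the iterates remain bounded, $\norm{\bDelta_t} \leq R$, so that the local gradient-Lipschitz constant of $\tilde m$ on this region is at most $\beta + \rho R$ and the step size $\eta$ is admissible. The resulting one-step descent inequality $\tilde m(\bDelta_{t+1}) \leq \tilde m(\bDelta_t) - \tfrac{\eta}{2}\norm{\nabla \tilde m(\bDelta_t)}^2$ is then the workhorse of the analysis. Telescoping this inequality and translating a small gradient norm into function suboptimality via the cubic structure produces the first branch of the iteration bound, the $\tfrac{10\norm{\bDelta^\star}^2}{\epsilon'}$ term, which holds essentially whenever $\mu$ is not too large.

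For the second branch I would decompose $\bDelta_t - \bDelta^\star$ in the eigenbasis of the shifted matrix $\mathbf{B} + \tfrac{\rho}{2}\norm{\bDelta^\star}I$, along which gradient descent contracts at rate roughly $1 - \eta\mu$. The delicate case is the ``hard case'' where $\gamma$ is close to $\tfrac{\rho}{2}\norm{\bDelta^\star}$ and the shifted Hessian is nearly singular along an eigenvector $v$ of smallest curvature. The perturbation $\mathbf{g}\to \mathbf{g}+\sigma q$ is engineered exactly for this case: with probability $1-\delta'$ the projection $|\langle q,v\rangle| \gtrsim \delta'/\sqrt{d}$, which bootstraps exponential growth of the $v$-component of the iterate at rate $(1+\eta\mu)^t$ and drives the iterate across the saddle into the descent region. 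Tuning $\bar\sigma$ to trade off how much the perturbation shifts the true optimum against the additive $\bar\sigma\epsilon'$ accuracy requested yields the second branch $\tfrac{1}{\mu}\log(\sqrt{d}/(\bar\sigma\delta'))$ with the $\log\!\bigl(1 + \mathbb{I}_{\{\gamma>0\}}\tfrac{3\sqrt{d}}{\bar\sigma \delta'}\bigr)$ factor absorbing the random-initialization cost.

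The main obstacle is controlling the nonlinear coupling induced by the cubic term in the hard case: the iteration is not a pure quadratic flow, so one must show the iterate does not leave the region where the linearized analysis is valid, and that growth along $v$ is not cancelled by the cubic penalty before the algorithm escapes. Once both branches are established, taking their minimum and multiplying by the logarithmic factors yields the bound $\mathcal{T}(\epsilon')$ in Equation~\eqref{eq:inner_iters}; combining the failure probability of the random projection with a high-probability bound over $q$ gives the final $1-\delta'$ guarantee.
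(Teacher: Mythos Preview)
The paper does not prove this theorem. It is explicitly introduced as ``Theorem 3.2 in \citet{carmon2016gradient} (restated here for convenience)'' and is invoked as a black box in the proof of Lemma~\ref{lem:duchi_subsolver}. There is therefore no proof in the paper to compare your proposal against; the authors simply cite the result and verify that their choices of $\eta$, $\sigma$, $\bar\sigma$, and the initialization satisfy Assumptions A and B so that the theorem applies.

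Your sketch is broadly in the spirit of the Carmon--Duchi argument (boundedness of iterates under Assumptions A and B, a descent lemma giving the sublinear $\norm{\bDelta^\star}^2/\epsilon'$ branch, and a perturbation to escape the hard case giving the $1/\mu$ branch with a $\log(\sqrt{d}/\delta')$ overhead). One small inaccuracy: in the hard case the relevant growth rate along the bottom eigenvector $v$ is governed by the negative curvature $\gamma$, not by $\mu = \tfrac{\rho}{2}\norm{\bDelta^\star} - \gamma$; indeed $\mu$ can be arbitrarily small in exactly the regime where the perturbation is needed, so a rate of $(1+\eta\mu)^t$ would yield no useful escape bound. But since the paper treats this theorem as an imported result, no proof is expected here.
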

  Note that the iterates $\mathbf{\Delta}_{t}$ are iterates generated from the solving the \textit{perturbed} cubic subproblem with $\mathbf{g} \to \mathbf{g} + \sigma q$---not from solving the original cubic subproblem. This step is necessary to avoid the ``hard case" of the non-convex quadratic problems. See \citet{carmon2016gradient} for more details.

      % To be clear, we have that $\mathbf{x}_k+\mathbf{\tilde{\Delta}}_t^*$ is the exact solution of the \textit{perturbed}, stochastic cubic sub-problem.
      % satisfy $| \norm{\mathbf{\tilde{\Delta}}_t^*}^2 - \norm{\mathbf{\Delta}_t^\star}^2 | \leq \frac{4 \sigma}{\rho}$, which allow us to show $\norm{\mathbf{\tilde{\Delta_t^*}}} \leq c_3 \norm{\mathbf{\Delta}_t^\star}$.
      % \\
    To apply this bound we will never have access to $\norm{\mathbf{\Delta}^\star}$ apriori. However, in the present
    we need only to use this lemma to conclude sufficient descent when $\mathbf{\Delta}^\star$ is not an $\epsilon$-second-order stationary point and hence when $\norm{\mathbf{\Delta}}^\star \geq \frac{1}{2} \sqrt{\frac{\epsilon}{\rho}}$.
    \\
    \begin{lemma}
      \label{lem:duchi_subsolver}
      Assume we are in the setting of Lemma \ref{lem:conc_conds} with sufficiently small constants $c_1, c_2$. Further, assume that $\norm{\mathbf{g}} \leq \frac{\ell^2}{\rho}$, and that we choose $\eta = \mathcal{O}(\frac{1}{\ell})$
      in the gradient descent iterate scheme in Equation \eqref{eq:gd} with perturbed gradient $\tilde{\mathbf{g}} = \mathbf{g} + \sigma \mathbf{q}$ for $\sigma = \mathcal{O}(\frac{\sqrt{\epsilon \rho}}{\ell})$, and $q \sim \text{Unif}(S^{d-1})$.
      If $\norm{\mathbf{\Delta}^\star} \geq \frac{1}{2} \sqrt{\frac{\epsilon}{\rho}}$, then the gradient descent iterate $\mathbf{\Delta}_{\mathcal{T}(\epsilon)}$ for $\mathcal{T}(\epsilon) = \mathcal{\tilde{O}}(\frac{\ell}{\sqrt{\rho \epsilon}})$
      will satisfy $\tilde{m}(\mathbf{\Delta}) \leq \tilde{m}(\mathbf{\Delta}^\star) + \frac{c_3}{12} \norm{\mathbf{\Delta}^\star}^3$. Further, the iterate $\mathbf{\Delta}_{\mathcal{T}(\epsilon)}$ will always satisfy
      $\norm{\mathbf{\Delta}_{\mathcal{T}(\epsilon)}} \leq \norm{\mathbf{\Delta}^\star} + c_4 \sqrt{\frac{\epsilon}{\rho}}$.
    \end{lemma}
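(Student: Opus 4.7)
The plan is to derive Lemma \ref{lem:duchi_subsolver} from Theorem 3.2 (combined with Corollary 2.5) of \citet{carmon2016gradient} by a careful choice of parameters. The three tasks are: (i) verify the hypotheses of Theorem 3.2 with our stochastic $\mathbf{g}, \mathbf{B}$ and the algorithm's step size; (ii) pick the target suboptimality $\epsilon'$ so that the guarantee translates into the relative tolerance $\tfrac{c_3}{12}\rho\norm{\mathbf{\Delta}^\star}^3$ demanded by Condition \ref{cond:cubic_subsolver}; and (iii) control the norm of the final iterate.

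First I would verify Assumption A, which requires $\eta < 1/(4(\beta + \rho R/2))$ where $\beta := \norm{\mathbf{B}}$ and $R := \beta/\rho + \sqrt{(\beta/\rho)^2 + 2\norm{\mathbf{g}}/\rho}$. Lemma \ref{lem:conc_conds} gives $\beta \le \ell + c_2\sqrt{\rho\epsilon}$, and the standing assumption $\epsilon \le \ell^2/\rho$ at the start of the appendix then yields $\beta = \O(\ell)$. Combined with the regime assumption $\norm{\mathbf{g}} \le \ell^2/\rho$, this gives $R = \O(\ell/\rho)$, so $\beta + \rho R/2 = \O(\ell)$ and the algorithm's choice $\eta = 1/(20\ell)$ satisfies Assumption A for suitable constants. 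Assumption B is immediate since we initialize at $\mathbf{\Delta}=0$, i.e., $r=0 \le R_c$.

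Next I would invoke Theorem 3.2 with $\bar\sigma = 1$ and $\epsilon' = \tfrac{c_3}{24}\rho\norm{\mathbf{\Delta}^\star}^3$, so that $(1+\bar\sigma)\epsilon' = \tfrac{c_3}{12}\rho\norm{\mathbf{\Delta}^\star}^3$ matches the required slack exactly. Substituting $\epsilon' \asymp \rho\norm{\mathbf{\Delta}^\star}^3$ into the bound (\ref{eq:inner_iters}) and taking the second branch of the $\min$ yields an iteration count of order $\eta^{-1}\cdot\norm{\mathbf{\Delta}^\star}^2/(\rho\norm{\mathbf{\Delta}^\star}^3) = \O(\ell/(\rho\norm{\mathbf{\Delta}^\star}))$ up to polylogarithmic factors. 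The hypothesis $\norm{\mathbf{\Delta}^\star} \ge \tfrac{1}{2}\sqrt{\epsilon/\rho}$ then collapses this to $\tilde{\O}(\ell/\sqrt{\rho\epsilon})$, noting that we may simply run gradient descent for this many steps since the bound is monotone decreasing in $\norm{\mathbf{\Delta}^\star}$. A consistency check is needed: the algorithm sets $\sigma = c'\sqrt{\rho\epsilon}/\ell$, and this is compatible with the prescription $\sigma = (\rho\epsilon'/(\beta + \rho\norm{\mathbf{\Delta}^\star}))\cdot\bar\sigma/12$ after substituting our choices and the bounds $\beta = \O(\ell)$, $\norm{\mathbf{\Delta}^\star} = \Theta(\sqrt{\epsilon/\rho})$ (adjusting $c'$ as needed). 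A brief extra step transfers the guarantee from the perturbed cubic (on which gradient descent is actually run) back to $\tilde m$, costing only an additive $\O(\sigma\norm{\mathbf{\Delta}^\star}) = \O(\sqrt{\epsilon^3/\rho})$ that is absorbed into the tolerance by shrinking $c_3$.

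For the norm bound $\norm{\mathbf{\Delta}_{\mathcal{T}(\epsilon)}} \le \norm{\mathbf{\Delta}^\star} + c_4\sqrt{\epsilon/\rho}$, I would appeal to Corollary 2.5 of \citet{carmon2016gradient}: under Assumptions A and B, the iterate norms are monotonically non-decreasing and never exceed the norm of the minimizer of the perturbed cubic subproblem. A standard sensitivity calculation for the cubic subproblem shows that shifting the gradient by $\sigma q$ of magnitude $\O(\sqrt{\rho\epsilon})$ perturbs the optimal norm by at most $\O(\sqrt{\epsilon/\rho})$, delivering the claim for small enough $c_4$. The principal obstacle in the whole argument is bookkeeping rather than conceptual: tracking constants through Theorem 3.2 so that the chosen $(\epsilon', \bar\sigma, \sigma)$ are mutually consistent with the algorithm's fixed choices, and arranging the sensitivity bound to depend only on quantities available to the algorithm since $\norm{\mathbf{\Delta}^\star}$ itself is unknown a priori.
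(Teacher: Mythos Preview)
Your overall strategy matches the paper's: verify Assumptions A and B, invoke Theorem 3.2 of \citet{carmon2016gradient} with a relative-accuracy target $\epsilon'\asymp \rho\norm{\mathbf{\Delta}^\star}^3$, bound $\mathcal T(\epsilon)$, and use Corollary 2.5 plus a perturbation estimate for the norm bound. The paper does exactly this, citing Lemma 4.6 of \citet{carmon2016gradient} (which gives $\bigl|\norm{\tilde{\mathbf{\Delta}}^\star}^2-\norm{\mathbf{\Delta}^\star}^2\bigr|\le 4\sigma/\rho$) for the last step, and it also notes that Theorem 3.2 as stated already bounds the \emph{unperturbed} $\tilde m$, so your ``transfer'' step is not needed.

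There is, however, a real gap in your consistency check for $\sigma$. You set $\bar\sigma=1$ and then argue that the algorithm's fixed $\sigma=c'\sqrt{\rho\epsilon}/\ell$ is ``compatible'' with the prescription $\sigma=\rho\epsilon'/(12(\beta+\rho\norm{\mathbf{\Delta}^\star}))$ by invoking $\norm{\mathbf{\Delta}^\star}=\Theta(\sqrt{\epsilon/\rho})$. That two-sided bound is false: the hypothesis only gives $\norm{\mathbf{\Delta}^\star}\ge \tfrac12\sqrt{\epsilon/\rho}$, while the upper bound is $\norm{\mathbf{\Delta}^\star}=O(\ell/\rho)$ (which the paper derives from the first-order condition \eqref{eq:cubic_sol_stat} together with $\norm{\mathbf g}\le \ell^2/\rho$ and the Hessian concentration). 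With $\bar\sigma$ pinned at $1$ and $\epsilon'=\tfrac{c_3}{24}\rho\norm{\mathbf{\Delta}^\star}^3$, the prescribed $\sigma$ scales like $\rho^2\norm{\mathbf{\Delta}^\star}^3/(\beta+\rho\norm{\mathbf{\Delta}^\star})$, which varies with the unknown $\norm{\mathbf{\Delta}^\star}$ and cannot equal a single algorithmic constant for all admissible $\norm{\mathbf{\Delta}^\star}$.

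The fix, which is what the paper does, is to reverse the roles: fix $\sigma$ (depending only on the known lower bound $L=\tfrac12\sqrt{\epsilon/\rho}$) and let $\bar\sigma$ be \emph{determined} by the constraint, namely $\bar\sigma=\tfrac{\beta+\rho\norm{\mathbf{\Delta}^\star}}{2\ell+\rho L}\cdot (L/\norm{\mathbf{\Delta}^\star})^3$. One then checks $\bar\sigma\le 1$ for every $\norm{\mathbf{\Delta}^\star}\ge L$ (this is where the lower bound is used), and uses the upper bound $\norm{\mathbf{\Delta}^\star}\le O(\ell/\rho)$ to ensure that $\log(1/\bar\sigma)$ and the other logarithmic factors in \eqref{eq:inner_iters} stay $O(\log(\ell/\sqrt{\rho\epsilon}))$, so that $\mathcal T(\epsilon)=\tilde O(\ell/\sqrt{\rho\epsilon})$ as claimed. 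You flag at the end that $\norm{\mathbf{\Delta}^\star}$ is unknown a priori, so you are aware of the issue; the point is that it already bites in the middle of your argument, not just in the final bookkeeping.
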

    \begin{proof}
    To show the first statement we simply invoke Theorem 3.2 of \citet{carmon2016gradient} and check that its assumptions are satisfied. We assume that $\norm{\mathbf{\Delta}^\star} \geq \frac{1}{2} \sqrt{\frac{\epsilon}{\rho}} \equiv L$, $\norm{\mathbf{g}} \leq \frac{\ell^2}{\rho}$ in the present situation.
    We now set $\epsilon' = c_3 \frac{\rho}{24} \norm{\mathbf{\Delta}^\star}^3$ where we assume that $c_3$ and will be small. Then we take:
    \begin{align}
      \sigma &= c_3 \frac{\rho^2 L^3}{24 \cdot 12 (2 \ell+\rho L)} = \frac{\rho L^3 \epsilon'}{12 \norm{\mathbf{\Delta}^\star}^3 (2\ell+\rho L)} = \frac{\rho \epsilon'}{\beta+\rho \norm{\mathbf{\Delta}^\star}} \cdot \frac{1}{12}  \underbrace{\frac{\beta+ \rho \norm{\mathbf{\Delta}^\star}}{2\ell+\rho L} \frac{L^3}{\norm{\mathbf{\Delta}^\star}^3}}_{\bar{\sigma}}. \notag
    \end{align}
    By concentration we have that $(1-c_2)\ell \leq \beta \leq (1+c_2) \ell$. So we can easily check that $\frac{1}{3} \frac{L^3}{\norm{\mathbf{\Delta}^\star}^3} \leq \bar{\sigma} \leq 1$ for sufficiently small $c_2$.
    We can further show that:
    \begin{align}
      \sigma = c_3 \frac{\rho^2 L^3}{24 \cdot 12 (2\ell+\rho L)} \leq c_3 \frac{\rho^2 L^3}{24 \cdot 12 \rho L} \leq c_3 \frac{\rho^2 L^3} {24 \cdot 12 \rho L} \leq c_3 \frac{\rho L^2}{288} \leq c_3 \frac{\epsilon}{2304}. \label{eq:sigma_mag}
    \end{align}
    Thus $R = \frac{\beta}{\rho} + \sqrt{(\frac{\beta}{\rho})^2 + \frac{2 \norm{\mathbf{g}}}{\rho}} \leq 2 \frac{\ell}{\rho}$ for sufficiently small $c_1$, $c_2$, and $c_4$.
    Accordingly, $\frac{1}{4 (\beta + \frac{\rho}{2} R)} \geq \frac{1}{20 \ell}$. We can similarly check this step size choice suffices for Lemma \ref{lem:final_grad_loop}.
    Let us then choose $\eta = \frac{1}{20 \ell}$ to satisfy \textbf{Assumption A} and $r = 0$ in accordance with \textbf{Assumption B} from \citet{carmon2016gradient}.

    Thus Theorem 3.2 from \citet{carmon2016gradient} shows that with probability at least $1-\delta'$ the iterates
    $\tilde{m}(\mathbf{\Delta}_t) \leq \tilde{m}(\mathbf{\Delta}^\star) + (1+\bar{\sigma}) \epsilon' \leq \tilde{m}(\mathbf{\Delta}^\star) + 2 \epsilon' = \tilde{m}(\mathbf{\Delta}^\star) + c_3 \frac{\rho}{12} \norm{\mathbf{\Delta}_t^\star}^3$
    for $t \geq \mathcal{T}(\epsilon)$.
    We can now upper bound $\mathcal{T}(\epsilon)$. Recall we set $L = \frac{1}{2} \sqrt{\frac{\epsilon}{\rho}}$ for clarity.
      \begin{align}
        & \mathcal{T}(\epsilon) = \frac{1+\bar{\sigma}}{\eta} \min \left \{ \frac{1}{\rho \norm{\mathbf{\Delta}^\star}-\gamma}, \frac{10 \norm{\mathbf{\Delta}^\star}^2}{\epsilon'} \right \} \times \notag \\
        & \left[ 6 \log \left( 1+\mathbb{I}_{\{ \gamma > 0 \}} \frac{3 \sqrt{d}}{\bar{\sigma} \delta} \right) + 14 \log \left( \frac{(\beta+\rho \norm{\mathbf{\Delta}^\star}) \norm{\mathbf{\Delta}^\star}^2}{\epsilon'} \right)  \right] \notag \\
        & \leq \mathcal{O}(1) \cdot \frac{\ell}{\rho \norm{\mathbf{\Delta}^\star}} \times \left[ 6 \log \left(1+\frac{\sqrt{d}}{\delta'}\right) + 6 \log \left(9 \frac{\norm{\mathbf{\Delta}^\star}^3}{L^3} \right) + 14 \log \left( \frac{24 \beta}{c_3 \rho \norm{\mathbf{\Delta}^\star}} + \frac{48}{c_4} \right) \right] \notag \\
        & \leq \mathcal{O}(1) \cdot \frac{\ell}{\rho \norm{\mathbf{\Delta}^\star}} \times \left[ \mathcal{O}(1) \cdot \log \left(1+\frac{\sqrt{d}}{\delta'}\right) + \mathcal{O}(1) \cdot \log \left(\frac{\norm{\mathbf{\Delta}^\star}}{L} \right) + \mathcal{O}(1) \cdot \log \left( \frac{\beta}{\rho \norm{\mathbf{\Delta}^\star}} + 1 \right) + \mathcal{O}(1) \right] \notag \\
        & \leq  \mathcal{O}(1) \cdot \frac{\ell}{\rho \norm{\mathbf{\Delta}^\star}} \times \left[ \mathcal{O}(1) \cdot \log \left(1+\frac{\sqrt{d}}{\delta'}\right) + \mathcal{O}(1) \cdot \log \left( \frac{\beta}{\rho L} + \frac{\norm{\mathbf{\Delta}^\star}}{L} \right) + \mathcal{O}(1) \right] \notag \\
        & \leq  \mathcal{O}(1) \cdot \frac{\ell}{\rho \frac{1}{2} \sqrt{\frac{\epsilon}{\rho}}} \times \left[ \mathcal{O}(1) \cdot \log \left(1+\frac{\sqrt{d}}{\delta'}\right) + \mathcal{O}(1) \cdot \log \left( \frac{\ell}{\rho \frac{1}{2} \sqrt{\frac{\epsilon}{\rho}}} + \frac{\norm{\mathbf{\Delta}^\star}}{\frac{1}{2} \sqrt{\frac{\epsilon}{\rho}}} \right) + \mathcal{O}(1) \right] \notag \\
        & \leq \mathcal{O}(1) \cdot \ell \sqrt{\frac{1}{\rho \epsilon}}  \times \left[ \mathcal{O}(1) \cdot \log \left(1+\frac{\sqrt{d}}{\delta'}\right) + \mathcal{O}(1) \cdot \log \left( \frac{\ell}{\rho \sqrt{\frac{1}{\epsilon \rho}}} + \frac{\ell}{\rho \sqrt{\frac{1}{\epsilon \rho}}} \right) + \mathcal{O}(1) \right] \notag \\
        & \leq \mathcal{O}(1) \cdot \ell \sqrt{\frac{1}{\rho \epsilon}}  \times \left[ \mathcal{O}(1) \cdot \log \left(1+\frac{\sqrt{d}}{\delta'}\right) + \mathcal{O}(1) \cdot \log \left( \ell \sqrt{ \frac{1}{\rho \epsilon}} \right) + \mathcal{O}(1) \right] \notag \\
        & \leq \mathcal{O}(1) \cdot \ell \sqrt{\frac{1}{\rho \epsilon}} \times \left[ \log \left( \ell \sqrt{\frac{1}{\rho \epsilon}} \left(1+\frac{\sqrt{d}}{\delta'} \right) \right) + \mathcal{O}(1) \right], \notag
      \end{align}
      where we have used the bound $\norm{\mathbf{\Delta}^\star} \leq \mathcal{O}(1) \cdot \frac{\ell}{\rho}$. We can see that $\norm{\mathbf{\Delta}^{\star}} \leq \mathcal{O}(1) \cdot \frac{\ell}{\rho}$ by appealing to the first-order stationary condition in Equation \eqref{eq:cubic_sol_stat}, $\norm{\mathbf{g}} \leq \frac{\ell^2}{\rho}$,
      and $\norm{\mathbf{B} \mathbf{\Delta}^\star} \leq (1+c_2) \ell \norm{\mathbf{\Delta}^\star}$.
      Combining these facts\footnote{The Fenchel-Young inequality $ab \leq a^2 + \frac{b^2}{4}$ is also used in the second line of this display.} we have:
      \begin{align}
        & \mathbf{g}+ \mathbf{B} \mathbf{\Delta^\star} + \frac{\rho}{2} \norm{\mathbf{\Delta}^\star} \mathbf{\Delta}^\star = 0 \implies \notag \\
        & \frac{\rho}{2} \norm{\mathbf{\Delta}^\star}^2 \leq \norm{\mathbf{g}} + \norm{\mathbf{B} \mathbf{\Delta}^\star} \leq \frac{\ell^2}{\rho} + (1+c_2) \ell \norm{\mathbf{\Delta}^\star} \leq \left( 1+(1+c_2)^2 \right) \frac{\ell^2}{\rho} + \frac{\rho}{4} \norm{\mathbf{\Delta}^\star}^2 \notag \implies \\
        & \norm{\mathbf{\Delta}^\star} \leq 3 \frac{\ell}{\rho}, \notag
      \end{align}
      for sufficiently small $c_2$. This completes the proof of the first statement of the Lemma.

      Note that we will eventually choose $\delta' \sim \mathcal{O}(\frac{1}{\epsilon^{1.5}})$ for our final guarantee. However, this will only contribute logarithmic dependence in
      $\epsilon$ to our upper bound.

      We now show that the iterate $\norm{\mathbf{\Delta}_{\mathcal{T}(\epsilon)}} \leq \norm{\mathbf{\Delta}^\star} + c_4 \sqrt{\frac{\epsilon}{\rho}}$. For notational convenience let us use $\norm{\tilde{\mathbf{\Delta}}^\star}$ to denote the norm of the global minima
      of the \textit{perturbed} subproblem. Since our step size choice and initialization satisfy \textbf{Assumptions A} and \textbf{B} then we can also apply Corollary 2.5 from \citet{carmon2016gradient}. Corollary 2.5 states the norms $\norm{\mathbf{\Delta}_t}$ are non-decreasing
      and satisfy $\norm{\mathbf{\Delta}_t} \leq \norm{\tilde{\mathbf{\Delta}}^\star}$. So we immediately obtain $\norm{\mathbf{\Delta}_{\mathcal{T}(\epsilon)}} \leq \norm{\tilde{\mathbf{\Delta}}^\star}$.
      We can then use Lemma 4.6 from \citet{carmon2016gradient} which relates the norm of the global minima of the \textit{perturbed} subproblem, $\norm{\tilde{\mathbf{\Delta}}^\star}$, to the norm of the global minima of the original problem, $\norm{\mathbf{\Delta}^\star}$.
      Lemma 4.6 from \citet{carmon2016gradient} states that under the gradient perturbation $\tilde{\mathbf{g}} = \mathbf{g} + \sigma q$ we have that $\abs{\norm{\tilde{\mathbf{\Delta}}^\star}^2 - \norm{\mathbf{\Delta}^\star}^2} \leq \frac{4 \sigma}{\rho}$.
      So using the upper bound on $\sigma$ from Equation \eqref{eq:sigma_mag} we obtain that:
      \begin{align}
        & \norm{\mathbf{\Delta}_{\mathcal{T}(\epsilon)}}^2 \leq \norm{\mathbf{\tilde{\Delta}}^\star}^2 \leq  \norm{\mathbf{\Delta}^\star}^2 + \frac{4\sigma}{\rho} \leq \norm{\mathbf{\Delta}^\star}^2 + \frac{c_3}{576} \frac{\epsilon}{\rho} \implies  \notag \\
        & \norm{\mathbf{\Delta}_{\mathcal{T}(\epsilon)}} \leq \norm{\mathbf{\Delta}^\star} + \sqrt{\frac{c_3}{576}} \sqrt{\frac{\epsilon}{\rho}} = \norm{\mathbf{\Delta}^\star} + c_4 \sqrt{\frac{\epsilon}{\rho}}, \notag
      \end{align}
      where we define $c_4 = \sqrt{\frac{c_3}{576}}$.
    \end{proof}

\subsection{Proofs of Lemma \ref{lem:subsolveerr} and Corollary \ref{thm:main2}}
Here we conclude by showing the correctness of Algorithm \ref{algo:SCD} which follows easily using our previous results.
\subsolveerr*
\begin{proof}
  This result follows immediately from Lemmas \ref{lem:cauchy} and \ref{lem:duchi_subsolver}. In particular, Lemma \ref{lem:cauchy} shows the Cauchy step, which is only used when $\norm{\mathbf{g}} \geq \frac{\ell^2}{\rho}$, satisfies the
  conditions of an \textbf{Case 1} procedure. Lemma \ref{lem:duchi_subsolver} shows solving the cubic submodel via gradient descent, which is only used when $\norm{\mathbf{g}} \leq \frac{\ell^2}{\rho}$, satisfies the conditions of a \textbf{Case 2} procedure.
  Lemma \ref{lem:cauchy} shows the Cauchy step has gradient complexity $\mathcal{O}(1)$. Lemma \ref{lem:duchi_subsolver} shows the gradient complexity of the gradient descent loop is upper bounded by $\tilde{\mathcal{O}} (\frac{\ell}{\sqrt{\rho \epsilon}})$ but has a failure probability
  $1-\delta'$ over the randomness in the gradient perturbation.
\end{proof}
Finally, assembling all of our results we can conclude that:
Using Lemma \ref{lem:subsolveerr} and Theorem \ref{thm:main1} we can immediately see that:
 \thmmaincarmon*
 \begin{proof}
   We can check the descent constants (with respect to the cubic submodel)
   referenced in the proof of Theorem \ref{thm:main1} for the \textbf{Case 1} and \textbf{Case 2} procedures of Algorithm \ref{algo:SCD}, $K_1=\frac{7}{20}$ and and $K_2 \leq \frac{1}{96}$ respectively, satisfy $-K_1 \leq -K_2$.
   The conclusion then follows immediately from Theorem \ref{thm:main1}, since Lemma \ref{lem:subsolveerr} shows that Algorithm \ref{algo:SCD} is a Cubic-Subsolver routine, as defined in Condition \ref{cond:oracle_append},
   with iteration complexity $\mathcal{T}(\epsilon) \leq \tilde{\mathcal{O}}(\frac{\ell}{\sqrt{\rho \epsilon}})$.
 \end{proof}

%!TEX root = SHV.tex
\section{Experimental Details}\label{apx:experimental-details}

\subsection{Synthetic Nonconvex Problem}

The W-shaped function used in our synthetic experiment is a piecewise cubic function defined in terms of a slope parameter $\epsilon$ and a length parameter $L$:
\begin{align*}
w(x) =
\begin{dcases}
\sqrt{\epsilon} \left(x + (L+1) \sqrt{\epsilon}\right)^2 - \frac{1}{3} \left(x + (L+1) \sqrt{\epsilon}\right)^3 - \frac{1}{3} (3 L + 1) \epsilon^{3/2} ,
& x \le -L \sqrt{\epsilon} ; \\
\epsilon x + \frac{\epsilon^{3/2}}{3} ,
& -L \sqrt{\epsilon} < x \le -\sqrt{\epsilon} ; \\
-\sqrt{\epsilon} x^2 - \frac{x^3}{3} ,
& -\sqrt{\epsilon} < x \le 0 ; \\
-\sqrt{\epsilon} x^2 + \frac{x^3}{3} ,
& 0 < x \le \sqrt{\epsilon} ; \\
-\epsilon x + \frac{\epsilon^{3/2}}{3} ,
& \sqrt{\epsilon} < x \le L \sqrt{\epsilon} ; \\
\sqrt{\epsilon} \left(x - (L+1) \sqrt{\epsilon}\right)^2 + \frac{1}{3} \left(x - (L+1) \sqrt{\epsilon}\right)^3 - \frac{1}{3} (3 L + 1) \epsilon^{3/2} ,
& L \sqrt{\epsilon} \le x . \\
\end{dcases}
\end{align*}
We set $\epsilon = 0.01$ and $L = 5$ in our experiment.

For stochastic cubic regularization, we fix $\rho = 1$ at the analytic Hessian Lipschitz constant for this problem, and we use 10 inner iterations for each invocation of the cubic subsolver, finding that this yields a good trade-off between progress and accuracy. Then for each method, we perform a grid search over the following hyperparameters:
\begin{itemize}
\item Batch size: $\{ 10, 30, 100, 300 \}$
\item Step size: $\{ c \cdot 10^{-i} : c \in \{1,3\}, i \in \{1,2,3,4,5\} \}$
\end{itemize}
Gradient and Hessian batch sizes are tuned separately for our method. We select the configuration for each method that converges to a global optimum the fastest, provided the objective value stays within 5\% of the optimal value after convergence. Since the global optima are located at $(\pm\frac{3}{5},0)$ and each has objective value $-\frac{2}{375}$, this is equivalent to an absolute tolerance of $\frac{1}{3750} = 0.0002666\cdots$.

\subsection{Deep Autoencoder}

Due to computational constraints, we were unable to perform a full grid search over all hyperparameters for every method. As a compromise, we fix all batch sizes and tune the remaining hyperparameters. In particular, we fix the gradient batch size for all methods at 100, a typical value in deep learning applications, and use a Hessian batch size of 10 for stochastic cubic regularization as motivated by the theoretical scaling. Then we perform a grid search over step sizes for each method using the same set of values from the synthetic experiment. We additionally select $\rho$ from $\{ 0.01, 0.1, 1 \}$ for stochastic cubic regularization, but find that the choice of this value had little effect on final performance. As in the synthetic experiments, we carry out 10 inner iterations per invocation of the cubic subsolver.

\end{document}